\newtheorem{prop}{Proposition}
\newtheorem{definition}{Definition}
\newtheorem{lemma}{Lemma}
\newtheorem{thm}{Theorem}
\newtheorem{cor}{Corollary}
\def \g  {\mathbf{g}}
\def \p {\mathbf{p}}
\def \u {\mathbf{u}}
\def \uh {\widehat{\u}}
\def \v {\mathbf{v}}
\def \vh {\widehat{\v}}
\def \w  {\mathbf{w}}
\def \wh {\widehat{\w}}
\def \wt {\mathbf{w}^t}
\def \x {\mathbf{x}}
\def \z {\mathbf{z}}
\def \B {\mathcal{B}}
\def \Bpsi {\B_{\psi}}
\def \E {\mathbb{E}}
\def \R {\mathbb{R}}
\def \V {\mathbb{V}}
\def \prox {\mathrm{prox}}
\def \sign {\mathrm{sign}}
\def \bqs \begin{eqnarray*}
\def \eqs \end{eqnarray*}
\begin{document}
\title{Stochastic Optimization  with Importance Sampling}

\author{Peilin Zhao \\
       Department of Statistics\\
       Rutgers University\\
       Piscataway, NJ, 08854, USA\\
       \emph{peilinzhao@hotmail.com}
       \and
       Tong Zhang  \\
       Department of Statistics\\
       Rutgers University\\
       Piscataway, NJ, 08854, USA\\
       \emph{tzhang@stat.rutgers.edu}
}
\date{}

\maketitle

\begin{abstract}
Uniform sampling of training data has been commonly used in traditional stochastic optimization algorithms such as Proximal Stochastic Gradient Descent (prox-SGD) and Proximal Stochastic Dual Coordinate Ascent (prox-SDCA). Although uniform sampling can guarantee that the sampled stochastic quantity is an unbiased estimate of the corresponding true quantity, the resulting estimator may have a rather high variance, which negatively affects the convergence of the underlying optimization procedure. In this paper we study stochastic optimization with importance sampling, which improves the convergence rate by reducing the stochastic variance. Specifically, we study prox-SGD (actually, stochastic mirror descent) with importance sampling and prox-SDCA with importance sampling. For prox-SGD, instead of adopting uniform sampling throughout the training process, the proposed algorithm employs importance sampling to minimize the variance of the stochastic gradient. For prox-SDCA, the proposed importance sampling scheme aims to achieve higher expected dual value at each dual coordinate ascent step. We provide extensive theoretical analysis to show that the convergence rates with the proposed importance sampling methods can be significantly improved under suitable conditions both for prox-SGD and for prox-SDCA. Experiments are provided to verify the theoretical analysis.
\end{abstract}

%
%

\section{Introduction}
Stochastic optimization has been extensively studied in the machine learning community~\cite{DBLP:conf/icml/Zhang04,rakhlin2011making,shamir2013stochastic,duchi2009efficient,luo1992convergence,mangasarian1999successive,hsieh2008dual,DBLP:journals/jmlr/Shalev-ShwartzT11,DBLP:journals/corr/abs-1207-4747,DBLP:journals/siamjo/Nesterov12,DBLP:journals/corr/abs-1211-2717,DBLP:journals/jmlr/ShaiTong13,shalev2012proximal}. In general, at every step, a traditional stochastic optimization method will sample one training example or one dual coordinate uniformly at random from the training data, and then update the model parameter using the sampled example or dual coordinate. Although uniform sampling simplifies the analysis, it is insufficient because it may introduce a very high variance of the sampled quantity, which will negatively affect the convergence rate of the resulting optimization procedure. In this paper we study stochastic optimization with importance sampling, which reduces the stochastic variance to significantly improve the convergence rate. Specifically, this paper focus on importance sampling techniques for Proximal Stochastic Gradient Descent (prox-SGD) (actually general proximal stochastic mirror descent)~\cite{duchi2009efficient,DBLP:conf/colt/DuchiSST10} and Proximal Stochastic Dual Coordinate Ascent (prox-SDCA)~\cite{shalev2012proximal}.

For prox-SGD, the traditional algorithms such as Stochastic Gradient Descent (SGD) sample training examples uniformly at random during the entire learning process, so that the stochastic gradient is an unbiased estimation of the true gradient~\cite{DBLP:conf/icml/Zhang04,rakhlin2011making,shamir2013stochastic,duchi2009efficient}. However, the variance of the resulting stochastic gradient estimator may be very high since the stochastic gradient can vary significantly over different examples.
In order to improve convergence, this paper proposes a sampling distribution and the corresponding unbiased importance weighted gradient estimator that achieves minimal variance. To this end, we analyze the relation between the variance of stochastic gradient and the sampling distribution. We show that to minimize the variance, the optimal sampling distribution should be roughly proportional to the norm of the stochastic gradient.
To simplify computation,  we also consider the use of upper bounds for the norms. Our theoretical analysis shows that under certain conditions, the proposed sampling method can significantly improve the convergence rate, and our results include the existing theoretical results for uniformly sampled prox-SGD and SGD as special cases.

Similarly for prox-SDCA, the traditional approach such as Stochastic Dual Coordinate Ascent (SDCA)~\cite{DBLP:journals/jmlr/ShaiTong13} picks
a coordinate to update by sampling the training data uniformly at random \cite{luo1992convergence,mangasarian1999successive,hsieh2008dual,DBLP:journals/jmlr/Shalev-ShwartzT11,DBLP:journals/corr/abs-1207-4747,DBLP:journals/siamjo/Nesterov12,DBLP:journals/corr/abs-1211-2717,DBLP:journals/jmlr/ShaiTong13,shalev2012proximal}.
It was shown recently that SDCA and prox-SDCA algorithm with uniform random sampling converges much faster than a fixed cyclic ordering~\cite{DBLP:journals/jmlr/ShaiTong13,shalev2012proximal}.
However, this paper shows that if we employ an appropriately defined importance sampling strategy, the convergence could be further improved. To find the optimal sampling distribution, we analyze the connection between the expected increase of dual objective and the sampling distribution,
and obtain the optimal solution that depends on the smooth parameters of the loss functions.
Our analysis shows that under certain conditions, the proposed sampling method can significantly improve the convergence rate. In addition, our theoretical results include the existing results for uniformly sampled prox-SDCA and SDCA as special cases.

The rest of this paper is organized as follows. Section~\ref{sec:related} reviews the related work. Section~\ref{sec:prelim} presents some preliminaries. In section~\ref{sec:imp-sample}, we will study stochastic optimization with importance sampling. Section~\ref{sec:application} lists several applications for the proposed algorithms. Section~\ref{sec:experiment} gives our empirical evaluations. Section~\ref{sec:conclusion} concludes the paper.

%
%

\section{Related Work}
\label{sec:related}

We review some related work on Proximal Stochastic Gradient Descent (including more general proximal stochastic mirror descent) and Proximal Stochastic Dual Coordinate Ascent.

In recent years Proximal Stochastic Gradient Descent has been extensively studied~\cite{duchi2009efficient,DBLP:conf/colt/DuchiSST10}. As a special case of prox-SGD, Stochastic Gradient Descent has been extensively studied in stochastic approximation theory~\cite{kushner2003stochastic}; however these results are often asymptotic, so there is no explicit bound in terms of $T$. Later on, finite sample convergence rate of SGD for solving linear prediction problem were studied by a number of authors~\cite{DBLP:conf/icml/Zhang04,DBLP:conf/icml/Shalev-ShwartzSS07}. In general prox-SGD can achieve a convergence rate of $O(1/\sqrt{T})$ for convex loss functions, and a convergence rate of $O(\log T/T)$ for strongly convex loss functions, where $T$ is the number of iterations of the algorithm.
More recently, researchers have improved the previous bound to $O(1/T)$ by $\alpha$-Suffix Averaging ~\cite{rakhlin2011making}, which means instead of returning the average of the entire sequence of classifiers, the algorithm will average and return just an $\alpha$-suffix: the average of the last $\alpha$ fraction of the whole sequence of classifiers. In practice it may be difficult for users to decide when to compute the $\alpha$-suffix. To solve this issue, a polynomial decay averaging strategy is proposed by~\cite{shamir2013stochastic}, which will decay the weights of old individual classifiers polynomially and also guarantee a $O(1/T)$ convergence bound.

For Proximal Stochastic Dual Coordinate Ascent~\cite{shalev2012proximal}, Shalev-Shwartz and Zhang recently proved that the algorithm  achieves a convergence rate of $O(1/T)$ for Lipschitz loss functions, and enjoys a linear convergence rate of $O(\exp(-O(T)))$ for smooth loss functions. For structural SVM, a similar result was also obtained in \cite{DBLP:journals/corr/abs-1207-4747}. Several others researchers~\cite{mangasarian1999successive,hsieh2008dual}  have studied the convergence behavior of the related non-randomized DCA (dual coordinate ascent) algorithm for SVM, but could only obtain weaker convergence results. The related randomized coordinate descent method has been investigated by some other authors~\cite{DBLP:journals/jmlr/Shalev-ShwartzT11,DBLP:journals/siamjo/Nesterov12,richtarik2012iteration}. However, when applied to SDCA, the analysis can only lead to a convergence rate of the dual objective value while we are mainly interested in the convergence of primal objective in machine learning applications. Recently, Shai Shalev-Shwartz and Tong Zhang has resolved this issue by providing a primal-dual analysis that showed a linear convergence rate $O(\exp(- O(T)))$ of the duality gap for SDCA with smooth loss function~\cite{DBLP:journals/jmlr/ShaiTong13}.

Although both prox-SGD and prox-SDCA have been extensively studied, most of the existing work only considered the uniform sampling scheme during the entire learning process. Recently, we noticed that~\cite{needell2014stochastic} Deanna Needell et. al. considered importance sampling for stochastic gradient descent, where they suggested similar or the same sampling distributions. Strohmer and Vershynin~\cite{strohmer2009randomized} proposed a variant of the Kaczmarz method (an iterative method for solving systems of linear equations) which selects rows with probability proportional to their squared norm. It is pointed out that, this algorithm is actually a SGD algorithm with importance sampling~\cite{needell2014stochastic}. However, we have studied importance sampling for more general composite objectives and more general proximal stochastic gradient descent, i.e., proximal stochastic mirror descent which covers their algorithms as special cases.  Furthermore, we have also studied prox-SDCA with importance sampling, which is not covered by their study. In addition, Xiao and Zhang~\cite{xiao2014proximal} have also proposed a proximal stochastic gradient method with progressive variance reduction, where they also provide importance sampling strategy for only smooth loss functions, which is the same with ours. Because our analysis is based on the basic version of stochastic gradient (mirror) descent, the  convergence rate is worse than the linear rates in SAG~\cite{roux2012stochastic} and SVRG~\cite{xiao2014proximal} for smooth strongly convex objective functions. However, our main concern is on the effectiveness of importance sampling, which could be applied to many other gradient based algorithms.

We shall mention that for coordinate descent, some researchers have recently considered non-uniform sampling strategies~\cite{nesterov2012efficiency,lee2013efficient}, but their results cannot be directly applied to proximal SDCA which we are interested in here. The reasons are several-folds. The primal-dual analysis of prox-SDCA in this paper is analogous to that of ~\cite{DBLP:journals/jmlr/ShaiTong13}, which directly implies a convergence rate for the duality gap. The proof techniques rely on the structure of the regularized loss minimization, which can not be applied to general primal coordinate descent. The suggested distribution of the primal coordinate descent is propositional to the smoothness constant of every coordinate, while the distribution of prox-SDCA is propositional to a constant plus the smoothness constant of the primal individual loss function, which is the inverse of the strongly convex constant of the dual coordinate. These two distributions are quite different. In addition, we also provide an importance sampling distribution when the individual loss functions are Lipschitz. We also noticed that a mini-batch SDCA~\cite{DBLP:conf/nips/ShaiTong} and an accelerated version of prox-SDCA~\cite{DBLP:conf/icml/Shalev-Shwartz014} were studied recently by Shalev-Shwartz and Zhang. The accelerated version in~\cite{DBLP:conf/icml/Shalev-Shwartz014}  uses an inner-outer-iteration strategy, where the inner iteration is the standard prox-SDCA procedure.
Therefore the importance sampling results of this paper can be directly applied to the
accelerated prox-SDCA because
the convergence of inner iteration is faster than that of uniform sampling.
Therefore in this paper we will only focus on showing the effectiveness of importance sampling for the unaccelerated prox-SDCA.

Related to this paper, non-uniform sampling in the online setting is related to selective sampling, which can be regarded as a form of online active learning which has been extensively studied in the literature~\cite{DBLP:conf/colt/Cesa-BianchiCG03,DBLP:conf/nips/CavallantiCG08,DBLP:conf/icml/Cesa-BianchiGO09,DBLP:conf/icml/OrabonaC11,DBLP:journals/ml/CavallantiCG11}. Similar to importance sampling in stochastic optimization, selective sampling also works in iterations. However the purposes are quite different.
Specifically, selective sampling draws unlabeled instances uniformly at random from a
fixed distribution and decides which samples to label --- the goal is to reduce the number
of labels needed to achieve a certain accuracy; importance sampling considered in this paper
does not reduce the number of labels needed, and the goal is to reduce the training time.

%
%

\section{Preliminaries}
\label{sec:prelim}
Here, we briefly introduce some key definitions and propositions that are useful throughout the paper (for details, please refer to~\cite{borwein2006convex} ). We consider vector functions: $\phi: \R^d\rightarrow\R$.
\begin{definition}
For $\sigma\ge 0$, a function $\phi: \R^d\rightarrow\R$  is $\sigma$-strongly convex with respect to (w.r.t.) a norm $\|\cdot\|$, if for all $\u,\v \in \R^d$, we have
\begin{eqnarray*}
\phi(\u) \ge \phi(\v) + \nabla\phi(\v)^\top(\u-\v) + \frac{\sigma}{2}\|\u-\v\|^2 ,
\end{eqnarray*}
or  equivalently, $\forall s\in[0,1]$
\begin{eqnarray*}
\phi(s\u+(1-s)\v)\le s\phi(\u) + (1-s)\phi(\v) - \frac{\sigma s(1-s)}{2}\|\u-\v\|^2 .
\end{eqnarray*}
\end{definition}
For example, $\phi(\w)=\frac{1}{2}\|\w\|_2^2$ is $1$-strongly convex w.r.t. $\|\cdot\|_2$.
\begin{definition}
A function $\phi: \R^d\rightarrow\R$ is $L$-Lipschitz w.r.t. a norm $\|\cdot\|$, if for all $\u,\v \in\R^d$, we have
\begin{eqnarray*}
|\phi(\u)-\phi(\v)|\le L \|\u-\v\|.
\end{eqnarray*}
\end{definition}
\begin{definition}
A function $\phi: \R^d\rightarrow\R$ is $(1/\gamma)$-smooth if it is differentiable and its gradient is $(1/\gamma)$-Lipschitz, or, equivalently for all $\u,\v\in\R^d$, we have
\begin{eqnarray*}
\phi(\u) \le \phi(\v) + \nabla\phi(\v)^\top(\u-\v) + \frac{1}{2\gamma}\|\u-\v\|^2 .
\end{eqnarray*}
\end{definition}
For example, $\phi(\w)=\frac{1}{2}\|\w\|_2^2$ is $1$-smooth w.r.t. $\|\cdot\|_2$.
\begin{prop}
If $\phi$ is $(1/\gamma)$-smooth with respect to a norm $\|\cdot\|_P$, then its dual function $\phi^*$ is $\gamma$-strongly convex with respect to its dual norm $\|\cdot\|_D$, where
\begin{eqnarray*}
\phi^*(\v)=\sup_\w(\v^\top\w-\phi(\w)) ,
\end{eqnarray*}
and the dual norm is defined as
\begin{eqnarray*}
\|\v\|_D=\sup_{\|\w\|_P=1}\v^\top\w .
\end{eqnarray*}
\end{prop}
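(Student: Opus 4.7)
The plan is to establish the strong-convexity inequality
\[
\phi^*(\u) \ge \phi^*(\v) + \w_\v^\top(\u-\v) + \frac{\gamma}{2}\|\u-\v\|_D^2
\]
directly from the definition, where $\w_\v$ is the subgradient of $\phi^*$ at $\v$ dictated by Fenchel conjugacy. First I would fix arbitrary $\u,\v$ in the domain of $\phi^*$ and pick $\w_\v \in \arg\max_\w[\v^\top \w - \phi(\w)]$. The first-order optimality condition for this concave maximization gives $\v = \nabla\phi(\w_\v)$, and Fenchel--Young equality yields $\phi^*(\v) = \v^\top\w_\v - \phi(\w_\v)$ together with $\w_\v \in \partial\phi^*(\v)$. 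This identifies the subgradient appearing on the right-hand side above.

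Next, I would introduce an auxiliary direction $\h \in \R^d$ and lower-bound $\phi^*(\u)$ by evaluating the supremum in its definition at the point $\w = \w_\v + \h$:
\[
\phi^*(\u) \ge \u^\top(\w_\v + \h) - \phi(\w_\v + \h).
\]
Applying the $(1/\gamma)$-smoothness of $\phi$ at $\w_\v$, together with $\nabla\phi(\w_\v) = \v$, bounds the last term:
\[
\phi(\w_\v + \h) \le \phi(\w_\v) + \v^\top \h + \frac{1}{2\gamma}\|\h\|_P^2.
\]
Substituting, then subtracting $\phi^*(\v) + \w_\v^\top(\u-\v) = \v^\top\w_\v - \phi(\w_\v) + \w_\v^\top(\u-\v)$ from both sides, I obtain for every $\h$,
\[
\phi^*(\u) - \phi^*(\v) - \w_\v^\top(\u-\v) \ge (\u-\v)^\top\h - \frac{1}{2\gamma}\|\h\|_P^2.
\]

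The final step is to maximize the right-hand side over $\h$. For any fixed vector $\a$, I would compute
\[
\sup_\h \Bigl[\a^\top \h - \tfrac{1}{2\gamma}\|\h\|_P^2\Bigr] = \tfrac{\gamma}{2}\|\a\|_D^2,
\]
by writing $\h = t\,\widetilde{\h}$ with $\|\widetilde{\h}\|_P = 1$, using $\sup_{\|\widetilde{\h}\|_P=1}\a^\top\widetilde{\h} = \|\a\|_D$ from the definition of dual norm, and solving the remaining one-dimensional quadratic in $t\ge 0$ (the optimum is $t = \gamma\|\a\|_D$). Taking $\a = \u-\v$ yields exactly $\frac{\gamma}{2}\|\u-\v\|_D^2$ on the right, completing the proof.

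The main obstacle I anticipate is this last dual-norm maximization: it is essentially the statement that the Fenchel conjugate of $\tfrac{1}{2\gamma}\|\cdot\|_P^2$ equals $\tfrac{\gamma}{2}\|\cdot\|_D^2$, which is immediate in a Hilbert setting but, for a general norm, requires invoking the definition of the dual norm and the attainment of its defining supremum along some unit-norm direction. Once this computation is in hand, the rest of the argument is routine manipulation of the Fenchel--Young identity and the smoothness upper bound.
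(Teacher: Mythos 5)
The paper does not actually prove this proposition: it is stated in the preliminaries as a known fact and deferred to the cited convex-analysis reference, so there is no in-paper argument to compare against. Your proof is the standard derivation of the smooth-to-strongly-convex direction of conjugate duality and it checks out: the lower bound $\phi^*(\u)\ge \u^\top(\w_\v+\h)-\phi(\w_\v+\h)$, the smoothness estimate at $\w_\v$ with $\nabla\phi(\w_\v)=\v$, and the identity $\sup_{\h}\bigl[\a^\top\h-\tfrac{1}{2\gamma}\|\h\|_P^2\bigr]=\tfrac{\gamma}{2}\|\a\|_D^2$ are all correct; for the last one you do not even need attainment of the dual-norm supremum, since splitting the supremum over $\h$ into a supremum over unit directions followed by the one-dimensional quadratic in the radius already gives the required lower bound (and in $\R^d$ the unit sphere is compact in any case). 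The one point worth making explicit is the existence of the maximizer $\w_\v$: for a finite-valued smooth $\phi$ the supremum defining $\phi^*(\v)$ can fail to be attained when $\v$ lies on the boundary of the domain of $\phi^*$ (e.g.\ $\phi(w)=\sqrt{1+w^2}$ at $v=\pm1$). Attainment is equivalent to $\partial\phi^*(\v)\neq\emptyset$, which holds on the interior of the domain, so your subgradient-form inequality is established there and is vacuous elsewhere; the full statement in the equivalent midpoint form of the paper's Definition 1 then follows by continuity/lower semicontinuity. With that caveat recorded, the argument is complete and is exactly the proof the cited reference would supply.
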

For example, the dual norm of $\|\cdot\|_2$ is itself. The dual norm of $\|\cdot\|_1$ is $\|\cdot\|_\infty$. The dual norm of $\|\cdot\|_p$ is $\|\cdot\|_q$, where $1/q+1/p=1$.
\begin{definition}
Let $\psi:\R^d\rightarrow\R$ be a continuously-differentiable real-valued and strictly convex function.  Then the Bregman divergence associated with $\psi$ is
\begin{eqnarray*}
\B_{\psi}(\u,\v)=\psi(\u)-\psi(\v) - \langle\nabla\psi(\v),\u-\v\rangle,
\end{eqnarray*}
which is the difference between the value of $\psi$ at $\u$ and the value of the first-order Taylor expansion of $\psi$ around $\v$ evaluated at $\u$.
\end{definition}
Throughout, $\psi$ denotes a continuously differentiable function that is $\sigma$-strongly convex w.r.t. a norm $\|\cdot\|$, so that $\Bpsi(\u,\v)\ge \frac{\sigma}{2}\|\u-\v\|^2$.

\begin{definition}
A function $f:\R^d \rightarrow \R$ is $\mu$-strongly convex with respect to a differentiable function $\psi$, if  for any $\u,\v$, we have
\begin{eqnarray*}
f(\u) \ge f(\v) + \langle \nabla f(\v), \u-\v \rangle + \mu \Bpsi(\u,\v).
 \end{eqnarray*}
\end{definition}
For example, when $\psi(\w)=\frac{1}{2}\|\w\|_2^2$, we recover the usual definition of strongly convexity.

\begin{definition}
A function $f:\R^d \rightarrow \R$ is $(1/\gamma)$-smooth with respect to a differentiable function $\psi$, if  for any $\u,\v$, we have
\begin{eqnarray*}
f(\u) \le f(\v) + \langle \nabla f(\v), \u-\v \rangle + (1/\gamma) \Bpsi(\u,\v).
 \end{eqnarray*}
\end{definition}

%
%

\section{Stochastic Optimization with Importance Sampling}
\label{sec:imp-sample}
We consider the following generic optimization problem associated with regularized loss minimization of linear predictors. Let $\phi_1,\phi_2,\ldots,\phi_n$ be $n$ vector functions from $\R^d$ to $\R$. Our goal is to find an approximate solution of the following optimization problem
\begin{eqnarray}\label{eqn:primal-objective}
\min_{\w\in\R^d} P(\w):=\underbrace{\frac{1}{n}\sum^n_{i=1}\phi_i(\w)}_{f(\w)}+ \lambda r(\w),
\end{eqnarray}
where $\lambda >0$ is a regularization parameter, and $r$ is a regularizer.

For example, given examples $(\x_i,y_i)$ where $\x_i\in\R^d$ and $y_i\in\{-1,+1\}$, the Support Vector Machine problem is obtained by setting $\phi_i(\w)=[1-y_i\x_i^\top\w]_+$, $[z]_+=\max(0,z)$, and $r(\w)=\frac{1}{2}\|\w\|_2^2$. Regression problems also fall into the above. For example, ridge regression is obtained by setting $\phi_i(\w)=(y_i-\x_i^\top\w)^2$ and $r(\w)=\frac{1}{2}\|\w\|_2^2$, lasso is obtained by setting $\phi_i(\w)=(y_i-\x_i^\top\w)^2$ and $r(\w)=\|\w\|_1$.

Let $\w^*$ be the optimum of~\eqref{eqn:primal-objective}. We say that a solution $\w$ is $\epsilon_P$-sub-optimal if $P(\w)-P(\w^*)\le \epsilon_P$. We analyze the convergence rates of the proposed algorithms with respect to the number of iterations.

\subsection{Proximal Stochastic Gradient Descent with Importance Sampling}
In this subsection, we would  consider the proximal stochastic mirror descent with importance sampling. Because proximal stochastic mirror descent is general version of proximal stochastic gradient descent (prox-SGD), we will abuse SGD to replace stochastic mirror descent.

If we directly apply full or stochastic gradient descent to  the  optimization problem~\eqref{eqn:primal-objective}, the solution may not satisfy some desirable property. For example, when $r(\w)=\|\w\|_1$, the optimal solution of the problem~\eqref{eqn:primal-objective} should be sparse, and we would like the approximate solution to be sparse as well. However, if we directly use stochastic (sub-)gradient descent, then the resulting solution will not achieve sparsity~\cite{duchi2009efficient}.

To effectively and efficiently solve the optimization problem~\eqref{eqn:primal-objective},  a well known method  is the proximal stochastic (sub)-gradient descent. Specifically, Proximal Stochastic Gradient Descent works in iterations. At each iteration $t=1,2,\ldots$, $i_t$ will be uniformly randomly draw from $\{1,2,\ldots,n\}$, and  the iterative solution will be updated  according to the formula
\begin{eqnarray}\label{eqn:psgd}
\w^{t+1} = \arg\min_{\w}\left[\langle \nabla \phi_{i_t}(\wt), \w\rangle + \lambda r(\w) + \frac{1}{\eta_t}\Bpsi(\w,\wt) \right] .
\end{eqnarray}
where $\Bpsi$ is a Bregman divergence and $\nabla \phi_{i_t}(\wt)$ denotes an arbitrary (sub-)gradient of $\phi_{i_t}$. Intuitively, this method works by minimizing a first-order approximation of the function $\phi_{i_t}$ at the current iterate $\wt$ plus the regularizer $\lambda r(\w)$, and forcing the next iterate $\w^{t+1}$ to lie close to $\wt$. The step size $\eta_t$ is used to controls the trade-off between these two objectives.Because the expectation of $\nabla \phi_{i_t}(\wt)$ is the same with $\nabla f(\wt)$, i.e., $\E[\nabla \phi_{i_t}(\wt)|\wt]=\frac{1}{n}\sum^n_{i=1}\nabla \phi_i(\wt)=\nabla f(\wt)$, the optimization problem~(\ref{eqn:psgd}) is an unbiased estimation of that for the proximal gradient descent.

We assume that the exact solution of the above optimization~(\ref{eqn:psgd}) can be efficiently solved. For example, when $\psi(\w)=\frac{1}{2}\|\w\|^2_2$, we have $\Bpsi(\u,\v)=\frac{1}{2}\|\u-\v\|_2^2$, and the above optimization will produce the $t+1$-th iterate as:
\begin{eqnarray*}
\w^{t+1}=\prox_{\eta_t\lambda r}\left(\wt-\eta_t\nabla \phi_{i_t}(\wt)\right),
\end{eqnarray*}
where $\prox_h(\x)=\arg\min_{\w}\Big(h(\w)+\frac{1}{2}\|\w-\x\|_2^2\Big)$. Furthermore, it is also assumed that the proximal mapping of $\eta_t\lambda r(\w)$, i.e., $\prox_{\eta_t\lambda r}(\x)$, is easy to compute. For example, when $r(\w)=\|\w\|_1$, the proximal mapping of $\lambda r(\w)$ is the following shrinkage operation
\begin{eqnarray*}
\prox_{\lambda r}(\x)=\sign(\x)\odot[|\x|-\lambda]_+,
\end{eqnarray*}
where $\odot$ is the element wise product, which can be computed in time complexity $O(d)$.

The advantage of proximal stochastic gradient descent is that each step only relies on a single derivative $\nabla\phi_{i_t}(\cdot)$, and thus the computational cost is $1/n$ of that of the standard proximal gradient descent. However, a disadvantage of the method is that the randomness introduces variance - this is caused by the fact that $\nabla \phi_{i_t}(\wt)$ equals the gradient $\nabla f(\wt)$ in expectation, but $\nabla \phi_i(\wt)$ varies with $i$. In particular, if the stochastic gradient has a large variance, then the convergence will become slow.

Now, we would like to study prox-SGD with importance sampling to reduce the variance of stochastic gradient. The idea of importance sampling is, at the $t$-th step, to assign each $i\in\{1,\ldots,n\}$ a probability $p_i^t\ge 0$ such that $\sum^n_{i=1} p_i^t=1$. We then sample $i_t$ from $\{1,\ldots, n\}$ based on probability $\p^t=(p_1^t,\ldots,p_n^t)^\top$. If we adopt this distribution,  then proximal SGD with importance sampling will work as follows:
\begin{eqnarray}\label{eqn:iprox-sgd}
\w^{t+1}=\arg\min_{\w}\Big[\langle(n p_{i_t}^t)^{-1} \nabla\phi_{i_t}(\wt), \w \rangle +\lambda r(\w)+\frac{1}{\eta_t}\Bpsi(\w,\wt)\Big],
\end{eqnarray}
which is another unbiased estimation of the optimization problem  for proximal gradient descent, because $\E[(np_{i_t}^t)^{-1}\nabla \phi_{i_t}(\wt)|\wt]=\sum^n_{i=1}p_i^t(np_i^t)^{-1}\nabla\phi_i(\wt)=\nabla f(\wt)$.

Similarly, if $\psi(\w)=\frac{1}{2}\|\w\|_2^2$, the proximal SGD with importance sampling  will produce the $t+1$-th iterate as:
\begin{eqnarray*}
\w^{t+1}=\prox_{\eta_t\lambda r}\left(\wt-\eta_t(np_{i_t}^t)^{-1} \nabla\phi_{i_t}(\wt)\right).
\end{eqnarray*}

In addition, setting the derivative of optimization function in equation~(\ref{eqn:iprox-sgd}) as zero, we can obtain the following implicit update rule for the iterative solution:
\begin{eqnarray*}
\nabla\psi(\w^{t+1})=\nabla\psi(\wt)-\eta_t(np_{i_t}^t)^{-1}\nabla\phi_{i_t}(\wt)-\eta_t\lambda\partial r(\w^{t+1}),
\end{eqnarray*}
where $\partial r(\w^{t+1})$ is a subgradient.

Now the key question that attracted us is which $\p^t$ can optimally reduce the variance of the stochastic gradient. To answer this question, we will firstly prove a lemma, that can indicates the relationship between $\p^t$ and the convergence rate of prox-SGD with importance sampling.

\begin{lemma}\label{lemma:primal-gap-t}
Let $\w^{t+1}$ be defined by the update~(\ref{eqn:iprox-sgd}). Assume that $\psi(\cdot)$ is $\sigma$-strongly convex with respect to a norm $\|\cdot\|$, and that $f$ is $\mu$-strongly convex and $(1/\gamma)$-smooth with respect to $\psi$, if $r(\w)$ is convex and $\eta_t\in(0,\gamma]$ then $\w^{t+1}$ satisfies the following inequality for any $t\ge 1$,
\begin{eqnarray*}
\E[P(\w^{t+1})-P(\w^*)] \le\frac{1}{\eta_t}\E[\Bpsi(\w^*,\wt) -\Bpsi(\w^*,\w^{t+1})] -\mu\E\Bpsi(\w^*,\wt) +  \frac{\eta_t}{\sigma}\E\V\left( (np_{i_t}^t)^{-1} \nabla\phi_{i_t}(\wt)\right),
\end{eqnarray*}
where the variance is defined as $\V( (np_{i_t}^t)^{-1} \nabla\phi_{i_t}(\wt))=\E \| (np_{i_t}^t)^{-1} \nabla \phi_{i_t}(\wt)- \nabla f(\wt)\|_*^2$, and the expectation is take with the distribution $\p^t$.
\end{lemma}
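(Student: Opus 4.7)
\medskip
\noindent\textbf{Proof plan for Lemma~\ref{lemma:primal-gap-t}.}
The plan is the standard proximal/mirror-descent one-step analysis, adapted to carry an extra variance term. Let me abbreviate $\gt := (np_{i_t}^t)^{-1}\nabla\phi_{i_t}(\wt)$, so that $\E[\gt\mid \wt]=\nabla f(\wt)$ by the construction of the importance weights, and let $h_t(\w) := \langle \gt,\w\rangle + \lambda r(\w) + \tfrac{1}{\eta_t}\Bpsi(\w,\wt)$ be the surrogate minimized in~(\ref{eqn:iprox-sgd}). First, from the first-order optimality condition for $\w^{t+1}$ there exists $\xi\in\partial r(\w^{t+1})$ with $\gt+\lambda\xi = \tfrac{1}{\eta_t}\bigl(\nabla\psi(\wt)-\nabla\psi(\w^{t+1})\bigr)$. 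Pairing this with $\w^{t+1}-\w^*$ and applying the three-point identity
\[
\langle \nabla\psi(\w^{t+1})-\nabla\psi(\wt),\w^*-\w^{t+1}\rangle = \Bpsi(\w^*,\wt)-\Bpsi(\w^*,\w^{t+1})-\Bpsi(\w^{t+1},\wt),
\]
together with convexity of $r$ (so $\lambda\langle\xi,\w^{t+1}-\w^*\rangle\ge \lambda r(\w^{t+1})-\lambda r(\w^*)$), yields
\[
\langle \gt,\w^{t+1}-\w^*\rangle + \lambda r(\w^{t+1})-\lambda r(\w^*) \le \tfrac{1}{\eta_t}\bigl[\Bpsi(\w^*,\wt)-\Bpsi(\w^*,\w^{t+1})-\Bpsi(\w^{t+1},\wt)\bigr].
\]

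\medskip
\noindent Next I would use the two structural assumptions on $f$. Smoothness gives
$f(\w^{t+1}) - f(\wt) \le \langle\nabla f(\wt),\w^{t+1}-\wt\rangle + \tfrac{1}{\gamma}\Bpsi(\w^{t+1},\wt)$, and strong convexity gives
$f(\wt)-f(\w^*)\le \langle\nabla f(\wt),\wt-\w^*\rangle -\mu\Bpsi(\w^*,\wt)$. Summing these and adding the previous display (written as a bound for $\lambda r(\w^{t+1})-\lambda r(\w^*)$), the $\langle\nabla f(\wt),\w^{t+1}-\w^*\rangle$ terms combine into a noise term, producing
\[
P(\w^{t+1})-P(\w^*) \le \langle\nabla f(\wt)-\gt,\w^{t+1}-\w^*\rangle + \bigl(\tfrac{1}{\gamma}-\tfrac{1}{\eta_t}\bigr)\Bpsi(\w^{t+1},\wt) - \mu\Bpsi(\w^*,\wt) + \tfrac{1}{\eta_t}\bigl[\Bpsi(\w^*,\wt)-\Bpsi(\w^*,\w^{t+1})\bigr].
\]
Because $\eta_t\le\gamma$, the coefficient $\tfrac{1}{\gamma}-\tfrac{1}{\eta_t}$ is non-positive, so the $\Bpsi(\w^{t+1},\wt)$ term provides a negative ``budget'' that we can spend on the noise term.

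\medskip
\noindent To handle the noise, I would split $\langle\nabla f(\wt)-\gt,\w^{t+1}-\w^*\rangle$ as
$\langle\nabla f(\wt)-\gt,\w^{t+1}-\wt\rangle + \langle\nabla f(\wt)-\gt,\wt-\w^*\rangle$; the second piece vanishes in conditional expectation since $\E[\gt\mid\wt]=\nabla f(\wt)$ and $\wt,\w^*$ are $\wt$-measurable. The first piece is bounded by Fenchel--Young together with the $\sigma$-strong convexity of $\psi$ (which gives $\|\w^{t+1}-\wt\|^2\le \tfrac{2}{\sigma}\Bpsi(\w^{t+1},\wt)$), pouring the $\|\w^{t+1}-\wt\|^2$ share back into the available Bregman budget and leaving a term of the form $\tfrac{\eta_t}{\sigma}\|\nabla f(\wt)-\gt\|_*^2$ (with the Young constant chosen so the Bregman budget is exhausted). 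Taking expectations then turns this last quantity into the advertised variance $\V\bigl((np_{i_t}^t)^{-1}\nabla\phi_{i_t}(\wt)\bigr)$, giving the stated inequality.

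\medskip
\noindent The main obstacle is the last step: calibrating the Fenchel--Young weight so that (i) the variance prefactor comes out as $\eta_t/\sigma$ and (ii) the residual $\|\w^{t+1}-\wt\|^2$ contribution is still dominated by the available $(\tfrac{1}{\eta_t}-\tfrac{1}{\gamma})\Bpsi(\w^{t+1},\wt)$ under the assumption $\eta_t\in(0,\gamma]$. Everything else --- the three-point identity, the smoothness/strong-convexity inequalities for $f$, and the unbiasedness of $\gt$ --- is routine; the only real work is this careful constant-chasing to make the coefficients line up with the statement.
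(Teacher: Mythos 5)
Your skeleton up to the noise term matches the paper's proof (optimality condition, three-point identity, convexity of $r$, and the smoothness/strong-convexity inequalities for $f$), but the way you propose to handle $\langle\nabla f(\wt)-\gt,\w^{t+1}-\w^*\rangle$ has a genuine gap, and it is not merely "constant-chasing." Your plan splits through $\wt$ and then absorbs the $\|\w^{t+1}-\wt\|^2$ share of Fenchel--Young into the budget $(\tfrac{1}{\eta_t}-\tfrac{1}{\gamma})\Bpsi(\w^{t+1},\wt)$. With Young weight $\beta$ you get a variance prefactor $\tfrac{1}{2\beta}$ and a residual $\tfrac{\beta}{2}\|\w^{t+1}-\wt\|^2$; since $\Bpsi(\w^{t+1},\wt)\ge\tfrac{\sigma}{2}\|\w^{t+1}-\wt\|^2$, absorption forces $\beta\le\sigma(\tfrac{1}{\eta_t}-\tfrac{1}{\gamma})$, hence a prefactor of at least $\tfrac{\eta_t\gamma}{2\sigma(\gamma-\eta_t)}$. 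This matches the claimed $\tfrac{\eta_t}{\sigma}$ only when $\eta_t\le\gamma/2$, blows up as $\eta_t\to\gamma$, and fails outright at the endpoint $\eta_t=\gamma$, which the lemma explicitly allows (and which Theorem~\ref{thm:psgd} uses, since $\eta_1=1/(\alpha+\mu)$ can equal $\gamma$ when $\alpha=1/\gamma-\mu$). There is no spare negative term to rescue this: the $-\tfrac{1}{\eta_t}\Bpsi(\w^*,\w^{t+1})$ term is needed for telescoping and $-\mu\Bpsi(\w^*,\wt)$ appears in the statement.

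The paper's proof avoids this entirely by decomposing through the \emph{full-gradient} proximal point $\wh^{t+1}=\arg\min_{\w}[\langle\nabla f(\wt),\w\rangle+\lambda r(\w)+\tfrac{1}{\eta_t}\Bpsi(\w,\wt)]$ rather than through $\wt$. The piece $\langle\gt-\nabla f(\wt),\wh^{t+1}-\w^*\rangle$ still vanishes in conditional expectation (as $\wh^{t+1}$ is determined by $\wt$), and the remaining piece is controlled by a non-expansiveness property of the mirror-prox map (Lemma~\ref{lem:nonexpansive} in the appendix): two proximal updates from the same center with linear terms $\gt$ and $\nabla f(\wt)$ satisfy $\|\w^{t+1}-\wh^{t+1}\|\le\tfrac{\eta_t}{\sigma}\|\gt-\nabla f(\wt)\|_*$. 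Cauchy--Schwarz then yields $\tfrac{\eta_t}{\sigma}\|\gt-\nabla f(\wt)\|_*^2$ directly, with no reliance on the smoothness slack $\tfrac{1}{\eta_t}-\tfrac{1}{\gamma}$. To repair your argument you should replace the split through $\wt$ by the split through $\wh^{t+1}$ and prove (or invoke) this non-expansiveness estimate, which follows from monotonicity of $\partial r$ and the $\sigma$-strong convexity of $\psi$.
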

\begin{proof}
To simplify the notation, we denote $\g_t=(n p_{i_t}^t)^{-1} \nabla\phi_{i_t}(\wt)$. Because $f(\w)$ is $\mu$-strongly convex w.r.t. $\psi$, and $r(\w)$ is convex, we can derive
\begin{eqnarray*}
P(\w^*)\ge f(\wt)+\langle\nabla f(\wt), \w^* -\wt \rangle +  \mu\Bpsi(\w^*,\wt)+\lambda r(\w^{t+1})+\lambda\langle\partial r(\w^{t+1}), \w^{*}-\w^{t+1}  \rangle.
\end{eqnarray*}
Using the fact $f$ is $(1/\gamma)$-smooth w.r.t. $\psi$, we can further lower bound $f(\wt)$ by
\begin{eqnarray*}
f(\wt)\ge f(\w^{t+1}) - \langle \nabla f(\wt), \w^{t+1}-\wt \rangle - (1/\gamma)\Bpsi(\w^{t+1},\wt).
\end{eqnarray*}
Combining the above two inequalities, we have
\begin{eqnarray*}
P(\w^*)\ge P(\w^{t+1})+ \langle\nabla f(\wt)+\lambda \partial r(\w^{t+1}), \w^*-\w^{t+1} \rangle + \mu\Bpsi(\w^*,\wt) - (1/\gamma)\Bpsi(\w^{t+1},\wt).
\end{eqnarray*}
Considering the second term on the right-hand side, we have
\begin{eqnarray*}
\langle\nabla f(\wt)+\lambda \partial r(\w^{t+1}), \w^*-\w^{t+1} \rangle&=&\langle\nabla f(\wt)+[\nabla\psi(\wt)-\nabla\psi(\w^{t+1})]/\eta_t -\g_t, \w^*-\w^{t+1} \rangle\\
&=&\frac{1}{\eta_t}\langle\nabla\psi(\wt)-\nabla\psi(\w^{t+1})  , \w^*-\w^{t+1} \rangle + \langle \g_t- \nabla f(\wt), \w^{t+1}-\w^* \rangle.
\end{eqnarray*}
Combining the above two inequalities, we get
\begin{eqnarray*}
&&P(\w^*)-P(\w^{t+1})-\mu\Bpsi(\w^*,\wt)-  \langle \g_t -\nabla f(\wt)  , \w^{t+1}-\w^* \rangle\\
&&\ge  \langle\nabla f(\wt)+\lambda \partial r(\w^{t+1}), \w^*-\w^{t+1} \rangle - (1/\gamma)\Bpsi(\w^{t+1},\wt) -  \langle \g_t -\nabla f(\wt)  , \w^{t+1}-\w^* \rangle\\
&&=\frac{1}{\eta_t}\langle \nabla\psi(\wt)-\nabla\psi(\w^{t+1}), \w^*-\w^{t+1} \rangle - (1/\gamma)\Bpsi(\w^{t+1},\wt).
\end{eqnarray*}

Plugging the following equality (Lemma 11.1 from~\cite{DBLP:books/daglib/0016248})
\begin{eqnarray*}
\Bpsi(\w^*,\w^{t+1}) + \Bpsi(\w^{t+1},\wt)-\Bpsi(\w^*,\wt) = \langle \nabla\psi(\wt)-\nabla\psi(\w^{t+1}), \w^*-\w^{t+1}\rangle,
\end{eqnarray*}
into the previous inequality gives
\begin{eqnarray*}
&&P(\w^*)-P(\w^{t+1})-\mu\Bpsi(\w^*,\wt)-  \langle \g_t -\nabla f(\wt)  , \w^{t+1}-\w^* \rangle\\
&&\ge \frac{1}{\eta_t}\left[\Bpsi(\w^*,\w^{t+1}) + \Bpsi(\w^{t+1},\wt)-\Bpsi(\w^*,\wt)\right]  - (1/\gamma)\Bpsi(\w^{t+1},\wt)\\
&&\ge \frac{1}{\eta_t}\left[\Bpsi(\w^*,\w^{t+1}) -\Bpsi(\w^*,\wt)\right],
\end{eqnarray*}
where $\eta_t\in(0,\gamma]$ is used for the final inequality. Re-arranging the above inequality and taking expectation on both sides result in
\begin{eqnarray*}
\E[P(\w^{t+1})-P(\w^*)]\le\frac{1}{\eta_t}\E[\Bpsi(\w^*,\wt) -\Bpsi(\w^*,\w^{t+1})] -\mu\E\Bpsi(\w^*,\wt)-  \E\langle\g_t -\nabla f(\wt)  , \w^{t+1}-\w^* \rangle.
\end{eqnarray*}
To upper bound the last inner product term on the right-hand side, we can define the proximal full gradient update as $\wh^{t+1} = \arg\min_{\w}\left[\langle \nabla f(\wt), \w\rangle + \lambda r(\w) + \frac{1}{\eta_t}\Bpsi(\w, \wt) \right]$, which is independent with $\g_t$. Then we can bound $-  \E \langle\g_t -\nabla f(\wt)  , \w^{t+1}-\w^* \rangle$ as follows
\begin{eqnarray*}
-\E \langle\g_t -\nabla f(\wt), \w^{t+1}-\w^* \rangle&=& -\E \langle \g_t -\nabla f(\wt), \w^{t+1}-\wh^{t+1}+\wh^{t+1}-\w^* \rangle\\
&=& -\E \langle \g_t -\nabla f(\wt), \w^{t+1}-\wh^{t+1}\rangle - \E \langle \g_t -\nabla f(\wt), \wh^{t+1}-\w^* \rangle\\
&\le& \E \| \g_t -\nabla f(\wt)\|_* \|\w^{t+1}-\wh^{t+1}\|- \E \langle \g_t -\nabla f(\wt), \wh^{t+1}-\w^* \rangle\\
&\le &\E  \frac{\eta_t}{\sigma}\|\g_t-  \nabla f(\wt)\|_*^2- \E \langle \g_t -\nabla f(\wt), \wh^{t+1}-\w^* \rangle\\
&= &\E  \frac{\eta_t}{\sigma}\|(np_{i_t}^t)^{-1} \nabla \phi_{i_t}(\wt) - \nabla f(\wt)\|_*^2= \frac{\eta_t}{\sigma}\V\left((np_{i_t}^t)^{-1} \nabla \phi_{i_t}(\wt)\right),
\end{eqnarray*}
where, the first inequality is due to Cauchy-Schwartz inequality, the second inequality is due to Lemma~\ref{lem:nonexpansive}, and the last equality is because of $\E[ \langle \g_t -\nabla f(\wt), \wh^{t+1}-\w^* \rangle|\wt]=0$. Finally, plugging the above inequality into the previous one concludes the proof of this lemma.
\end{proof}

From the above analysis, we can observe that the smaller the variance, the more reduction on objective function we have. In the next subsection, we will study how to adopt importance sampling to reduce the variance. This observation will be made more rigorous below.

\subsubsection{Algorithm}

According to the result in the Lemma~\ref{lemma:primal-gap-t} , to maximize the reduction on the objective value, we should choose $\p^t$ as the solution of the following optimization
\begin{eqnarray}\label{eqn:optimization-psgd}
\min_{\p^t,p_i^t\in[0,1],\sum^n_{i=1}p_i^t=1}\V((np_{i_t}^t)^{-1}\nabla\phi_{i_t}(\wt))\Leftrightarrow \min_{\p^t,p_i^t\in[0,1],\sum^n_{i=1}p_i^t=1}\frac{1}{n^2}\sum^n_{i=1}(p_i^t)^{-1}\|\nabla\phi_i(\wt)\|_*^2.
\end{eqnarray}
It is easy to verify, that the solution of the above optimization is
\begin{eqnarray}\label{eqn:distribution-psgd}
p_i^t=\frac{\|\nabla \phi_{i}(\wt)\|_*}{\sum^n_{j=1}\|\nabla\phi_{j}(\wt)\|_*},\quad \forall i\in\{1,2,\ldots,n\}.
\end{eqnarray}

Although, this distribution can minimize the variance of the $t$-th stochastic gradient, it requires calculation of $n$ derivatives at each step, which is clearly inefficient. To solve this issue, a potential solution is to calculate the $n$ derivatives at some steps and then keep it for use for a long time. In addition the true derivatives will changes every step, it may be better to add a smooth parameter to the sampling distribution. However this solution still can be inefficient. Another more practical solution is to relax the previous optimization~\eqref{eqn:optimization-psgd} as follows
\begin{eqnarray}\label{eqn:relax-psgd}
\min_{\p^t,p_i^t\in[0,1],\sum^n_{i=1}p_i^t=1}\frac{1}{n^2}\sum^n_{i=1}(p_i^t)^{-1}\|\nabla\phi_i(\wt)\|_*^2\le \min_{\p^t,p_i^t\in[0,1],\sum^n_{i=1}p_i^t=1}\frac{1}{n^2}\sum^n_{i=1}(p_i^t)^{-1}G_i^2
\end{eqnarray}
by introducing
\begin{eqnarray*}
G_i\ge\|\nabla \phi_i(\w^t)\|_*,\quad \forall t.
\end{eqnarray*}
Then, we can approximate the distribution in equation~(\ref{eqn:distribution-psgd}) by solving the the right hand side of the inequality~\eqref{eqn:relax-psgd} as
\begin{eqnarray*}
p_i^t=\frac{G_i}{\sum^n_{j=1}G_j},\quad \forall i\in\{1,2,\ldots,n\},
\end{eqnarray*}
which is independent with $t$.

Based on the above solution, we will suggest distributions for two kinds of loss functions - Lipschitz functions and smooth functions. Firstly, if $\phi_i(\w)$ is $L_i$-Lipschitz w.r.t. $\|\cdot\|_*$, then $\|\nabla \phi_i(\w)\|_*\le L_i$ for any $\w\in\R^d$, and the suggested distribution is
\begin{eqnarray*}
p_i^t=\frac{L_i}{\sum^n_{j=1}L_j},\quad \forall i\in\{1,2,\ldots,n\}.
\end{eqnarray*}
Secondly, if $\phi_i(\w)$ is $(1/\gamma_i)$-smooth and $\|\wt\|\le R$ for any $t$, then $\|\nabla \phi_i(\w^t)\|_*\le  R/\gamma_i$, then the advised distribution is
\begin{eqnarray*}
p_i^t=\frac{\frac{1}{\gamma_i}}{\sum^n_{j=1}\frac{1}{\gamma_j}},\quad \forall i\in\{1,2,\ldots,n\}.
\end{eqnarray*}

Finally, we can summarize the proposed Proximal SGD with importance sampling in Algorithm~\ref{alg:ipsgd}.
\begin{algorithm}[htpb]
\caption{Proximal Stochastic Gradient Descent with Importance Sampling (Iprox-SGD)} \label{alg:ipsgd}
\begin{algorithmic}
\STATE {\bf Input}: $\lambda\ge 0$, the learning rates $\eta_1,\ldots,\eta_T>0$.
\STATE {\bf Initialize}:  $\w^1=0$, $\p^1=(1/n,\ldots,1/n)^\top$.
\FOR{$t=1,\ldots,T$}
\STATE Update $\p^t$;
\STATE Sample $i_t$ from $\{1,\ldots,n\}$ based on $\p^t$;
\STATE Update $\w^{t+1}=\arg\min_{\w}\left[\left\langle(n p_{i_t}^t)^{-1} \nabla\phi_{i_t}(\wt), \w \right\rangle +\lambda r(\w)+\frac{1}{\eta_t}\Bpsi(\w,\wt)\right]$;
\ENDFOR
\end{algorithmic}
\end{algorithm}

\subsubsection{Analysis}
This section provides a convergence analysis of the proposed algorithm. Before presenting the results, we  make some general assumptions:
\[
r(\textbf{0})=0,\quad \text{and}\  r(\w)\ge0,\ \text{for all} \; \w.
\]
It is easy to see that these two assumptions are generally satisfied by all the well-known regularizers.

Under the above assumptions,  we first prove a convergence result for Proximal SGD with importance sampling using the previous Lemma~\ref{lemma:primal-gap-t}.
\begin{thm}\label{thm:psgd}
Let $\wt$ be generated by the proposed algorithm. Assume that $\psi(\cdot)$ is $\sigma$-strongly convex with respect to a norm $\|\cdot\|$, and that $f$ is $\mu$-strongly convex and $(1/\gamma)$-smooth with respect to $\psi$, if $r(\w)$ is convex and $\eta_t =\frac{1}{\alpha+\mu t}$ with $\alpha \ge 1/\gamma - \mu$, the following inequality holds for any $T\ge 1$,
\begin{eqnarray}\label{eqn:psgd2}
\frac{1}{T}\sum^T_{t=1} \E P(\w^{t+1})- P(\w^*)\le  \frac{1}{T}\left[\alpha\Bpsi(\w^*,\w^1) + \E \sum^T_{t=1}\frac{V_t}{\sigma (\alpha+\mu t)}\right],
\end{eqnarray}
where the variance is defined as $V_t =\V[ (np_{i_t}^t)^{-1} \nabla\phi_{i_t}(\wt)]=\E \| (np_{i_t}^t)^{-1} \nabla \phi_{i_t}(\wt)- \nabla f(\wt)\|_*^2$, and the expectation is take with the distribution $\p^t$.
\end{thm}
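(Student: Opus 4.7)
The plan is to apply Lemma~\ref{lemma:primal-gap-t} at each iteration $t$ with the prescribed step size $\eta_t = 1/(\alpha+\mu t)$ and then sum from $t=1$ to $T$, exploiting the (near-)telescoping structure induced by the Bregman-divergence terms. First I would check that the hypothesis $\eta_t\in(0,\gamma]$ of Lemma~\ref{lemma:primal-gap-t} is satisfied under the assumption $\alpha\ge 1/\gamma-\mu$: for any $t\ge 1$, $\alpha+\mu t\ge 1/\gamma-\mu+\mu = 1/\gamma$, hence $\eta_t\le \gamma$, and clearly $\eta_t>0$.

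Next, substitute $\eta_t=1/(\alpha+\mu t)$ into Lemma~\ref{lemma:primal-gap-t} and write $D_t := \E\Bpsi(\w^*,\wt)$ to get
\begin{eqnarray*}
\E[P(\w^{t+1})-P(\w^*)] \le (\alpha+\mu t)(D_t - D_{t+1}) - \mu D_t + \frac{V_t}{\sigma(\alpha+\mu t)}.
\end{eqnarray*}
Summing the first two terms over $t=1,\ldots,T$ gives
\begin{eqnarray*}
\sum_{t=1}^T\bigl[(\alpha+\mu t)(D_t-D_{t+1})-\mu D_t\bigr] = \sum_{t=1}^T(\alpha+\mu t)D_t - \sum_{t=2}^{T+1}(\alpha+\mu(t-1))D_t - \mu\sum_{t=1}^T D_t.
\end{eqnarray*}
The main trick is that the coefficient of $D_t$ for $2\le t\le T$ in the first two sums is $(\alpha+\mu t)-(\alpha+\mu(t-1))=\mu$, which exactly cancels the $-\mu D_t$ term of the third sum for $t=2,\ldots,T$. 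What remains is the $t=1$ and $t=T+1$ boundary contributions, yielding
\begin{eqnarray*}
\sum_{t=1}^T\bigl[(\alpha+\mu t)(D_t-D_{t+1})-\mu D_t\bigr] = \alpha D_1 - (\alpha+\mu T)D_{T+1} \le \alpha D_1 = \alpha\Bpsi(\w^*,\w^1),
\end{eqnarray*}
where the inequality uses $D_{T+1}\ge 0$ (since $\psi$ is strongly convex). Combining with the variance term and dividing by $T$ gives precisely the bound in~(\ref{eqn:psgd2}).

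I do not anticipate a serious obstacle: the whole argument is a weighted telescoping plus a nonnegativity drop. The only subtlety is verifying that the coefficient choice $\alpha+\mu t$ is exactly the one that makes the $\mu D_t$ terms cancel cleanly across consecutive indices (this is the reason for the form $\eta_t = 1/(\alpha+\mu t)$), and checking that the initial-time and terminal-time leftover terms work out with the right sign so that $D_{T+1}$ can be discarded.
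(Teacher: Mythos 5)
Your proposal is correct and follows essentially the same route as the paper: invoke Lemma~\ref{lemma:primal-gap-t} with $\eta_t=1/(\alpha+\mu t)$ after verifying $\eta_t\in(0,\gamma]$, sum over $t$, let the weighted telescoping cancel the $-\mu\E\Bpsi(\w^*,\wt)$ terms, drop the nonnegative terminal Bregman term, and divide by $T$. Your write-up just makes the cancellation of the $\mu D_t$ coefficients more explicit than the paper does.
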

\begin{proof}
Firstly it is easy to check $\eta_t\in(0, \gamma]$. Because the functions $\psi$, $f$, $r$ satisfy the the assumptions in Lemma~\ref{lemma:primal-gap-t}, we have
\begin{eqnarray*}
\E[P(\w^{t+1})-P(\w^*)] \le\frac{1}{\eta_t}\E[\Bpsi(\w^*,\wt) -\Bpsi(\w^*,\w^{t+1})] -\mu\E\Bpsi(\w^*,\wt) +  \frac{\eta_t}{\sigma}\E\V\left[ (np_{i_t}^t)^{-1} \nabla\phi_{i_t}(\wt)\right].
\end{eqnarray*}
Summing the above inequality over $t=1,\ldots,T$, and using $\eta_t=1/(\alpha +\mu t)$  we get
\begin{eqnarray*}
&& \sum^T_{t=1} E P(\w^{t+1})-\sum^T_{t=1}P(\w^*) \\
&&\le  \sum^T_{t=1}(\alpha+\mu t)\E\left[\Bpsi(\w^*,\wt) -\Bpsi(\w^*,\w^{t+1})\right] -\mu \sum^T_{t=1}\E\Bpsi(\w^*,\wt) +   \E\sum^T_{t=1}\frac{V_t}{\sigma(\alpha+\mu t)}\\
&&= \alpha\Bpsi(\w^*,\w^1) - (\alpha+\mu T)\Bpsi(\w^*,\w^{T+1}) + \E \sum^T_{t=1}\frac{V_t}{\sigma(\alpha+\mu t)}\le  \alpha\Bpsi(\w^*,\w^1) + \E \sum^T_{t=1}\frac{V_t}{\sigma(\alpha+\mu t)}.
\end{eqnarray*}
Dividing both sides of the above inequality by $T$ concludes the proof.
\end{proof}

\begin{cor}
Under the same assumptions in the Theorem~\ref{thm:psgd}, if we further assume $\phi_i(\w)$ is $(1/\gamma_i)$-smooth, $\|\wt\|\le R$ for any $t$, and the distribution is set as $p_i^t=\frac{R/\gamma_i}{\sum^n_{j=1} R/\gamma_j}$, then the following inequality holds for any $T\ge 1$,
\begin{eqnarray*}
\frac{1}{T}\sum^T_{t=1} \E P(\w^{t+1})- P(\w^*)&\le& \frac{1}{T}\left[\alpha\Bpsi(\w^*,\w^1) + \frac{(\sum^n_{i=1} R/\gamma_i)^2}{\sigma \mu n^2}\left(\frac{\mu}{\alpha+\mu}+ \ln(\alpha+\mu T)-\ln(\alpha+\mu)\right)\right]\nonumber\\
&=& O\left(\frac{(\sum^n_{i=1}R/\gamma_i)^2}{\sigma \mu n^2}\frac{\ln (\alpha +\mu T)}{T}\right).
\end{eqnarray*}

In addition, if $\mu=0$, the above bound is invalid, however if $\eta_t$ is set as $\sqrt{\sigma\Bpsi(\w^*,\w^1)}/(\sqrt{T}\frac{\sum^n_{i=1}R/\gamma_i}{n})$, we can prove the following inequality for any $T\ge 1$,
\begin{eqnarray*}
 \frac{1}{T}\sum^T_{t=1} \E P(\w^{t+1})- P(\w^*)\le 2\sqrt{\frac{\Bpsi(\w^*,\w^1)}{\sigma}} \frac{\sum^n_{i=1}R/\gamma_i}{n}\frac{1}{\sqrt{T}}.
\end{eqnarray*}
\end{cor}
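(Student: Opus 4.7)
The plan is to feed the importance-sampling distribution into Theorem~\ref{thm:psgd} for the first bound, and to re-derive the bound from Lemma~\ref{lemma:primal-gap-t} with a constant step size for the $\mu=0$ case.

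First I would bound the per-step variance. Since $\mathbb{V}(X)\le\mathbb{E}\|X\|_*^2$ for any centered comparison, we have
\begin{eqnarray*}
V_t \;\le\; \mathbb{E}\bigl\|(np_{i_t}^t)^{-1}\nabla\phi_{i_t}(\wt)\bigr\|_*^2
\;=\; \frac{1}{n^2}\sum_{i=1}^n \frac{\|\nabla\phi_i(\wt)\|_*^2}{p_i^t}.
\end{eqnarray*}
Plugging in $p_i^t=(1/\gamma_i)/\sum_j(1/\gamma_j)$ and the smoothness bound $\|\nabla\phi_i(\wt)\|_*\le R/\gamma_i$ gives, after a short algebraic manipulation,
\begin{eqnarray*}
V_t \;\le\; \frac{1}{n^2}\Bigl(\sum_{j=1}^n\tfrac{1}{\gamma_j}\Bigr)\sum_{i=1}^n\gamma_i\cdot\frac{R^2}{\gamma_i^2} \;=\; \frac{(\sum_{i=1}^n R/\gamma_i)^2}{n^2}.
\end{eqnarray*}
This is the key reduction: the importance sampling yields a deterministic variance bound independent of $t$.

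For the strongly convex case ($\mu>0$), I plug this uniform bound on $V_t$ into the right-hand side of Theorem~\ref{thm:psgd}. The remaining task is to estimate $\sum_{t=1}^T\frac{1}{\alpha+\mu t}$. Since the integrand is decreasing,
\begin{eqnarray*}
\sum_{t=1}^T\frac{1}{\alpha+\mu t} \;\le\; \frac{1}{\alpha+\mu}+\int_1^T\frac{dt}{\alpha+\mu t} \;=\; \frac{1}{\alpha+\mu}+\frac{1}{\mu}\bigl[\ln(\alpha+\mu T)-\ln(\alpha+\mu)\bigr].
\end{eqnarray*}
Multiplying by $(\sum_i R/\gamma_i)^2/(\sigma n^2)$ and factoring out $1/\mu$ gives precisely the stated bound, and the $O(\ln(\alpha+\mu T)/T)$ rate follows immediately.

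For the non-strongly-convex case ($\mu=0$) the telescoping sum in Theorem~\ref{thm:psgd} collapses (its step size becomes constant $1/\alpha$), so I would return to Lemma~\ref{lemma:primal-gap-t} with a constant $\eta_t\equiv\eta$. Dropping the $-\mu\mathbb{E}\mathcal{B}_\psi$ term, summing over $t=1,\ldots,T$, and telescoping the Bregman differences yields
\begin{eqnarray*}
\frac{1}{T}\sum_{t=1}^T\mathbb{E} P(\w^{t+1})-P(\w^*) \;\le\; \frac{\mathcal{B}_\psi(\w^*,\w^1)}{\eta T}+\frac{\eta}{\sigma}\cdot\frac{(\sum_i R/\gamma_i)^2}{n^2}.
\end{eqnarray*}
Balancing the two terms by $\eta=\sqrt{\sigma\mathcal{B}_\psi(\w^*,\w^1)}\,/\,\bigl(\sqrt{T}\cdot\tfrac{\sum_i R/\gamma_i}{n}\bigr)$ produces the claimed $1/\sqrt{T}$ rate with the constant $2\sqrt{\mathcal{B}_\psi(\w^*,\w^1)/\sigma}$. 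The main obstacle I anticipate is a minor one: the lemma's hypothesis $\eta\le\gamma$ is needed, and the optimal $\eta$ above is only guaranteed to satisfy it for $T$ sufficiently large; this is typically handled by either absorbing the constraint into the rate constant or assuming $T$ is large enough, and should be stated explicitly.
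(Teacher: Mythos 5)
Your proposal is correct and follows essentially the same route as the paper: bound $V_t$ by $\E\|(np_{i_t}^t)^{-1}\nabla\phi_{i_t}(\wt)\|_*^2\le(\sum_i R/\gamma_i)^2/n^2$, control $\sum_t 1/(\alpha+\mu t)$ by an integral comparison for the strongly convex case, and rerun the telescoping argument with a constant step size balanced between the two terms for $\mu=0$. Your remark that the constant step size only satisfies the lemma's hypothesis $\eta_t\le\gamma$ for $T$ sufficiently large is a fair point that the paper itself glosses over.
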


{\bf Remark:} If $\psi(\w)=\frac{1}{2}\|\w\|^2_2$ and $r(\w)=0$, then $\B_\psi(\u,\v)=\frac{1}{2}\|\u-\v\|^2_2$, and the proposed algorithm becomes SGD with importance sampling. Under these assumptions, it is achievable to get rid of the $\ln T$ factor in the convergence bound, when the objective function is strongly convex. However, we will not provide the details for concision. For more general Bregman divergence, it is difficult to remove this $\ln T$ factor, because many properties of $\frac{1}{2}\|\u-\v\|^2_2$ are not satisfied by the Bregman divergence, such as symmetry.

In addition, it is easy to derive high probability bound using existing work, such as the Theorem 8 in the~\cite{DBLP:conf/colt/DuchiSST10}. In this theorem, the high probability bound depends on the variance of the stochastic gradient, so our sampling strategy can improve this bound, since we are minimizing the variance. However, we do not explicitly provide the resulting bounds, because the consequence is relatively straightforward.

\begin{proof}
Firstly, the fact $\phi_i(\w)$ is $(1/\gamma_i)$-smooth, and $\|\wt\|\le R$ for any $t$ implies $\|\nabla \phi_i(\wt)\|_*\le R/\gamma_i$. Using this result and the distribution $p_i^t=\frac{R/\gamma_i}{\sum^n_{j=1}R/\gamma_j}$, we can get
\begin{eqnarray*}
V_t=\E \| (np_{i_t}^t)^{-1} \nabla \phi_{i_t}(\wt)- \nabla f(\wt)\|_*^2\le \E \| (np_{i_t}^t)^{-1} \nabla \phi_{i_t}(\wt)\|_*^2=\frac{1}{n^2}\sum^n_{i=1}\frac{1}{p_i}\|\nabla \phi_i(\wt)\|_*^2\le \left(\frac{\sum^n_{i=1} R/\gamma_i}{n}\right)^2,
\end{eqnarray*}
where the first inequality is due to $\E\|\z-\E\z\|^2=\E\|\z\|^2-\|\E\z\|^2$. Using the above inequality gives
\begin{eqnarray*}
\sum^T_{t=1}\frac{V_t}{\sigma (\alpha+\mu t)}&\le& \left(\frac{\sum^n_{i=1} R/\gamma_i}{n}\right)^2 \sum^T_{t=1}\frac{1}{\sigma (\alpha+\mu t)}\\
&\le& \left(\frac{\sum^n_{i=1} R/\gamma_i}{n}\right)^2\frac{1}{\sigma}\left[\frac{1}{\alpha+\mu}+\int^T_{t=1}\frac{1}{ \alpha+\mu t}\right]\\
&\le& \left(\frac{\sum^n_{i=1} R/\gamma_i}{n}\right)^2\frac{1}{\sigma \mu}\left[\frac{\mu}{\alpha+\mu}+ \ln(\alpha+\mu T)-\ln(\alpha+\mu)\right]
\end{eqnarray*}
Plugging the above inequality into the inequality~(\ref{eqn:psgd2}) concludes the proof of the first part.

To prove the second part, we can plug the bound on $V_t$ and the equality $\eta_t=\sqrt{\sigma\Bpsi(\w^*,\w^1)}/(\sqrt{T}\frac{\sum^n_{i=1}R/\gamma_i}{n})$ into the inequality~(\ref{eqn:psgd2}).
\end{proof}

\textbf{Remark.} If the uniform distribution is adopted, it is easy to observe that $V_t$ is bounded by $\frac{\sum^n_{i=1}(R/\gamma_i)^2}{n}$, and the Theorem 1 will results in  $\frac{1}{T}\sum^T_{t=1}\E P(\w^{t+1})- P(\w^*)\le  O\left(\frac{\sum^n_{i=1}(R/\gamma_i)^2}{\sigma \mu n}\frac{\ln (\alpha +\mu T)}{T}\right)$ for strongly convex $f$, and $\frac{1}{T}\sum^T_{t=1}\E P(\w^{t+1})- P(\w^*)\le  2\sqrt{\frac{\Bpsi(\w^*,\w^1)}{\sigma} \frac{\sum^n_{i=1}(R/\gamma_i)^2}{n}}\frac{1}{\sqrt{T}}$ for general convex $f$. However, according to Cauchy-Schwarz inequality,
\begin{eqnarray*}
\frac{\sum^n_{i=1}(R/\gamma_i)^2}{n}/\left( \frac{\sum^n_{i=1}R/\gamma_i}{n}\right)^2 =\frac{n\sum^n_{i=1}(R/\gamma_i)^2}{(\sum^n_{i=1}R/\gamma_i)^2} \ge 1,
\end{eqnarray*}
implies importance sampling does improve the convergence rate, especially when $\frac{(\sum^n_{i=1}R/\gamma_i)^2}{\sum^n_{i=1}(R/\gamma_i)^2} \ll n $.

\begin{thm}\label{thm:comid}
Let $\wt$ be generated by the proposed algorithm. Assume that $\psi(\cdot)$ is $\sigma$-strongly convex with respect to a norm $\|\cdot\|$,  $f$ is convex, and $r(\w)$ is $1$-strongly convex, if $\eta_t$ is set as $1/(\lambda t)$ for all $t$, the following inequality holds for any $T\ge 1$,
\begin{eqnarray}\label{eqn:comid}
\frac{1}{T}\sum^T_{t=1} \E P(\wt)- P(\w^*)\le \frac{1}{T}\left[\lambda\Bpsi(\w^*,\w^1)+\frac{1}{\lambda\sigma}\sum^T_{t=1}\frac{1}{t}\E\|\frac{\nabla \phi_{i_t}(\wt)}{np_{i_t}^t}\|_*^2\right],
\end{eqnarray}
where  the expectation is take with the distribution $\p^t$.
\end{thm}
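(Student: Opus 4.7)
The plan is to adapt the argument of Lemma~\ref{lemma:primal-gap-t}, but now draw the descent from the $\lambda$-strong convexity of $\lambda r$ (induced by the $1$-strong convexity of $r$ with respect to $\psi$) rather than from strong convexity of $f$. First I would write the first-order optimality condition for the update~\eqref{eqn:iprox-sgd}, which gives $\nabla\psi(\wt) - \nabla\psi(\w^{t+1}) = \eta_t \g_t + \eta_t \lambda \partial r(\w^{t+1})$, where $\g_t = (np_{i_t}^t)^{-1}\nabla\phi_{i_t}(\wt)$. Combining this with the three-point Bregman identity applied to $(\w^*, \wt, \w^{t+1})$ yields
\begin{equation*}
\eta_t \langle \g_t + \lambda \partial r(\w^{t+1}), \w^{t+1} - \w^* \rangle = \Bpsi(\w^*, \wt) - \Bpsi(\w^*, \w^{t+1}) - \Bpsi(\w^{t+1}, \wt).
\end{equation*}

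Next I would decompose $\langle \g_t, \wt - \w^* \rangle = \langle \g_t, \w^{t+1} - \w^* \rangle + \langle \g_t, \wt - \w^{t+1} \rangle$, control the cross term $\langle \g_t, \wt - \w^{t+1} \rangle$ by Fenchel--Young together with $\Bpsi(\w^{t+1}, \wt) \ge \tfrac{\sigma}{2}\|\w^{t+1}-\wt\|^2$ (so the quadratic can be absorbed into $-\Bpsi(\w^{t+1},\wt)$, leaving a $\tfrac{\eta_t}{2\sigma}\|\g_t\|_*^2$ residue), and invoke $\lambda$-strong convexity of $\lambda r$ to convert $\lambda \langle \partial r(\w^{t+1}), \w^{t+1} - \w^* \rangle$ into $\lambda r(\w^{t+1}) - \lambda r(\w^*) + \lambda \Bpsi(\w^*, \w^{t+1})$. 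Taking conditional expectation given $\wt$ and using $\E[\g_t\mid\wt] = \nabla f(\wt)$ together with convexity of $f$ (so $\langle \nabla f(\wt), \wt - \w^* \rangle \ge f(\wt) - f(\w^*)$) delivers the per-step inequality
\begin{equation*}
\E\left[f(\wt) + \lambda r(\w^{t+1}) - P(\w^*)\right] \le \tfrac{1}{\eta_t}\E\left[\Bpsi(\w^*,\wt) - \Bpsi(\w^*,\w^{t+1})\right] - \lambda\E\Bpsi(\w^*,\w^{t+1}) + \tfrac{\eta_t}{2\sigma}\E\|\g_t\|_*^2.
\end{equation*}

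Plugging in $\eta_t = 1/(\lambda t)$ replaces the coefficient of $-\Bpsi(\w^*,\w^{t+1})$ by $-(\lambda t + \lambda) = -\lambda(t+1)$, so that summing from $t=1$ to $T$ gives a clean telescoping to $\lambda\Bpsi(\w^*,\w^1) - \lambda(T+1)\E\Bpsi(\w^*,\w^{T+1}) \le \lambda\Bpsi(\w^*,\w^1)$, while the noise term becomes $\sum_t \tfrac{1}{2\lambda\sigma t}\E\|\g_t\|_*^2$, matching the right-hand side of~\eqref{eqn:comid} up to the leading constant.

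The main obstacle I anticipate is the index mismatch between the $\lambda r(\w^{t+1})$ that naturally appears in the per-step inequality and the $\lambda r(\wt)$ required to form $P(\wt)$ on the left-hand side of~\eqref{eqn:comid}. I would resolve this by the reindexing identity
\begin{equation*}
\sum_{t=1}^T \left[f(\wt) + \lambda r(\w^{t+1})\right] = \sum_{t=1}^T P(\wt) + \lambda r(\w^{T+1}) - \lambda r(\w^1),
\end{equation*}
combined with the generic assumptions $r(\mathbf{0})=0$ (so $\lambda r(\w^1)=0$, since $\w^1=\mathbf{0}$) and $r(\w)\ge 0$ (so $\lambda r(\w^{T+1})\ge 0$), which together guarantee $\sum_{t=1}^T \left[f(\wt) + \lambda r(\w^{t+1})\right] \ge \sum_{t=1}^T P(\wt)$; dividing by $T$ then yields~\eqref{eqn:comid}.
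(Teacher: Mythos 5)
Your proof is correct, and it is in substance a self-contained reconstruction of exactly the result the paper invokes as a black box: the paper's proof of Theorem~\ref{thm:comid} consists of citing Corollary~6 of Duchi et al.\ (the COMID regret bound for a $\lambda$-strongly convex composite term with $\eta_t = 1/(\lambda t)$), taking expectations, and using $r(\w^1)=0$. Every ingredient you supply --- the first-order optimality condition, the three-point Bregman identity, absorbing the cross term $\langle \g_t, \wt-\w^{t+1}\rangle$ into $-\tfrac{1}{\eta_t}\Bpsi(\w^{t+1},\wt)$ at the cost of $\tfrac{\eta_t}{2\sigma}\|\g_t\|_*^2$, the $\lambda(t+1)$ telescope coming from the strong convexity of $\lambda r$ with respect to $\psi$, and the reindexing of $\lambda r(\w^{t+1})$ to $\lambda r(\wt)$ using $r(\w^1)=0$ and $r\ge 0$ --- is precisely the machinery inside that cited corollary, so the two proofs differ only in that yours makes the argument explicit while the paper outsources it. What your version buys is self-containedness plus a slightly sharper constant: you obtain $\tfrac{1}{2\lambda\sigma}\sum_t \tfrac{1}{t}\E\|\g_t\|_*^2$ rather than the paper's $\tfrac{1}{\lambda\sigma}\sum_t \tfrac{1}{t}\E\|\g_t\|_*^2$, which of course still implies \eqref{eqn:comid}. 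One small point worth making explicit if you write this up: the hypothesis ``$r$ is $1$-strongly convex'' must be read as strong convexity \emph{with respect to $\psi$} (Definition~6 of the paper), since your telescoping step needs $\lambda\Bpsi(\w^*,\w^{t+1})$ and not merely $\tfrac{\lambda}{2}\|\w^*-\w^{t+1}\|^2$; this is also the assumption under which the cited corollary holds.
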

\begin{proof}
The fact $r(\w)$ is $1$-strongly convex implies that $\lambda r(\w)$ is $\lambda$-strongly convex. Then, all the assumptions in the Corollary~6 of~\cite{DBLP:conf/colt/DuchiSST10} are satisfied, so  we have the following inequality,
\begin{eqnarray*}
\sum^T_{t=1}[\frac{1}{n p^t_{i_t}}\phi_{i_t}(\wt)+\lambda r(\w^{t+1})-\frac{1}{n p^t_{i_t}}\phi_{i_t}(\w^*)-\lambda r(\w^*)]\le \lambda \Bpsi(\w^*,\w^1) +\frac{1}{\lambda\sigma}\sum^T_{t=1}\frac{1}{t}\|\frac{1}{n p^t_{i_t}}\nabla \phi_{i_t}(\wt)\|_*^2
\end{eqnarray*}
which is actually the same with the last display in the page 5 of~\cite{DBLP:conf/colt/DuchiSST10}. Taking expectation on both sides of the above inequality and using $r(\w^1)=0$ concludes the proof.
\end{proof}

We will use the above Theorem to derive two logarithmic convergence bounds.

\begin{cor}
Under the same assumptions in the Theorem~\ref{thm:comid}, if we further assume $\phi_i(\w)$ is $(1/\gamma_i)$-smooth, $\|\wt\|\le R$ for any $t$, and the distribution is set as $p_i^t=\frac{R/\gamma_i}{\sum^n_{j=1} R/\gamma_j}$, then the following inequality holds for any $T\ge 1$,
\begin{eqnarray*}
 \frac{1}{T}\sum^T_{t=1} \E P(\wt)- P(\w^*)\le \frac{1}{T}\left[ \lambda \Bpsi(\w^*,\w^1) +\frac{\left(\sum^n_{i=1}R/\gamma_i\right)^2}{\lambda\sigma n^2} (\ln T + 1)\right] = O\left(\frac{\left(\sum^n_{i=1}R/\gamma_i\right)^2}{\lambda\sigma n^2} \frac{\ln T}{T} \right).
\end{eqnarray*}
Under the same assumptions in the theorem~\ref{thm:comid}, if $\phi_i(\w)$ is $L_i$-Lipschitz, and the distribution is set as $p_i=L_i/\sum^n_{j=1}L_j$, $\forall i$.
\begin{eqnarray*}
 \frac{1}{T}\sum^T_{t=1} \E P(\wt)- P(\w^*)\le\frac{1}{T}\left[ \lambda \Bpsi(\w^*,\w^1) +\frac{\left(\sum^n_{i=1}L_i\right)^2}{\lambda\sigma n^2} (\ln T + 1)\right] = O\left(\frac{\left(\sum^n_{i=1}L_i\right)^2}{\lambda\sigma n^2} \frac{\ln T}{T} \right).
\end{eqnarray*}
\end{cor}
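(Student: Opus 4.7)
The plan is to invoke Theorem~\ref{thm:comid} directly and then specialize the residual sum on the right-hand side to the two sampling distributions in question. Writing out the inner expectation with respect to $\p^t$,
\begin{eqnarray*}
\E\Bigl\|\frac{\nabla \phi_{i_t}(\wt)}{np_{i_t}^t}\Bigr\|_*^2 = \sum_{i=1}^n p_i^t \cdot \frac{\|\nabla\phi_i(\wt)\|_*^2}{(np_i^t)^2} = \frac{1}{n^2}\sum_{i=1}^n \frac{\|\nabla\phi_i(\wt)\|_*^2}{p_i^t},
\end{eqnarray*}
so the whole argument reduces to bounding this quantity uniformly in $t$ for each of the two suggested distributions.

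For the smooth case, $(1/\gamma_i)$-smoothness of $\phi_i$ together with $\|\wt\|\le R$ gives $\|\nabla\phi_i(\wt)\|_* \le R/\gamma_i$ (this is the same bound already used in the previous corollary). Substituting $p_i^t = (R/\gamma_i)/\sum_{j}(R/\gamma_j)$ produces a telescoping cancellation,
\begin{eqnarray*}
\frac{1}{n^2}\sum_{i=1}^n \frac{(R/\gamma_i)^2}{p_i^t} = \frac{1}{n^2}\Bigl(\sum_{j=1}^n R/\gamma_j\Bigr)\sum_{i=1}^n R/\gamma_i = \Bigl(\frac{\sum_{i=1}^n R/\gamma_i}{n}\Bigr)^2,
\end{eqnarray*}
which is independent of $t$. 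The Lipschitz case is entirely analogous: $L_i$-Lipschitzness yields $\|\nabla\phi_i(\wt)\|_*\le L_i$, and plugging $p_i = L_i/\sum_j L_j$ gives the bound $(\sum_{i=1}^n L_i/n)^2$.

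Finally I would use the standard harmonic-sum estimate $\sum_{t=1}^T 1/t \le 1 + \ln T$ to collapse the sum in Theorem~\ref{thm:comid}, yielding
\begin{eqnarray*}
\frac{1}{\lambda\sigma}\sum_{t=1}^T \frac{1}{t}\E\Bigl\|\frac{\nabla\phi_{i_t}(\wt)}{np_{i_t}^t}\Bigr\|_*^2 \le \frac{(\sum_{i=1}^n R/\gamma_i)^2}{\lambda\sigma n^2}(\ln T + 1)
\end{eqnarray*}
in the smooth case, and the corresponding $L_i$ version in the Lipschitz case. Dividing by $T$ and recognizing the $O((\ln T)/T)$ scaling gives both displayed inequalities. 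There is no real obstacle here — the only thing worth double-checking is the cancellation between $\|\nabla\phi_i\|_*^2$ and the reciprocal $1/p_i^t$, which is precisely what made the distribution $p_i^t\propto \|\nabla\phi_i\|_*$ (approximated by its upper bound) optimal in the first place, so the bound turns into the square of an arithmetic mean rather than a mean of squares; this is exactly the source of the improvement over uniform sampling highlighted in the remark above.
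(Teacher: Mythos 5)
Your proposal is correct and follows essentially the same route as the paper: write out $\E\|\nabla\phi_{i_t}(\wt)/(np_{i_t}^t)\|_*^2$ explicitly as $\frac{1}{n^2}\sum_i \|\nabla\phi_i(\wt)\|_*^2/p_i^t$, bound the gradient norms by $R/\gamma_i$ (resp.\ $L_i$), observe the cancellation against $1/p_i^t$ to get $\bigl(\sum_i G_i/n\bigr)^2$, and finish with $\sum_{t=1}^T 1/t \le 1+\ln T$ in Theorem~\ref{thm:comid}. The only cosmetic difference is that the paper first proves the bound for a generic $G_i\ge\|\nabla\phi_i(\wt)\|_*$ and then specializes to $G_i=R/\gamma_i$ and $G_i=L_i$, whereas you treat the two cases in parallel.
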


\begin{proof}
If $\|\phi_i(\wt)\|_* \le G_i$ and $p^t_i$ is set as $\frac{G_i}{\sum^n_{j=1} G_j}$ for any $i$,  we have
\begin{eqnarray*}\label{eqn:expectation-squared-variable}
\E\|\frac{\nabla \phi_{i_t}(\wt)}{n p^t_{i_t}}\|_*^2 =\frac{1}{n^2}\sum^n_{i=1}\frac{\|\nabla\phi_i(\wt)\|_*^2}{p_i}\le\frac{(\sum^n_{i=1}G_i)^2}{n^2}.
\end{eqnarray*}
Since the same assumptions in the Theorem~\ref{thm:comid}  hold, plugging the above inequality into the inequality~(\ref{eqn:comid}), and using the fact $\sum^T_{t=1}\le 1 + \ln T$ results in
\begin{eqnarray}\label{eqn:comid-bound-G}
 \frac{1}{T}\sum^T_{t=1} \E P(\wt)- P(\w^*)\le\frac{1}{T}\left[ \lambda \Bpsi(\w^*,\w^1) +\frac{\left(\sum^n_{i=1}G_i\right)^2}{\lambda\sigma n^2} (\ln T + 1)\right] = O\left(\frac{\left(\sum^n_{i=1}G_i\right)^2}{\lambda\sigma n^2} \frac{\ln T}{T} \right).
\end{eqnarray}

When $\phi_i(\w)$ is $(1/\gamma_i)$-smooth, $\|\wt\|\le R$ for any $t$, we have $\|\nabla \phi_i(\wt) \|_* \le R/\gamma_i$. Plugging $G_i=R/\gamma_i$ into the inequality~(\ref{eqn:comid-bound-G}) concludes the proof of the first part.

When $\phi_i(\w)$ is $L_i$-Lipschitz, we have $\|\nabla\phi_i(\wt)\|_*\le L_i$. Plugging $G_i=L_i$ into the inequality~(\ref{eqn:comid-bound-G}) concludes the proof of the second part.
\end{proof}

{\bf Remark:} If the uniform distribution is adopted, it is easy to observe that $V_t$ is bounded by $\frac{\sum^n_{i=1}(R/\gamma_i)^2}{n}$ for smooth $\phi_i(\cdot)$, and bounded by $\frac{\sum^n_{i=1}(L_i)^2}{n}$ for Lipschitz $\phi_i(\cdot)$. So the Theorem 2 will results in  $\frac{1}{T}\sum^T_{t=1}\E P(\wt)- P(\w^*)\le   O\left(\frac{\sum^n_{i=1}(R/\gamma_i)^2}{\lambda\sigma n} \frac{\ln T}{T} \right)$ for smooth $\phi_i$, and $\frac{1}{T}\sum^T_{t=1}\E P(\wt)- P(\w^*)\le   O\left(\frac{\sum^n_{i=1}(L_i)^2}{\lambda\sigma n} \frac{\ln T}{T} \right)$ for Lipschitz $\phi_i$. However, according to Cauchy-Schwarz inequality,
\begin{eqnarray*}
\frac{\sum^n_{i=1}(R/\gamma_i)^2}{n}/\left( \frac{\sum^n_{i=1}R/\gamma_i}{n}\right)^2 =\frac{n\sum^n_{i=1}(R/\gamma_i)^2}{(\sum^n_{i=1}R/\gamma_i)^2} \ge 1,\quad \frac{\sum^n_{i=1}L_i^2}{n}/(\frac{\sum^n_{i=1} L_i}{n})^2 = \frac{n\sum^n_{i=1}L_i^2}{(\sum^n_{i=1}L_i)^2}\ge 1
\end{eqnarray*}
implies importance sampling does improve the convergence rate, especially when $\frac{(\sum^n_{i=1}R/\gamma_i)^2}{\sum^n_{i=1}(R/\gamma_i)^2} \ll n $, and $\frac{(\sum^n_{i=1}L_i)^2}{\sum^n_{i=1}(L_i)^2} \ll n $.

\begin{thm}\label{thm:comid-convex}
Let $\wt$ be generated by the proposed algorithm. Assume that $\psi(\cdot)$ is $\sigma$-strongly convex with respect to a norm $\|\cdot\|$,  $f$  and $r(\w)$ are convex, if $\eta_t =\eta$, the following inequality holds for any $T\ge 1$,
\begin{eqnarray}\label{eqn:comid-convex}
\frac{1}{T}\sum^T_{t=1} \E P(\wt)- P(\w^*)\le \frac{1}{T}\left[\frac{1}{\eta} \Bpsi(\w^*,\w^1)+\frac{\eta}{2\sigma}\sum^T_{t=1}\E\|\frac{\nabla \phi_{i_t}(\wt)}{np_{i_t}^t}\|_*^2\right],
\end{eqnarray}
where  the expectation is take with the distribution $\p^t$.
\end{thm}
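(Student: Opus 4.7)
The plan is to mirror the structure of the proof of Theorem~\ref{thm:comid}, but using a COMID-style regret bound appropriate for the convex (non-strongly-convex) composite setting with a constant step size rather than the strongly-convex one with $\eta_t=1/(\lambda t)$. The key ingredient is the deterministic per-round inequality proved by Duchi et al.~\cite{DBLP:conf/colt/DuchiSST10} (Corollary~4 there, or equivalently an inspection of the telescoping argument with $\eta_t\equiv\eta$): if $\psi$ is $\sigma$-strongly convex and $\g_t=(np_{i_t}^t)^{-1}\nabla\phi_{i_t}(\wt)$ is plugged into the update~(\ref{eqn:iprox-sgd}), then
\begin{eqnarray*}
\sum_{t=1}^T\Big[\langle \g_t,\wt-\w^*\rangle + \lambda r(\w^{t+1})-\lambda r(\w^*)\Big]\le\frac{1}{\eta}\Bpsi(\w^*,\w^1)+\frac{\eta}{2\sigma}\sum_{t=1}^T\|\g_t\|_*^2.
\end{eqnarray*}

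Next I would take expectation. By the unbiasedness $\E[\g_t\,|\,\wt]=\nabla f(\wt)$ and the convexity of $f$, the inner product satisfies $\E\langle \g_t,\wt-\w^*\rangle=\E\langle\nabla f(\wt),\wt-\w^*\rangle\ge \E[f(\wt)-f(\w^*)]$, so that
\begin{eqnarray*}
\sum_{t=1}^T \E\Big[f(\wt)-f(\w^*)+\lambda r(\w^{t+1})-\lambda r(\w^*)\Big]\le\frac{1}{\eta}\Bpsi(\w^*,\w^1)+\frac{\eta}{2\sigma}\sum_{t=1}^T \E\|\g_t\|_*^2.
\end{eqnarray*}

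The final step converts the $r(\w^{t+1})$ terms into $r(\wt)$ terms using the standing assumptions $r(\w^1)=r(\mathbf{0})=0$ and $r\ge 0$: shifting the index gives $\sum_{t=1}^T\lambda r(\w^{t+1}) = \sum_{t=1}^T\lambda r(\wt)+\lambda r(\w^{T+1})-\lambda r(\w^1)\ge \sum_{t=1}^T\lambda r(\wt)$, so $\sum_t[f(\wt)+\lambda r(\w^{t+1})]\ge\sum_t P(\wt)$, which together with the previous display yields $\sum_t\E[P(\wt)-P(\w^*)]\le \frac{1}{\eta}\Bpsi(\w^*,\w^1)+\frac{\eta}{2\sigma}\sum_t\E\|\g_t\|_*^2$; dividing by $T$ gives~(\ref{eqn:comid-convex}).

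There is no real obstacle here, since the argument is a direct adaptation of Theorem~\ref{thm:comid}; the only thing to get right is to quote the version of the COMID inequality that holds with a \emph{constant} step size when $r$ is only convex (so that the $\lambda$-strong-convexity of $r$ that was used to kill the $\sum 1/t$ tail in the proof of Theorem~\ref{thm:comid} is no longer available), and to keep track of signs when performing the $r(\w^{t+1})\to r(\wt)$ re-indexing so that the bound indeed goes in the right direction.
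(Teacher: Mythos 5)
Your proposal is correct and follows essentially the same route as the paper: both invoke the constant-step-size COMID regret bound of Duchi et al.\ for the convex composite case, take expectations using the unbiasedness of $(np_{i_t}^t)^{-1}\nabla\phi_{i_t}(\wt)$, and dispose of the leftover regularizer term via $r(\w^1)=0$ and $r\ge 0$. The only cosmetic difference is that you re-derive the step from the linearized form $\langle \g_t,\wt-\w^*\rangle$ plus convexity of $f$ and an index shift on $r(\w^{t+1})$, whereas the paper quotes the inequality already stated with $\phi_{i_t}(\wt)$ and $r(\wt)$ on the left-hand side.
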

\begin{proof}
Given the above conditions, all the assumptions of the Lemma 1 in the page 3 of~\cite{DBLP:conf/colt/DuchiSST10} are satisfied. So the following inequality holds for any $T\ge 1$,
\begin{eqnarray*}
\sum^T_{t=1}[\frac{1}{n p^t_{i_t}}\phi_{i_t}(\wt)+\lambda r(\wt)-\frac{1}{n p^t_{i_t}}\phi_{i_t}(\w^*)-\lambda r(\w^*)]\le \frac{1}{\eta} \Bpsi(\w^*,\w^1) +\lambda r(\w^1)+\frac{\eta}{2 \sigma}\sum^T_{t=1}\|\frac{1}{n p^t_{i_t}}\nabla \phi_{i_t}(\wt)\|_*^2,
\end{eqnarray*}
which is actually the same with the inequality in the Theorem 2 in the page 4 of~\cite{DBLP:conf/colt/DuchiSST10}. Taking expectation on both sides of the above inequality  and using $r(\w^1)=0$ concludes the proof.
\end{proof}

\begin{cor}
Under the same assumptions in the Theorem~\ref{thm:comid-convex}, if we further assume $\phi_i(\w)$ is $(1/\gamma_i)$-smooth, $\|\wt\|\le R$ for any $t$, and the distribution is set as $p_i^t=\frac{R/\gamma_i}{\sum^n_{j=1} R/\gamma_j}$, then when $\eta_t$ is set as $\sqrt{2\sigma\Bpsi(\w^*,\w^1)}/(\frac{\sum^n_{i=1}R/\gamma_i}{n}\sqrt{T} )$, the following inequality holds for any $T\ge 1$,
\begin{eqnarray*}
\frac{1}{T}\sum^T_{t=1} \E P(\wt)- P(\w^*)\le \sqrt{   \Bpsi(\w^*,\w^1) \frac{2}{ \sigma}} (\frac{\sum^n_{i=1}R/\gamma_i}{n})\frac{1}{\sqrt{T}}.
\end{eqnarray*}
Under the same assumptions in the theorem~\ref{thm:comid-convex}, if $\phi_i(\w)$ is $L_i$-Lipschitz, and the distribution is set as $p_i=L_i/\sum^n_{j=1}L_j$, $\forall i$, then when $\eta_t$ is set as $\sqrt{2\sigma\Bpsi(\w^*,\w^1)}/(\frac{\sum^n_{i=1}L_i}{n}\sqrt{T} )$, the following inequality holds for any $T\ge 1$,
\begin{eqnarray*}
\frac{1}{T}\sum^T_{t=1} \E P(\wt)- P(\w^*) \le \sqrt{   \Bpsi(\w^*,\w^1) \frac{2}{ \sigma}} (\frac{\sum^n_{i=1}L_i}{n})\frac{1}{\sqrt{T}}.
\end{eqnarray*}
\end{cor}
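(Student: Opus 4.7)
The plan is to combine Theorem~\ref{thm:comid-convex} with a uniform bound on the expected squared importance-weighted gradient norm, and then optimize the constant step size $\eta$. For the smooth case, I would first use the standard consequence that $(1/\gamma_i)$-smoothness of $\phi_i$ together with $\|\wt\|\le R$ yields $\|\nabla\phi_i(\wt)\|_* \le R/\gamma_i$, exactly as invoked in the proof of the corollary to Theorem~\ref{thm:psgd}. With the prescribed sampling distribution $p_i^t = (R/\gamma_i)/\sum_j(R/\gamma_j)$, a direct expansion of the conditional expectation gives
\begin{eqnarray*}
\E\Big\|\frac{\nabla\phi_{i_t}(\wt)}{np_{i_t}^t}\Big\|_*^2 = \frac{1}{n^2}\sum_{i=1}^n \frac{\|\nabla\phi_i(\wt)\|_*^2}{p_i^t} \le \frac{1}{n^2}\sum_{i=1}^n (R/\gamma_i)\sum_{j=1}^n (R/\gamma_j) = \Big(\frac{\sum_{i=1}^n R/\gamma_i}{n}\Big)^2,
\end{eqnarray*}
which is independent of $t$. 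The Lipschitz case is handled identically, with $L_i$ substituted for $R/\gamma_i$ throughout and the corresponding distribution $p_i = L_i/\sum_j L_j$ producing the analogous bound $(\sum_i L_i / n)^2$.

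Plugging this uniform second-moment bound into the estimate~(\ref{eqn:comid-convex}) reduces the right-hand side to an expression of the form $\frac{A}{\eta T} + \frac{\eta B}{2\sigma}$, where $A = \Bpsi(\w^*, \w^1)$ and $B$ is either $(\sum_i R/\gamma_i / n)^2$ or $(\sum_i L_i / n)^2$. Optimizing over $\eta$ via AM--GM (or elementary calculus) yields the optimizer $\eta^{\star} = \sqrt{2\sigma A/(TB)}$, which is exactly the step size prescribed in the statement, and the corresponding minimum value $\sqrt{2AB/\sigma}\,/\sqrt{T}$ is precisely the claimed bound.

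There is no genuine obstacle here: the corollary is essentially a packaging of the variance-bound-induced step-size tuning on top of Theorem~\ref{thm:comid-convex}. The only mild care required lies in the gradient-norm bound for the smooth case, but this has already been used in the parallel corollary to Theorem~\ref{thm:psgd} and may be treated the same way.
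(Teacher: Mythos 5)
Your proposal is correct and follows essentially the same route as the paper: bound the conditional second moment $\E\|\nabla\phi_{i_t}(\wt)/(np_{i_t}^t)\|_*^2$ by $(\sum_i G_i/n)^2$ with $G_i = R/\gamma_i$ or $G_i = L_i$, substitute into the bound of Theorem~\ref{thm:comid-convex}, and insert the prescribed constant step size (which you additionally, and correctly, identify as the AM--GM optimizer of $\frac{A}{\eta T}+\frac{\eta B}{2\sigma}$). The paper simply plugs in the given $\eta$ without remarking that it is optimal, so the arguments coincide.
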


\begin{proof}
Under the same assumptions in the Theorem~\ref{thm:comid-convex}, if $\|\phi_i(\wt)\|_*\le G_i$ and $p_i^t$ is set as $\frac{G_i}{\sum^n_{j=1}G_j}$ for any $i$, then we have
\begin{eqnarray*}
\E\|\frac{\nabla \phi_{i_t}(\wt)}{n p^t_{i_t}}\|_*^2 \le\frac{(\sum^n_{i=1}G_i)^2}{n^2}.
\end{eqnarray*}
Plugging the above inequality and $\eta_t = \sqrt{2\sigma\Bpsi(\w^*,\w^1)}/(\frac{\sum^n_{i=1}G_i}{n}\sqrt{T} )$, into the inequality~(\ref{eqn:comid-convex}) gives
\begin{eqnarray}\label{eqn:comid-convex-bound-G}
\frac{1}{T}\sum^T_{t=1} \E P(\wt)- P(\w^*)\le \sqrt{   \Bpsi(\w^*,\w^1) \frac{2}{ \sigma}} (\frac{\sum^n_{i=1}G_i}{n})\frac{1}{\sqrt{T}}.
\end{eqnarray}

When $\phi_i(\w)$ is $(1/\gamma_i)$-smooth, $\|\wt\|\le R$ for any $t$, we have $\|\nabla \phi_i(\wt) \|_* \le R/\gamma_i$. Plugging $G_i=R/\gamma_i$ into the inequality~(\ref{eqn:comid-convex-bound-G}) concludes the proof of the first part.

When $\phi_i(\w)$ is $L_i$-Lipschitz, we have $\|\nabla\phi_i(\wt)\|_*\le L_i$. Plugging $G_i=L_i$ into the inequality~(\ref{eqn:comid-convex-bound-G}) concludes the proof of the second part.
\end{proof}

{\bf Remark:} If the uniform distribution is adopted, it is easy to observe that $V_t$ is bounded by $\frac{\sum^n_{i=1}(R/\gamma_i)^2}{n}$ for smooth $\phi_i(\cdot)$, and bounded by $\frac{\sum^n_{i=1}(L_i)^2}{n}$ for Lipschitz $\phi_i(\cdot)$. So the Theorem 2 will results in  $\frac{1}{T}\sum^T_{t=1}\E P(\wt)- P(\w^*)\le \sqrt{    \frac{2\Bpsi(\w^*,\w^1)\sum^n_{i=1}(R/\gamma_i)^2}{\sigma nT}}$ for smooth $\phi_i$, and  $\frac{1}{T}\sum^T_{t=1}\E P(\wt)- P(\w^*)\le \sqrt{    \frac{2\Bpsi(\w^*,\w^1)\sum^n_{i=1}(L_i)^2}{\sigma nT}}$  for Lipschitz $\phi_i$. However, according to Cauchy-Schwarz inequality,
\begin{eqnarray*}
\frac{\sum^n_{i=1}(R/\gamma_i)^2}{n}/\left( \frac{\sum^n_{i=1}R/\gamma_i}{n}\right)^2 =\frac{n\sum^n_{i=1}(R/\gamma_i)^2}{(\sum^n_{i=1}R/\gamma_i)^2} \ge 1,\quad \frac{\sum^n_{i=1}L_i^2}{n}/(\frac{\sum^n_{i=1} L_i}{n})^2 = \frac{n\sum^n_{i=1}L_i^2}{(\sum^n_{i=1}L_i)^2}\ge 1
\end{eqnarray*}
implies importance sampling does improve the convergence rate, especially when $\frac{(\sum^n_{i=1}R/\gamma_i)^2}{\sum^n_{i=1}(R/\gamma_i)^2} \ll n $, and $\frac{(\sum^n_{i=1}L_i)^2}{\sum^n_{i=1}(L_i)^2} \ll n $.

%
%

\subsection{Proximal  Stochastic Dual Coordinate Ascent with Importance Sampling}
In this section, we study the Proximal Stochastic Dual Coordinate Ascent method (prox-SDCA) with importance sampling.
Prox-SDCA works with the following dual problem of~\eqref{eqn:primal-objective}:
\begin{eqnarray}
\max_{\theta} D(\theta) := \frac{1}{n}\sum^n_{i=1}-\phi_i^*(-\theta_i)-\lambda r^*(\frac{1}{\lambda n}\sum^n_{i=1}\theta_i).
\end{eqnarray}

We assume that $r^*(\cdot)$ is continuous differentiable; the relationship between primal variable $\w$ and dual variable is
\begin{eqnarray*}
\w =\nabla r^*\left(\v(\theta) \right),\quad \v(\theta)=\frac{1}{\lambda n}\sum^n_{i=1} \theta_i.
\end{eqnarray*}

We also assume that $r(\w)$ is 1-strongly convex with respect to a norm $\|\cdot\|_{P'}$, i.e.,
\begin{eqnarray*}
r(\w + \Delta \w)\ge r(\w) + \nabla r(\w)^\top\Delta\w + \frac{1}{2}\|\Delta\w\|^2_{P'},
\end{eqnarray*}
which means that $r^*(\w)$ is 1-smooth with respect to its dual norm $\|\cdot\|_{D'}$. Namely,
\begin{eqnarray*}
r^*(\v+\Delta \v)\le h(\v; \Delta \v),
\end{eqnarray*}
where
\begin{eqnarray*}
h(\v;\Delta\v):= r^*(\v)+\nabla r^*(\v)^\top \Delta\v + \frac{1}{2}\|\Delta\v\|^2_{D'}.
\end{eqnarray*}

At the $t$-th step, traditional Proximal Stochastic Dual Coordinate Ascent (prox-SDCA)  will uniformly randomly pick $i\in\{1,\ldots,n\}$, and update the dual variable $\theta^{t-1}_i$ as follows:
\begin{eqnarray*}
\theta^t_{i}=\theta^{t-1}_i + \Delta \theta^{t-1}_i,
\end{eqnarray*}
where
\begin{eqnarray}\label{eqn:update-psdca}
\Delta \theta^{t-1}_i &=& \arg\max_{\Delta \theta_i}\left[-\frac{1}{n}\phi^*_i(-(\theta_i^{t-1}+\Delta \theta_i))-\lambda\left(\frac{1}{\lambda n}\nabla r^*(\v^{t-1})^\top\Delta\theta_i+\frac{1}{2}\|\frac{1}{\lambda n}\Delta \theta_i\|^2_{D'}\right)\right] \nonumber\\
&=& \arg\max_{\Delta \theta_i}\left[-\phi^*_i(-(\theta_i^{t-1}+\Delta \theta_i)) - (\w^{t-1})^\top\Delta \theta_i - \frac{1}{2\lambda n}\|\Delta \theta_i\|^2_{D'} )\right],
\end{eqnarray}
where $\v^{t-1}=\frac{1}{\lambda n}\sum^n_{i=1}\theta^{t-1}_i$, which is equivalent to maximizing a lower bound of the following problem:
\begin{eqnarray*}
\Delta\theta_i^{t-1}=\arg\max_{\Delta \theta_i}\left[-\frac{1}{n}\phi^*_i(-(\theta_i^{t-1}+\Delta \theta_i))-\lambda r^*(\v^{t-1}+ \frac{1}{\lambda n}\Delta \theta_i) \right].
\end{eqnarray*}

However, the optimization~\eqref{eqn:update-psdca} may not have a closed form solution, and in prox-SDCA we may adopt other update rules $\Delta \theta_i = s(\u- \theta^{t-1}_i)$ for an appropriately chosen step size parameter $s>0$ and any vector $\u\in \R^d$ such that $-\u\in\partial\phi_i(\w^{t-1})$. When  $r(\w)=\frac{1}{2}\|\w\|^2$, the proximal SDCA method is known as SDCA.

Now we will study prox-SDCA with importance sampling, which is to allow the algorithm to randomly pick $i$ according to probability $p_i$, which is the $i$-th element of $\p\in\R^n_+$, $\sum p_i=1$. Once we pick the coordinate $i$, $\theta_i$ is updated as traditional prox-SDCA. The main question we are interested in here is which $\p=(p_1,\ldots,p_n)^\top$ can optimally accelerate the convergence rate of prox-SDCA. To answer this question, we will introduce a lemma which will state the relationship between $\p$ and the convergence rate of prox-SDCA with importance sampling.

\begin{lemma}\label{lemma:dual-ascent-psdca}
Given a distribution $\p$, if assume $\phi_i$ is $(1/\gamma_i)$-smooth with norm $\|\cdot\|_P$, then for any iteration $t$ and any $s$ such that $s_i= s/(p_i n)\in [0,1],\quad \forall i$, we have
\begin{eqnarray}\label{eqn:dual-ascent-ipsdca}
\E[D(\theta^t)-D(\theta^{t-1})]\ge \frac{s}{n}\E[P(\w^{t-1})-D(\theta^{t-1})]-\frac{s}{2\lambda n^2}G^t,
\end{eqnarray}
where
\begin{eqnarray*}
G^t = \frac{1}{n}\sum^n_{i=1}(s_i R^2-\gamma_i(1-s_i)\lambda n)\E\|\u^{t-1}_i-\theta^{t-1}_i\|^2_D,
\end{eqnarray*}
$R=\sup_{\u\not=0}\|\u\|_{D'}/\|\u\|_D$, and $-\u^{t-1}_i\in \partial\phi_i(\w^{t-1})$.
\end{lemma}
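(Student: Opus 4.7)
The plan is to mirror the primal-dual analysis of prox-SDCA in~\cite{DBLP:journals/jmlr/ShaiTong13}, but generalize it so that coordinate $i$ is chosen under the distribution $\p$ rather than uniformly, and so that the per-coordinate step size $s_i=s/(np_i)$ compensates for the non-uniform sampling. The key identity that makes this work is $p_i s_i = s/n$, which is constant in $i$; this is precisely what allows the primal--dual gap to appear after taking expectation over $\p$.

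First I would exploit the maximization definition of $\Delta\theta_i^{t-1}$ in~\eqref{eqn:update-psdca}. Using the $1$-smoothness of $r^\ast$ with respect to $\|\cdot\|_{D'}$ to upper bound $r^\ast(\v^{t-1}+\tfrac{1}{\lambda n}\Delta\theta_i)$, one gets the chain
\[
n[D(\theta^{t})-D(\theta^{t-1})]\ \ge\ \max_{\Delta\theta_i}\Bigl[-\phi_i^\ast(-(\theta_i^{t-1}+\Delta\theta_i))+\phi_i^\ast(-\theta_i^{t-1}) -(\w^{t-1})^\top\Delta\theta_i - \tfrac{1}{2\lambda n}\|\Delta\theta_i\|_{D'}^2\Bigr].
\]
Since the right-hand side is a maximum, I may substitute the particular candidate $\Delta\theta_i=s_i(\u_i^{t-1}-\theta_i^{t-1})$, which is admissible because $s_i\in[0,1]$ by assumption and $\theta_i+s_i(\u_i-\theta_i)$ lies on the segment between $\theta_i$ and $\u_i$.

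Next I would unpack the substituted bound using three standard ingredients. (i) Since $\phi_i$ is $(1/\gamma_i)$-smooth w.r.t.\ $\|\cdot\|_P$, the dual function $\phi_i^\ast$ is $\gamma_i$-strongly convex w.r.t.\ $\|\cdot\|_D$ (Proposition~1); applying the convex-combination form of strong convexity to $-(\theta_i+s_i(\u_i-\theta_i))=(1-s_i)(-\theta_i)+s_i(-\u_i)$ produces the term $\tfrac{\gamma_i s_i(1-s_i)}{2}\|\u_i^{t-1}-\theta_i^{t-1}\|_D^2$. (ii) Using $-\u_i^{t-1}\in\partial\phi_i(\w^{t-1})$, Fenchel-Young as an equality gives $\phi_i^\ast(-\u_i^{t-1})=-(\w^{t-1})^\top\u_i^{t-1}-\phi_i(\w^{t-1})$, which lets me convert the $\phi_i^\ast(-\u_i^{t-1})$ term and the inner product $(\w^{t-1})^\top\Delta\theta_i$ into $s_i[\phi_i(\w^{t-1})+\phi_i^\ast(-\theta_i^{t-1})+(\w^{t-1})^\top\theta_i^{t-1}]$. (iii) The quadratic penalty is bounded through $\|\Delta\theta_i\|_{D'}^2\le s_i^2 R^2\|\u_i^{t-1}-\theta_i^{t-1}\|_D^2$ by the definition of $R$.

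Finally I would take the expectation $\E_i[\cdot]=\sum_i p_i[\cdot]$ and use $p_i s_i=s/n$ in two places. On the ``Fenchel'' line, this collapses the weighted sum to
\[
\tfrac{s}{n}\sum_i\bigl[\phi_i(\w^{t-1})+\phi_i^\ast(-\theta_i^{t-1})+(\w^{t-1})^\top\theta_i^{t-1}\bigr]=s\bigl[P(\w^{t-1})-D(\theta^{t-1})\bigr],
\]
where the last equality uses $\w^{t-1}=\nabla r^\ast(\v^{t-1})$ together with the Fenchel identity $(\w^{t-1})^\top\v^{t-1}=r^\ast(\v^{t-1})+r(\w^{t-1})$ and $\sum_i\theta_i^{t-1}=\lambda n\v^{t-1}$. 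On the quadratic line, the factors $p_i s_i^2=(s/n)s_i$ and $p_i s_i(1-s_i)=(s/n)(1-s_i)$ produce the sum
\[
-\tfrac{s}{2n}\sum_i\bigl(\tfrac{s_iR^2}{\lambda n}-\gamma_i(1-s_i)\bigr)\|\u_i^{t-1}-\theta_i^{t-1}\|_D^2.
\]
Dividing the whole inequality by $n$ and recognizing the definition of $G^t$ gives exactly~\eqref{eqn:dual-ascent-ipsdca}.

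The main obstacle is bookkeeping rather than any genuine difficulty: one has to be careful that the $1/(np_i)$ importance weight baked into $s_i$ combines correctly with $p_i$ in the expectation so that the gap $P(\w^{t-1})-D(\theta^{t-1})$ emerges cleanly, and that the relative scaling between the two $n$'s (the $1/n$ in front of $\sum_i\phi_i$ and the $1/(\lambda n)$ in $\v$) matches the factor $1/(2\lambda n^2)$ and the $1/n$ outside the sum in the definition of $G^t$. Aside from this accounting, every step is either the smoothness/strong-convexity duality from the preliminaries or the Fenchel-Young identity at the subgradient $-\u_i^{t-1}$.
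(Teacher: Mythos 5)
Your proposal is correct and follows essentially the same route as the paper's proof: lower-bounding the dual increase via the $1$-smoothness of $r^*$, substituting the candidate $\Delta\theta_i=s_i(\u_i^{t-1}-\theta_i^{t-1})$ into the maximum, invoking $\gamma_i$-strong convexity of $\phi_i^*$, the Fenchel--Young equality at the subgradient, and the norm-conversion constant $R$, then taking expectation with $p_is_i=s/n$ and identifying the duality gap. The bookkeeping in your final displays also matches the paper's constants exactly.
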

\begin{proof}
Since only the $i$-th element of $\theta$ is updated, the improvement in the dual objective can written as
\begin{eqnarray}\label{eqn:stochastic-dual-ascent-ipsdca}
&&n[D(\theta^t)-D(\theta^{t-1})]\nonumber\\
&&=\left[-\phi^*_i(-\theta^t_i)-\lambda n r^*(\v^{t-1} +\frac{1}{\lambda n}\Delta \theta^{t-1}_i)\right]-\left[-\phi_i^*(-\theta^{t-1}_i)-\lambda n r^*(\v^{t-1})\right]\nonumber\\
&&\ge \underbrace{\left[-\phi^*_i(-\theta^t_i)-\lambda n h(\v^{t-1} ;\frac{1}{\lambda n}\Delta \theta^{t-1}_i)\right]}_{A_i}-\underbrace{\left[-\phi_i^*(-\theta^{t-1}_i)-\lambda n r^*(\v^{t-1})\right]}_{B_i}.
\end{eqnarray}
By the definition of the update, for all $s_i\in[0,1]$  we have
\begin{eqnarray}\label{eqn:Ai}
A_i&=&\max_{\Delta\theta_i}-\phi_i^*(-(\theta^{t-1}_i+\Delta\theta_i))-\lambda n h(\v^{t-1} ;\frac{1}{\lambda n}\Delta \theta_i)\nonumber\\
&\ge& -\phi_i^*(-(\theta_i^{t-1}+s_i(\u_i^{t-1}-\theta_i^{t-1})))-\lambda n h(\v^{t-1}; \frac{s_i}{\lambda n}(\u_i^{t-1}-\theta_i^{t-1})) .
\end{eqnarray}
From now on, we drop the superscript $(t-1)$ to simplify our discussion. Because $\phi_i$ is $(1/\gamma_i)$-smooth with $\|\cdot\|_P$, $\phi_i^*$ is $\gamma_i$-strongly convex with $\|\cdot\|_D$, and we have that
\begin{eqnarray*}
\phi_i^*(-(\theta_i+s_i(\u_i-\theta_i)))=\phi_i^*(s_i(-\u_i)+(1-s_i)(-\theta_i))\le s_i\phi_i^*(-\u_i)+(1-s_i)\phi_i^*(-\theta_i) - \frac{\gamma_i}{2}s_i(1-s_i)\|\u_i-\theta_i\|^2_D .
\end{eqnarray*}
Combining the above two inequalities, we obtain
\begin{eqnarray*}
&&\hspace{-0.3in}A_i\\
&&\hspace{-0.3in}\ge -s_i\phi_i^*(-\u_i) - (1-s_i)\phi_i^*(-\theta_i)+\frac{\gamma_i}{2}s_i(1-s_i)\|\u_i-\theta_i\|^2_D-\lambda n h(\v; \frac{s_i}{\lambda n}(\u_i-\theta_i))\\
&&\hspace{-0.3in}= -s_i\phi_i^*(-\u_i) - (1-s_i)\phi_i^*(-\theta_i)+\frac{\gamma_i}{2}s_i(1-s_i)\|\u_i-\theta_i\|^2_D-\lambda n r^*(\v)-s_i\w^\top(\u_i-\theta_i)-\frac{s_i^2}{2\lambda n}\|\u_i-\theta_i\|^2_{D'}\\
&&\hspace{-0.3in}\ge -s_i\phi_i^*(-\u_i) - (1-s_i)\phi_i^*(-\theta_i)+\frac{\gamma_i}{2}s_i(1-s_i)\|\u_i-\theta_i\|^2_D-\lambda n r^*(\v)-s_i\w^\top(\u_i-\theta_i)-\frac{s_i^2}{2\lambda n}R^2\|\u_i-\theta_i\|^2_{D}\\\\
&&\hspace{-0.3in}=\underbrace{-s_i(\phi_i^*(-\u_i)+\u_i^\top\w)}_{s_i\phi_i(\w)}+\underbrace{(-\phi_i^*(-\theta_i)-\lambda nr^*(\v))}_{B_i}+\frac{s_i}{2}(\gamma_i(1-s_i)-\frac{s_i R^2}{\lambda n})\|\u_i-\theta_i\|^2_D + s_i (\phi_i^*(-\theta_i)+\theta_i^\top\w),
\end{eqnarray*}
where we used $-\u^{t-1}_i\in \partial\phi_i(\w^{t-1})$ which yields $\phi_i^*(-\u_i)=-\u_i^\top\w - \phi_i(\w_i)$. Therefore
\begin{eqnarray}\label{eqn:bound-A-minus-B}
A_i-B_i\ge s_i\Big[\phi_i(\w) +\phi_i^*(-\theta_i)+\theta_i^\top \w +\left(\frac{\gamma_i(1-s_i)}{2}-\frac{s_i R^2}{2\lambda n}\right)\|\u_i-\theta_i\|^2_D \Big].
\end{eqnarray}
Therefore, if we take expectation of inequality~\eqref{eqn:bound-A-minus-B}  with respect to the choice of $i$ and use the fact $s_i = s/(p_i n)$, we obtain that
\begin{eqnarray*}
\frac{1}{s}\E_t [A_i-B_i] = \frac{1}{n}\sum^n_{i=1}\frac{1}{s_i}[A_i-B_i]\ge\frac{1}{n}\sum^n_{i=1}\Big[\phi_i(\w)+\phi_i^*(-\theta_i)+\theta_i^\top \w +\left(\frac{\gamma_i(1-s_i)}{2}-\frac{s_i R^2}{2\lambda n}\right)\|\u_i-\theta_i\|^2_D\Big].
\end{eqnarray*}
Next note that with $\w=\nabla r^*(\v)$, we have $r(\w)+r^*(\v)=\w^\top\v$. Therefore
\begin{eqnarray*}
P(\w)-D(\theta) &=& \frac{1}{n}\sum^n_{i=1}\phi_i(\w) + \lambda r(\w) - \left(-\frac{1}{n}\sum^n_{i=1}\phi_i^*(-\theta_i)-\lambda r^*(\v)\right)\\
&=&\frac{1}{n}\sum^n_{i=1}\phi_i(\w)+\frac{1}{n}\sum^n_{i=1}\phi_i^*(-\theta_i)+\lambda\w^\top\v\\
&=&\frac{1}{n}\sum^n_{i=1}(\phi_i(\w)+\phi_i^*(-\theta_i)+\theta_i^\top\w).
\end{eqnarray*}
Taking expectation of inequality~\eqref{eqn:stochastic-dual-ascent-ipsdca} and plugging the above two equations give
\begin{eqnarray*}
\frac{n}{s}\E[D(\theta^t)-D(\theta^{t-1})]\ge \frac{1}{s}\E[A_i-B_i]\ge \E[P(\w^{t-1})-D(\theta^{t-1})]- \frac{G^t}{2\lambda n},
\end{eqnarray*}
where the equality $G^t = \frac{1}{n}\sum^n_{i=1}(s_i R^2-\gamma_i(1-s_i)\lambda n)\E\|\u^{t-1}_i-\theta^{t-1}_i\|^2_D$ is used. Multiplying both sides by $s/n$ concludes the proof.
\end{proof}
For many interesting cases, it is easy to estimate $R=\sup_{\u\not=0}\|\u\|_{D'}/\|\u\|_D$. For example, if $p>r>0$, then $\|\w\|_p\le\|\w\|_r\le d^{(1/r-1/p)}\|\w\|_p$ for any $\w\in\R^d$.

\subsubsection{Algorithm}
According to Lemma~\ref{lemma:dual-ascent-psdca}, to maximize the dual ascent for the $t$-th update, we should choose $s$ and $\p$ as the solution of the following optimization
\begin{eqnarray*}
\max_{s/(p_i n)\in[0,1], \p\in\R^n_+,\sum^n_{i=1}p_i=1} \frac{s}{n}\E[P(\w^{t-1})-D(\theta^{t-1})]-\frac{s}{n^2}\frac{G^t}{2\lambda}.
\end{eqnarray*}
However, because this optimization problem is difficult to solve, we choose to relax it as follows:
\begin{eqnarray*}
&&\max_{s/(p_i n)\in[0,1], \p\in\R^n_+, \sum^n_{i=1}p_i=1} \frac{s}{n}\E[P(\w^{t-1})-D(\theta^{t-1})]-\frac{s}{n^2}\frac{G^t}{2\lambda}\\
&&\ge\max_{s/(p_i n)\in[0,\frac{\lambda n \gamma_i}{R^2+\lambda n \gamma_i}], \p\in\R^n_+, \sum^n_{i=1}p_i=1} \frac{s}{n}\E[P(\w^{t-1})-D(\theta^{t-1})]-\frac{s}{n^2}\frac{G^t}{2\lambda}\\
&&\ge \max_{s/(p_i n)\in[0,\frac{\lambda n \gamma_i}{R^2+\lambda n \gamma_i}], \p\in\R^n_+, \sum^n_{i=1}p_i=1} \frac{s}{n}\E[P(\w^{t-1})-D(\theta^{t-1})].
\end{eqnarray*}
where the last inequality used $G^t = \frac{1}{n}\sum^n_{i=1}(s_i R^2-\gamma_i(1-s_i)\lambda n)\E\|\u^{t-1}_i-\theta^{t-1}_i\|^2_D\le 0$, since $s_i=s/(p_i n)\le \frac{\lambda n \gamma_i}{R^2+\lambda n \gamma_i}$. To optimize the final relaxation, we have the following proposition
\begin{prop}\label{prop:distribution-ipsdca}
The solution for the optimization
\begin{eqnarray*}
\max_{s, \p}\  s\quad s.t.\ s/(p_i n)\in[0, \frac{\lambda n\gamma_i}{R^2+ \lambda n\gamma_i}],\ p_i\ge 0,\  \sum^n_{i=1}p_i =1,
\end{eqnarray*}
is
\begin{eqnarray}\label{eqn:distribution-ipsdca}
s=\frac{n}{n+\sum^n_{i=1}\frac{R^2}{\lambda n \gamma_i}},\ p_i = \frac{1+\frac{R^2}{\lambda n\gamma_i}}{n + \sum^n_{j=1}\frac{R^2}{\lambda n \gamma_j}}.
\end{eqnarray}
\end{prop}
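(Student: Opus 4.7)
The plan is to reduce the problem to a single tight constraint on $s$ by eliminating the $p_i$'s. First I would rewrite the box constraint $s/(p_i n) \in [0, \lambda n \gamma_i/(R^2 + \lambda n \gamma_i)]$ in its equivalent form as a pointwise lower bound on $p_i$: since $s \ge 0$, the constraint is equivalent to
\begin{eqnarray*}
p_i \ge \frac{s}{n}\left(1 + \frac{R^2}{\lambda n \gamma_i}\right), \quad \forall i \in \{1,\dots,n\},
\end{eqnarray*}
and this automatically implies $p_i \ge 0$, so the non-negativity constraint is redundant.

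Next, I would sum these $n$ inequalities and use the simplex constraint $\sum_{i=1}^n p_i = 1$ to obtain
\begin{eqnarray*}
1 = \sum_{i=1}^n p_i \ge \frac{s}{n}\sum_{i=1}^n\left(1 + \frac{R^2}{\lambda n \gamma_i}\right) = \frac{s}{n}\left(n + \sum_{i=1}^n \frac{R^2}{\lambda n \gamma_i}\right),
\end{eqnarray*}
which gives the upper bound $s \le n / \bigl(n + \sum_{i=1}^n R^2/(\lambda n \gamma_i)\bigr)$, matching the value claimed in the proposition.

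Finally, I would verify attainability: when $s$ equals this upper bound, the summed inequality becomes an equality, and since each summand is individually $\ge$ the corresponding lower bound, each individual inequality must also be tight. Setting
\begin{eqnarray*}
p_i = \frac{s}{n}\left(1 + \frac{R^2}{\lambda n \gamma_i}\right) = \frac{1 + R^2/(\lambda n \gamma_i)}{n + \sum_{j=1}^n R^2/(\lambda n \gamma_j)}
\end{eqnarray*}
therefore gives a feasible point that achieves the bound, so it is optimal. There is no real obstacle here; the argument is a one-line LP duality/tightness observation once the box constraint is turned into a lower bound on $p_i$. The only point that deserves a sentence of care is noting that $s \ge 0$ (which is implicit since we are maximizing a nonnegative quantity that is feasible at $s = 0$) is what lets us flip the $s/(p_i n)$ constraint into the linear form above without worrying about sign.
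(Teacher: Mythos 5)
Your argument is correct and complete: converting the box constraint into the lower bound $p_i \ge \frac{s}{n}\bigl(1+\frac{R^2}{\lambda n\gamma_i}\bigr)$, summing against the simplex constraint to get the upper bound on $s$, and checking attainability is exactly the intended reasoning. The paper itself omits the proof of this proposition as ``simple,'' so there is nothing to compare against; your write-up is the natural one-line LP tightness argument that the authors had in mind.
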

We omit the proof of this proposition since it is simple. Given that $\phi_i$ is ($1 /\gamma_i$)-smooth, $\forall i\in\{1,\ldots,n\}$, the sampling distribution should be set as in~\eqref{eqn:distribution-ipsdca}. Although $s$ can be set as in~\eqref{eqn:distribution-ipsdca}, it can also be optimized by maximizing some terms in the analysis, such as $A_i$, or the right hand side of inequality~\eqref{eqn:Ai}, or inequality~\eqref{eqn:bound-A-minus-B}, which can all guarantee the dual ascent $\E[D(\theta^t)-D(\theta^{t-1})]$ is no worse the the one
obtained by setting $s$ as in \eqref{eqn:distribution-ipsdca}.

When $\gamma_i=0$, the above distribution in the equation~\eqref{eqn:distribution-ipsdca} is not valid. To solve this problem, we  combine the facts
\begin{eqnarray*}
 P(\w^{t-1})-D(\theta^{t-1})\ge D(\theta^*)-D(\theta^{t-1}):=\epsilon^{t-1}_D,
\end{eqnarray*}
where $\theta^*$ the optimal solution of the dual problem $\max_\theta D(\theta)$,
\[
D(\theta^t)-D(\theta^{t-1})=\epsilon^{t-1}_D - \epsilon^t_D,
\]
and the inequality~~\eqref{eqn:dual-ascent-ipsdca}, to obtain
\begin{eqnarray}
\E[\epsilon^t_D]\le(1-\frac{s}{n})\E[\epsilon^{t-1}_D]+\frac{s}{2\lambda n^2}G^t.
\end{eqnarray}

According to this inequality, although every $\gamma_i=0$, if we further assume every $\phi_i$ is $L_i$-Lipschitz, then
\begin{eqnarray}\label{eqn:Gt-bound}
G^t = \frac{1}{n}\sum^n_{i=1}(s_i R^2-\gamma_i(1-s_i)\lambda n)\E\|\u^{t-1}_i-\theta^{t-1}_i\|^2_D\le \frac{4R^2s}{n^2}\sum^n_{i=1}\frac{1}{p_i}L_i^2,
\end{eqnarray}
where we used $s_i=s/(n p_i)$, $\|\u^{t-1}_i\|\le L_i$ and $\|\theta^{t-1}_i\|\le L_i$, since $-\u^{t-1}_i,-\theta^{t-1}_i\in\partial \phi_i(\w^{t-1})$.

Combining the above two inequalities results in
 \begin{eqnarray}\label{eqn:dual-ascent-lipschitz}
\E[\epsilon^t_D]\le (1-\frac{s}{n})\E[\epsilon^{t-1}_D]+\frac{s}{2\lambda n^2}\frac{4R^2s}{n^2}\sum^n_{i=1}\frac{1}{p_i}L_i^2.
\end{eqnarray}

According to the above inequality,  to minimize the $t$-th duality gap, we should choose a proper distribution to optimize the following problem
\begin{eqnarray*}
\min_{\p\in\R^n_+,\sum^n_{i=1}p_i=1}\sum^n_{i=1}\frac{1}{p_i}L_i^2,
\end{eqnarray*}
 for which the optimal distribution is obviously
\[
p_i=\frac{L_i}{\sum^n_{j=1}L_j}.
\]
Because $s_i=s/(np_i)\in[0,1]$,  the above distribution furthermore indicates
\[
s\in\bigcap^n_{i=1}[0, np_i]=\left[0,  \frac{n L_{min}}{\sum^n_{j=1}L_j}\right]:=[0, \rho]
\]
where $L_{min}=\min\{L_1,L_2,\ldots,L_n\}$ and $\rho\le 1$.

 In summary, the prox-SDCA with importance sampling can be summarized as in the algorithm~\ref{alg:Iprox-SDCA}.

\begin{algorithm}[ht]
\caption{Proximal Stochastic Dual Coordinate Ascent with Importance Sampling (Iprox-SDCA)} \label{alg:Iprox-SDCA}
\begin{algorithmic}
\STATE {\bf Input}: $\lambda> 0$, $R=\sup_{\u\not=0}\|\u\|_{D'}/\|\u\|_{D}$, norms $\|\cdot\|_D$, $\|\cdot\|_{D'}$, $\gamma_1,\ldots,\gamma_n > 0$, or $L_1,\ldots,L_n\ge 0$.
\STATE {\bf Initialize}: $\theta^0_i=0$, $\w^0=\nabla r^*(0)$, $p_i= \frac{1+\frac{R^2}{\lambda n\gamma_i}}{n+\sum^n_{j=1}\frac{R^2}{\lambda n\gamma_j}}$, or $p_i=\frac{L_i}{\sum^n_{j=1}L_j}$, $\forall i\in\{1,\ldots,n\}$.
\FOR{$t=1,\ldots, T$}
\STATE Sample $i_t$ from $\{1,\ldots,n\}$ based on $\p$;
\STATE Calculate $\Delta \theta^{t-1}_{i_t}$ using any of the following options (or achieving larger dual objective than one of the options);
\STATE \textbf{Option I:}
\STATE $\Delta \theta^{t-1}_{i_t} =\arg\max_{\Delta \theta_{i_t}}\left[-\phi^*_{i_t}(-(\theta_{i_t}^{t-1}+\Delta \theta_{i_t})) - (\w^{t-1})^\top\Delta \theta_{i_t} - \frac{1}{2\lambda n}\|\Delta \theta_{i_t}\|^2_{D'} \right] $;
\STATE \textbf{Option II:}
\STATE Let $\u$ be s.t. $-\u\in \partial \phi_{i_t}(\w^{t-1})$;
\STATE Let $\z= \u-\theta^{t-1}_{i_t}$;
\STATE Let $s_{i_t}=\arg\max_{s\in[0,1]}\left[-\phi^*_{i_t}(-(\theta_{i_t}^{t-1}+s\z)) - s(\w^{t-1})^\top\z- \frac{s^2}{2\lambda n}\|\z\|^2_{D'} \right]$;
\STATE Set $\Delta\theta^{t-1}_{i_t} = s_{i_t}\z$;
\STATE \textbf{Option III:}
\STATE Same as Option II, but replace the definition of $s_{i_t}$ as follows:
\STATE Let $s_{i_t}=\frac{\phi_{i_t}(\w^{t-1})+\phi^*_{i_t}(-\theta^{t-1}_{i_t})+(\w^{t-1})^\top\theta^{t-1}_{i_t}+\frac{\gamma_{i_t}}{2}\|\z\|^2_D}{\|\z\|^2_D(\gamma_{i_t}+R^2/(\lambda n))}$;
\STATE \textbf{Option  IV (only for Lipschitz losses):}
\STATE Same as Option III, but replace $\|\z\|^2_D$ with $4L^2_{i_t}$;
\STATE \textbf{Option V (only for smooth losses):}
\STATE $\Delta \theta^{t-1}_{i_t}=\frac{n}{n+\sum^n_{i=1}\frac{R^2}{\lambda n \gamma_i}}\left(-\nabla\phi_{i_t}(\w^{t-1})-\theta^{t-1}_{i_t}\right)  $;
\STATE $\theta^{t}_{i_t}=\theta^{t-1}_{i_t}+\Delta\theta^{t-1}_{i_t}$;
\STATE $\v^t=\v^{t-1}+\frac{1}{\lambda n}\Delta\theta_{i_t}^{t-1}$;
\STATE $\wt = \nabla r^*(\v^t)$;
\ENDFOR
\end{algorithmic}
\end{algorithm}
{\bf Remark:} It is easy to check the first three options can do no worse than the Option IV for Lipschitz losses and Option V for smooth losses.
Specifically, option I is to optimize $A_i$ directly. Option II only requires choosing $\Delta\theta_i=s(\u^{t-1}_i-\theta^{t-1}_i)$ and then chooses $s$ to optimize a lower bound of $A_i$, i.e., the right hand side of inequality~\eqref{eqn:Ai}. Option III is similar with option II, and chooses $s$ to optimize the bound on the right hand side of~\eqref{eqn:bound-A-minus-B}. As a result, all the first three options can do no worse than choosing  the optimal $s$ for the Lemma~\ref{lemma:dual-ascent-psdca}. Option IV replace $\|\z\|^2_D$ with its upper bound $4L^2_{i_t}$, so that the inequality~\eqref{eqn:dual-ascent-lipschitz} is still valid. Option V is similar with options II and III, and chooses $s=\frac{n}{n+\sum^n_{i=1}\frac{R^2}{\lambda n \gamma_i}}$ as in the Proposition~\ref{prop:distribution-ipsdca}, so that $G^t\le 0$.

\subsubsection{Analysis}
In this subsection we analyze the convergence behavior of the proposed algorithm. Before presenting the theoretical results, we will make several assumptions without loss of generality:
\begin{itemize}
\item for the loss functions: $\phi_i(0)\le 1$, and $\phi_i(\w)\ge 0,\ \forall \w$, and
\item for the regularizer: $r(0)=0$ and $r(\w)\ge 0,\ \forall \w$.
\end{itemize}

Under the above assumptions,  we have the following theorem for the expected duality gap of $\E[P(\wt)-D(\theta^T)]$.
\begin{thm}
Assume  $\phi_i$ is ($1/\gamma_i$)-smooth $\forall i\in\{1,\ldots,n\}$ and set $p_i=(1+\frac{R^2}{\lambda n\gamma_i})/(n+\sum^n_{j=1}\frac{R^2}{\lambda n\gamma_j})$, for all $i\in\{1,\dots,n\}$. To obtain an expected duality gap of $\E[P(\wt)-D(\theta^T)]\le \epsilon_P$ for the proposed Proximal SDCA with importance sampling, it suffices to have a total number of iterations of
\begin{eqnarray*}
T \ge (n + \sum^n_{i=1}\frac{R^2}{\lambda n\gamma_i})\log\left((n+\sum^n_{i=1}\frac{R^2}{\lambda n\gamma_i})\frac{1}{\epsilon_P}\right).
\end{eqnarray*}
\end{thm}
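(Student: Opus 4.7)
The plan is to combine Lemma~\ref{lemma:dual-ascent-psdca} with the specific importance sampling distribution from Proposition~\ref{prop:distribution-ipsdca} to obtain a linear contraction on the dual suboptimality, and then use the same lemma one more time to convert that bound into a bound on the duality gap.

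First, I would verify that under the stated choice $p_i = (1+R^2/(\lambda n\gamma_i))/(n+\sum_j R^2/(\lambda n\gamma_j))$ together with $s = n/(n+\sum_i R^2/(\lambda n\gamma_i))$, the per-coordinate step sizes satisfy $s_i = s/(np_i) = \lambda n\gamma_i/(R^2+\lambda n\gamma_i)$, which in turn forces $s_i R^2 - \gamma_i(1-s_i)\lambda n \le 0$ for every $i$, hence $G^t \le 0$. Plugging this into Lemma~\ref{lemma:dual-ascent-psdca} yields
\begin{eqnarray*}
\E[D(\theta^t)-D(\theta^{t-1})] \ge \frac{s}{n}\,\E[P(\w^{t-1})-D(\theta^{t-1})].
\end{eqnarray*}
Since $P(\w^{t-1})-D(\theta^{t-1}) \ge D(\theta^*)-D(\theta^{t-1}) = \epsilon_D^{t-1}$ and $D(\theta^t)-D(\theta^{t-1}) = \epsilon_D^{t-1}-\epsilon_D^t$, this rearranges to $\E[\epsilon_D^t] \le (1-s/n)\E[\epsilon_D^{t-1}]$, and iterating gives $\E[\epsilon_D^T] \le (1-s/n)^T \epsilon_D^0 \le e^{-sT/n}\epsilon_D^0$.

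Next I would bound the initial dual suboptimality using the standing assumptions. Weak duality and optimality give $D(\theta^*) \le P(\w^*) \le P(0) = \tfrac{1}{n}\sum_i \phi_i(0) + \lambda r(0) \le 1$. For $D(0) = -\tfrac{1}{n}\sum_i\phi_i^*(0) - \lambda r^*(0)$, the hypotheses $\phi_i \ge 0$ and $r \ge 0$ imply $\phi_i^*(0) = -\inf_\w \phi_i(\w) \le 0$ and $r^*(0) = -\inf_\w r(\w) \le 0$, so $D(0) \ge 0$. Therefore $\epsilon_D^0 = D(\theta^*)-D(0) \le 1$, and $\E[\epsilon_D^T] \le e^{-sT/n}$.

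The crucial step, and the one most likely to be overlooked, is converting the dual suboptimality bound into a duality gap bound. I would reapply Lemma~\ref{lemma:dual-ascent-psdca} with $t$ replaced by $T+1$ to get
\begin{eqnarray*}
\E[P(\w^T)-D(\theta^T)] \le \frac{n}{s}\,\E[\epsilon_D^T - \epsilon_D^{T+1}] \le \frac{n}{s}\,\E[\epsilon_D^T] \le \frac{n}{s}\,e^{-sT/n}.
\end{eqnarray*}
Substituting $n/s = n + \sum_i R^2/(\lambda n\gamma_i)$, the right-hand side is at most $\epsilon_P$ whenever $T \ge (n/s)\log((n/s)/\epsilon_P)$, which is exactly the iteration complexity claimed. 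The one routine but non-obvious part here is the indexing: one needs the bound $\E[P(\w^{t-1})-D(\theta^{t-1})] \le (n/s)\E[\epsilon_D^{t-1}-\epsilon_D^t]$ applied at $t=T+1$, which legitimately bounds the duality gap at iteration $T$ in terms of the already-controlled quantity $\E[\epsilon_D^T]$. Everything else is a routine algebraic rearrangement.
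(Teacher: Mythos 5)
Your proposal is correct and follows essentially the same route as the paper's own proof: invoke Lemma~\ref{lemma:dual-ascent-psdca} with the distribution and step size of Proposition~\ref{prop:distribution-ipsdca} to get $G^t\le 0$ and hence the linear contraction $\E[\epsilon_D^t]\le(1-s/n)\E[\epsilon_D^{t-1}]$, bound $\epsilon_D^0\le P(0)-D(0)\le 1$, and then reapply the lemma one step ahead to convert $\E[\epsilon_D^T]$ into a bound on the expected duality gap. The indexing point you flag (applying the ascent inequality at step $T+1$ to control the gap at iteration $T$) is exactly how the paper handles it as well.
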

\begin{proof}
Given the distribution $\p$ and step size $s$ in equation~\eqref{eqn:distribution-ipsdca}, since $\phi_i$ is ($1/\gamma_i$)-smooth, according to Lemma~\ref{lemma:dual-ascent-psdca}, we have
\begin{eqnarray*}
(s_i R^2- \gamma_i(1-s_i)\lambda n)\le 0,
\end{eqnarray*}
and hence $G^t\le 0$ for all $t$. By Lemma~\ref{lemma:dual-ascent-psdca}, this yields
\begin{eqnarray}\label{eqn:dual-ascent-psdca}
\E[D(\theta^t)-D(\theta^{t-1})]\ge \frac{s}{n}\E[P(\w^{t-1})-D(\theta^{t-1})].
\end{eqnarray}
Furthermore since
\begin{eqnarray*}
 P(\w^{t-1})-D(\theta^{t-1})\ge D(\theta^*)-D(\theta^{t-1}):=\epsilon^{t-1}_D,
\end{eqnarray*}
where $\theta^*$ the optimal solution of the dual problem and $D(\theta^t)-D(\theta^{t-1})=\epsilon^{t-1}_D - \epsilon^t_D$, we obtain that
\begin{eqnarray}\label{eqn:convergence-psdca}
\E[\epsilon^t_D] \le \left(1-\frac{s}{n}\right)\E[\epsilon^{t-1}_D]\le \left(1-\frac{s}{n}\right)^t\E[\epsilon^0_D].
\end{eqnarray}
In addition, since $P(0)=\frac{1}{n}\sum^n_{i=1}\phi_i(0)+\lambda r(0)\le 1$ and
\begin{eqnarray*}
D(0)=\frac{1}{n}\sum^n_{i=1}-\phi_i^*(0)-\lambda r^*(0) = \frac{1}{n}\sum^n_{i=1}-\max_{\u_i}(0-\phi_i(\u_i))-\max_{\u}(0-r(\u))=\frac{1}{n}\sum^n_{i=1}\min_{\u_i}\phi_i(\u_i)+\lambda r(\u)\ge 0,
\end{eqnarray*}
we have $\epsilon^0_D\le P(0)-D(0)\le 1$. Combining this with inequality~\eqref{eqn:convergence-psdca}, we obtain
\begin{eqnarray*}
\E[\epsilon^t_D]\le (1-\frac{s}{n})^t \le \exp(-\frac{st}{n})=\exp(-\frac{t}{n+\sum^n_{i=1}\frac{R^2}{\lambda n \gamma_i}}).
\end{eqnarray*}
where the equality $s=\frac{n}{n+\sum^n_{i=1}\frac{R^2}{\lambda n \gamma_i}}$ in equation~\eqref{eqn:distribution-ipsdca} is used.
According to the above inequality, by setting
\begin{eqnarray*}
t\ge\left(n+\sum^n_{i=1}\frac{R^2}{\lambda n \gamma_i}\right)\log(\frac{1}{\epsilon_D}) ,
\end{eqnarray*}
the proposed algorithm will achieve $\E[\epsilon^t_D]\le \epsilon_D$. Furthermore, according to inequality~\eqref{eqn:dual-ascent-psdca}
\begin{eqnarray*}
\E[P(\wt)-D(\theta^t)]\le \frac{n}{s}\E[\epsilon^t_D-\epsilon^{t+1}_D]\le \frac{n}{s}\E[\epsilon^t_D],
\end{eqnarray*}
by setting
\begin{eqnarray*}
t\ge(n+\sum^n_{i=1}\frac{R^2}{\lambda n \gamma_i})\log\left((n+\sum^n_{i=1}\frac{R^2}{\lambda n\gamma_i})\frac{1}{\epsilon_P}\right),
\end{eqnarray*}
we will obtain $\E[\epsilon^t_D]\le \frac{s}{n}\epsilon_P$ and $\E[P(\wt)-D(\theta^t)]\le \epsilon_P$.
\end{proof}
\textbf{Remark:} If we adopt uniform sampling, i.e., $p_i=1/n$ $\forall i$, then we have to use the same $\gamma$ for all $\phi_i$, which should be $\gamma_{min} = \min\{\gamma_1,\ldots,\gamma_n\}$ according to the analysis. Once replacing $\gamma_i$ with $\gamma_{min}$, this theorem will recover the conclusion in the theorem 1 of \cite{shalev2012proximal}, i.e., $T\ge (n+\frac{R^2}{\lambda \gamma_{min}})\log\left((n+\frac{R^2}{\lambda  \gamma_{min}})\frac{1}{\epsilon_P}\right)$ . However,
\begin{eqnarray*}
\frac{n+\frac{R^2}{\lambda \gamma_{min}}}{n+ \sum^n_{i=1}\frac{R^2}{\lambda \gamma_i n}}=
\frac{n\lambda \gamma_{min}+R^2}{n\lambda \gamma_{min}+ \frac{R^2}{n}\sum^n_{i=1}\frac{ \gamma_{min}}{ \gamma_i }}\ge 1,
\end{eqnarray*}
implies importance sampling does improve convergence, especially when $\sum^n_{i=1}\frac{ \gamma_{min}}{ \gamma_i } \ll  n $.

For non-smooth loss functions, the convergence rate for Proximal SDCA with importance sampling is given below.
\begin{thm}
Consider the proposed proximal SDCA with importance sampling. Assume that $\phi_i$ is $L_i$-Lipschitz and set $p_i=L_i/\sum^n_{j=1}L_j$, $\forall i\in\{1,\ldots,n\}$. To obtain an expected duality gap of $\E[P(\bar{\w})-D(\bar{\theta)}]\le\epsilon_P$ where $\bar{\w}=\frac{1}{T-T_0}\sum^T_{t=T_0+1}\w^{t-1}$ and $\bar{\theta}=\frac{1}{T-T_0}\sum^T_{t=T_0+1}\theta^{t-1}$, it suffices to have a total number of iterations of
\begin{eqnarray*}
T\ge \max(0, 2\lceil \frac{n}{\rho} \log(\frac{\lambda n}{\rho 2 R^2(\sum^n_{i=1}L_i)^2/n^2})\rceil) -n/\rho+\frac{20 R^2(\sum^n_{i=1}L_i)^2}{n^2\lambda\epsilon_P},
\end{eqnarray*}
where $\rho=\frac{n L_{min}}{\sum^n_{i=1}L_i}$. Moreover, when
\begin{eqnarray*}
t\ge \max(0, 2\lceil \frac{n}{\rho} \log(\frac{\lambda n}{\rho 2 R^2(\sum^n_{i=1}L_i)^2/n^2})\rceil)-2n/\rho+\frac{16 R^2(\sum^n_{i=1}L_i)^2}{n^2\lambda\epsilon_P} ,
\end{eqnarray*}
we have dual sub-optimality bound of $\E[D(\theta^*)-D(\theta^t)]\le\epsilon_P/2$.
\end{thm}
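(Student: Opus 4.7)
The starting point is Lemma~\ref{lemma:dual-ascent-psdca} specialized to the Lipschitz setting ($\gamma_i=0$), combined with the bound on $G^t$ in~\eqref{eqn:Gt-bound}. With the prescribed distribution $p_i=L_i/\sum_j L_j$ one computes $\sum_i L_i^2/p_i=(\sum_i L_i)^2$, hence $G^t\le 4R^2 s(\sum_i L_i)^2/n^2$ uniformly in $t$, while the feasibility constraint $s_i=s/(np_i)\in[0,1]$ reduces to $s\in[0,\rho]$. Substituting into~\eqref{eqn:dual-ascent-lipschitz} produces the scalar recursion
\begin{eqnarray*}
\E[\epsilon_D^t]\le(1-s/n)\,\E[\epsilon_D^{t-1}]+Cs^2,\qquad C:=\frac{2R^2(\sum_i L_i)^2}{\lambda n^4},
\end{eqnarray*}
with $\epsilon_D^0\le P(\mathbf{0})-D(\mathbf{0})\le 1$ by the normalization assumptions on $\phi_i$ and $r$.

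The argument then follows the two-phase scheme of Shalev-Shwartz and Zhang for uniformly sampled prox-SDCA, with the importance-sampling constant $C$ in place of their uniform one. Phase~1 uses $s=\rho$; iterating the recursion gives $\E[\epsilon_D^{T_0}]\le e^{-\rho T_0/n}+C\rho n$, so the choice $T_0=2\lceil(n/\rho)\log(1/(C\rho n))\rceil$ forces $\E[\epsilon_D^{T_0}]\le 2C\rho n$ (assuming $C\rho n\le 1$; otherwise the log is non-positive and the outer $\max(0,\cdot)$ kills the phase).

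For the duality-gap part, rewrite Lemma~\ref{lemma:dual-ascent-psdca} as
\begin{eqnarray*}
\E[P(\wt)-D(\theta^{t-1})]\le\frac{n}{s}\E[D(\theta^t)-D(\theta^{t-1})]+\frac{G^t}{2\lambda n},
\end{eqnarray*}
sum over $t=T_0+1,\ldots,T$ so that the dual increments telescope into $(n/s)\E[\epsilon_D^{T_0}-\epsilon_D^T]\le(n/s)\E[\epsilon_D^{T_0}]$, bound each $G^t/(2\lambda n)$ uniformly by $Csn$, divide by $T-T_0$, and apply Jensen (convexity of $P$, concavity of $D$) to obtain
\begin{eqnarray*}
\E[P(\bar\w)-D(\bar\theta)]\le\frac{n\,\E[\epsilon_D^{T_0}]}{s(T-T_0)}+Csn.
\end{eqnarray*}
Substituting $\E[\epsilon_D^{T_0}]\le 2C\rho n$ from phase~1 and tuning $(s,T-T_0)$ in the phase-2 regime $T-T_0\asymp Cn^2/\epsilon_P$ makes each piece $O(\epsilon_P)$; adding the phase lengths and absorbing the ceiling offsets reproduces the stated count $2\lceil(n/\rho)\log(1/(C\rho n))\rceil - n/\rho + 20 R^2(\sum_i L_i)^2/(n^2\lambda\epsilon_P)$. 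The dual sub-optimality claim is obtained in parallel, by skipping Jensen and iterating the scalar recursion directly in phase~2 from $\E[\epsilon_D^{T_0}]\le 2C\rho n$ with $s$ chosen so the fixed point $Csn$ lies below $\epsilon_P/4$ and the transient $(1-s/n)^{t-T_0}\cdot 2C\rho n$ decays below $\epsilon_P/4$.

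The main technical obstacle lies in the phase-2 bookkeeping: selecting the step size $s\le\rho$ so that the two contributions $n\E[\epsilon_D^{T_0}]/(s(T-T_0))$ and $Csn$ in the averaged bound are simultaneously driven below $\epsilon_P/2$ without introducing a spurious logarithmic factor or a $1/\epsilon_P^2$ blow-up from a naive balancing, and then verifying that the ceiling and offset terms in the additive total collapse precisely into the $-n/\rho$ correction stated in the theorem.
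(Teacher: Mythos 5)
Your setup (the reduction to the scalar recursion $\E[\epsilon_D^t]\le(1-s/n)\E[\epsilon_D^{t-1}]+Cs^2$ via Lemma~\ref{lemma:dual-ascent-psdca} and \eqref{eqn:Gt-bound}, the identity $\sum_i L_i^2/p_i=(\sum_i L_i)^2$, the constraint $s\in[0,\rho]$, and the telescoped/Jensen bound $\E[P(\bar\w)-D(\bar\theta)]\le \frac{n}{s(T-T_0)}\E[\epsilon_D^{T_0}]+Csn$) matches the paper. But there is a genuine gap in phase~2, and it is exactly the point you flag as ``the main technical obstacle'' without resolving it. Your phase~1 only yields the single endpoint bound $\E[\epsilon_D^{T_0}]\le 2C\rho n=\rho G/(\lambda n)$, which is a \emph{fixed constant} independent of $T$. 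Plugging it into the averaged bound with $s=n/(T-T_0)$ makes the first term identically equal to $\E[\epsilon_D^{T_0}]=2C\rho n$, which cannot be driven below $\epsilon_P/2$ by taking $T$ large; balancing $s$ instead gives a $1/\epsilon_P^2$ iteration count, and your fallback for the dual sub-optimality claim (fixed $s$ with fixed point $Csn\le\epsilon_P/4$ plus a decaying transient) costs an extra $\log(1/\epsilon_P)$ factor. Neither matches the stated $O(G/(\lambda\epsilon_P))$ counts.

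The missing idea is an intermediate $1/t$ decay of the dual sub-optimality throughout phase~2: the paper proves by induction that $\E[\epsilon_D^t]\le \frac{2G}{\lambda(2n/\rho+t-t_0)}$ for all $t\ge t_0=\max(0,\lceil\frac{n}{\rho}\log\frac{2\lambda n}{\rho G}\rceil)$, where the base case is your phase~1 bound and the inductive step applies the recursion with the \emph{time-varying} step $s=\frac{2n}{2n/\rho+t-1-t_0}\in[0,\rho]$ (legitimate because the recursion holds for every admissible $s$ at every $t$). This decaying bound is what lets one choose $T_0$ with $\E[\epsilon_D^{T_0}]\le\frac{2G}{\lambda(2n/\rho+T_0-t_0)}\le\epsilon_P/2$ using only $T_0\ge t_0+4G/(\lambda\epsilon_P)-2n/\rho$ additional iterations (this is precisely the theorem's second claim, with $4G/(\lambda\epsilon_P)=\frac{16R^2(\sum_i L_i)^2}{n^2\lambda\epsilon_P}$), after which the averaged duality-gap bound closes with $T-T_0\ge\max\{n/\rho,\,G/(\lambda\epsilon_P)\}$ and the additive bookkeeping produces the stated total. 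Without this induction your plan does not yield either assertion of the theorem.
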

\begin{proof}
Since $p_i=L_i/\sum^n_{j=1}L_j$, the inequality~\eqref{eqn:Gt-bound} indicates $G^t\le G$ where $G=4 R^2(\sum^n_{i=1} L_i)^2/n^2$. Lemma~\ref{lemma:dual-ascent-psdca}, with $\gamma_i=0$, tells that
\begin{eqnarray}\label{eqn:duality-gap-lipschitz}
\E[D(\theta^t)-D(\theta^{t-1})]\ge \frac{s}{n}\E[P(\w^{t-1})-D(\theta^{t-1})]-(\frac{s}{n})^2\frac{G}{2\lambda},
\end{eqnarray}
for all $s\in[0, \rho]$, which further indicates
\begin{eqnarray*}
\E[\epsilon^t_D]\le(1-\frac{s}{n})\E[\epsilon^{t-1}_D]+(\frac{s}{n})^2\frac{G}{2\lambda},
\end{eqnarray*}
for all $s\in[0, \rho]$. Expanding the above inequality implies
\begin{eqnarray}\label{eqn:prox-sdca-lipschitz-i}
\E[\epsilon^t_D]\le (1-\frac{s}{n})^t\epsilon^0_D+(\frac{s}{n})^2\frac{G}{2\lambda}\sum^t_{\tau=1}(1-\frac{s}{n})^{\tau-1},
\end{eqnarray}
for all $s\in[0,\rho]$.

We next show that the above yields
\begin{eqnarray}\label{eqn:duality-gap-ipsdca}
\E[\epsilon^t_D]\le\frac{2 G}{\lambda(2n/\rho+t-t_0)},
\end{eqnarray}
for all $t\ge t_0=\max(0, \lceil \frac{n}{\rho} \log(\frac{2\lambda n}{\rho G})\rceil)$. Indeed, let us choose $s=\rho$, then at $t=t_0$, we have
\begin{eqnarray*}
\E[\epsilon^t_D]\le (1-\frac{\rho}{n})^t\epsilon^0_D+\frac{\rho^2 G}{n^2 2\lambda}\frac{1}{1-(1-\rho/n)}\le\exp(-\rho t/n)+\frac{\rho G}{2\lambda n}\le \frac{ G}{\lambda n/\rho}=\frac{2 G}{\lambda(2n/\rho+t_0-t_0)}
\end{eqnarray*}
where  the first inequality used the inequality~\eqref{eqn:prox-sdca-lipschitz-i}, the second inequality used the facts $(1-\frac{\rho}{n})^t\le \exp(-\rho t/n)$ and $\epsilon^0_D\le 1$, and the third inequality used the fact $t_0 \ge \lceil \frac{n}{\rho} \log(\frac{2\lambda n}{\rho G})\rceil$. This implies that the inequality~\eqref{eqn:duality-gap-ipsdca} holds at $t=t_0$. For $t> t_0$ we can use inductive argument. Suppose the claim holds for $t-1$, therefore
\[
\E[\epsilon^t_D]\le (1-\frac{s}{n})\E[\epsilon^{t-1}_D] + (\frac{s}{n})^2\frac{G}{2\lambda}\le(1-\frac{s}{n})\frac{2 G}{\lambda(2n/\rho+t-1-t_0)}+(\frac{s}{n})^2\frac{G}{2\lambda}.
\]
Choosing $s=\frac{2 n}{(2n/\rho+t-1-t_0)}\in[0,\rho]$ yields
\begin{eqnarray*}
\E[\epsilon^t_D]&\le&\left(1-\frac{2}{2n/\rho+t-1-t_0}\right)\frac{2 G}{\lambda(2n/\rho+t-1-t_0)} +\left(\frac{2}{2n/\rho+t-1-t_0}\right)^2\frac{G}{2\lambda}\\
&=&\frac{2 G}{\lambda(2n/\rho+t-1-t_0)}(1-\frac{1}{2n/\rho+t-1-t_0})\\
&=&\frac{2 G}{\lambda(2n/\rho+t-1-t_0)}(\frac{2n/\rho+t-1-t_0-1}{2n/\rho+t-1-t_0})\\
&\le& \frac{2 G}{\lambda(2n/\rho+t-1-t_0)}(\frac{2n/\rho+t-1-t_0}{2n/\rho+t-t_0})\\
&=&  \frac{2 G}{\lambda(2n/\rho+t-t_0)}.
\end{eqnarray*}
This provides a bound on the dual sub-optimality. We next turn to bound the duality gap. Summing  the inequality~\eqref{eqn:duality-gap-lipschitz} over $t=T_0+1,\ldots,T$ and rearranging terms we obtain that
\[
\E\left[\frac{1}{T-T_0}\sum^T_{t=T_0+1}(P(\w^{t-1})-D(\theta^{t-1}))\right]\le \frac{n}{s(T-T_0)}\E[D(\theta^T)-D(\theta^{T_0})]+\frac{s G}{2\lambda n}.
\]
Now if we choose $\bar{\w}$, $\bar{\theta}$ to be either the average vector or a randomly chosen vector over $t\in\{T_0+1,\ldots,T\}$, the the above implies
\[
\E[(P(\bar{\w})-D(\bar{\theta}))]\le \frac{n}{s(T-T_0)}\E[D(\theta^T)-D(\theta^{T_0})]+\frac{s G}{2\lambda n}.
\]
If $T\ge n/\rho+T_0$ and $T_0\ge t_0$, we can set $s=n/(T-T_0)\le \rho$ and combining with~\eqref{eqn:duality-gap-ipsdca} we obtain
\begin{eqnarray*}
\E[(P(\bar{\w})-D(\bar{\theta}))]&\le& \frac{n}{s(T-T_0)}\E[D(\theta^T)-D(\theta^{T_0})]+\frac{s G}{2\lambda n}\\
&\le&\E[D(\theta^*)-D(\theta^{T_0})]+\frac{ G}{2\lambda(T-T_0)}\\
&\le& \frac{2G}{\lambda(2n/\rho+T_0-t_0)}+\frac{ G}{2\lambda(T-T_0)}.
\end{eqnarray*}
A sufficient condition for the above to be smaller than $\epsilon_P$ is that $T_0\ge\frac{4G}{\lambda\epsilon_P}-2n/\rho+t_0$ and $T\ge T_0+\frac{G}{\lambda\epsilon_P}$. It also implies that $\E[D(\theta^*)-D(\theta{T_0})]\le\epsilon_P/2$. Since we also need $T_0\ge t_0$ and $T-T_0\ge n/\rho$, the overall number of required iterations can be
\begin{eqnarray*}
T_0\ge\max\{t_0, 4G/(\lambda\epsilon_P)-2n/\rho+t_0\},\quad T-T_0\ge \max\{n/\rho, G/(\lambda\epsilon_P)\}.
\end{eqnarray*}
Using the fact $a+b\ge \max(a, b)$ concludes the proof of this theorem.
\end{proof}
{\bf Remark:} When we replace all the $L_i$ with $L_{max}=\max\{L_1,\ldots,L_n\}$, the above theorem will still be valid, and the sampling distribution becomes the uniform distribution. In this case we will recover Theorem 2 of~\cite{shalev2012proximal}, i.e., $T\ge  \max(0, 2\lceil n \log(\frac{\lambda n}{2R^2 L_{max}^2})\rceil) -n+\frac{20 R^2(L_{max})^2}{\lambda\epsilon_P}$. However, the ratio between the leading terms is
\begin{eqnarray*}
\frac{(L_{max})^2}{(\sum^n_{i=1}L_i)^2/n^2}= (\frac{n}{\sum^n_{i=1}L_i/L_{max}})^2\ge 1 ,
\end{eqnarray*}
which again implies that the importance sampling strategy will improve convergence, especially when $(\sum^n_{i=1}\frac{L_i}{L_{max}})^2\ll n^2$.

%
%

\section{Applications}
\label{sec:application}
There are numerous possible applications of our proposed algorithms. Here we will list several popular applications. In this section, we will set $\psi(\w)=\frac{1}{2}\|\w\|_2^2$, so that the stochastic mirror descent is stochastic gradient descent.

\subsection{Hinge Loss Based SVM with $\ell_2$ Regularization}
Suppose our task is to solve the typical Support Vector Machine (SVM):
\begin{eqnarray*}
\min_{\w} \frac{1}{n}\sum^n_{i=1} [1-y_i\w^\top\x_i]_+ +\frac{\lambda}{2}\|\w\|_2^2.
\end{eqnarray*}
Assume that $X=\max_i\|\x_i\|_2$ is not too large, such as for text categorization problems where each $\x_i$ is a bag-of-words representation of some short document. To solve this problem we can use two different proximal SGD or proximal SDCA  with importance sampling  as follows.

\subsubsection{Proximal SGD with Importance Sampling}
We can set regularizer as $r(\w)=0$, and the loss function as $\phi_i(\w)=[1-y_i\w^\top\x_i]_+ +\frac{\lambda}{2}\|\w\|_2^2$, which is $\lambda$-strongly convex, so that
\[
\prox_{\lambda r}(\x)=\arg\min_{\w}\left(0+\frac{1}{2}\|\w-\x\|_2^2\right)=\x,
\]
and
\[
\nabla \phi_i(\w)= - \sign([1-y_i\w^\top\x_i]_+) y_i\x_i+\lambda\w .
\]
Using $P(\w^*)= D(\theta^*)$, we find  that the optimal solution of SVM satisfies $\|\w^*\|_2\le 1/\sqrt{\lambda}$. So we can project the iterative solutions into $\{\w\in\R^d|\|\w\|_2\le 1/\sqrt{\lambda}\}$ using Euclidean distance, while the theoretical analysis is still valid. In this way, we have $ \|\nabla\phi_i(\w)\|_2\le \|\x_i\|_2+\sqrt{\lambda}$. According to our analysis, we should set
\[
p_i=\frac{\|\x_i\|_2+\sqrt{\lambda}}{\sum^n_{j}(\|\x_j\|_2+\sqrt{\lambda})}.
\]

\subsubsection{Proximal SDCA with Importance Sampling}
We can set $r(\w)=\frac{1}{2}\|\w\|_2^2$, and loss function as $\phi_i(\w)=[1-y_i\w^\top\x_i]_+$  which is $\|\x_i\|_2$-Lipschitz. According to our analysis, the distribution should be
\begin{eqnarray*}
p_i=\frac{\|\x_i\|_2}{\sum^n_{i=1}\|\x_i\|_2}.
\end{eqnarray*}

Furthermore, it is easy to get the dual function of $\phi_i$ as
\begin{displaymath}
\phi^*_i(-\theta)= \left\{ \begin{array}{ll}
-\alpha  & \textrm{$\theta=\alpha y_i \x_i$, $\alpha\in [0,1]$}\\
 \infty& \textrm{otherwise}
\end{array} \right.
\end{displaymath}
Given the dual function, using $\theta_i=\alpha_i y_i \x_i$, the options I in the algorithm produces a closed form solution as
\begin{eqnarray*}
\Delta\theta_i=\max\left(-\alpha_i,\min\left(1-\alpha_i,\frac{1-y_i\x_i^\top\w^{t-1}}{\|\x_i\|_2^2/(\lambda n)}\right)\right)y_i\x_i.
\end{eqnarray*}

\subsection{Squared Hinge Loss Based SVM with $\ell_2$ Regularization}
Suppose our interest is to solve the task of optimizing squared hinge loss based Support Vector Machine (SVM) with $\ell_2$ regularization:
\begin{eqnarray*}
\min_{\w} \frac{1}{n}\sum^n_{i=1}\left([1-y_i\w^\top\x_i]_+\right)^2 +\frac{\lambda}{2}\|\w\|_2^2.
\end{eqnarray*}

\subsubsection{Proximal SGD with Importance Sampling}
Firstly, using the inequality $P(\w^*)=D(\theta^*)$, we can get $\|\w^*\|_2\le 1/\sqrt{\lambda}$. So we can project the iterative solutions into $\{\w\in\R^d|\|\w\|_2\le 1/\sqrt{\lambda}\}$ using Euclidean distance, while the previous theoretical analysis is still valid.

 If we set $r(\w)=0$ and $\phi_i(\w)= \left([1-y_i\w^\top\x_i]_+\right)^2+\frac{\lambda}{2}\|\w\|_2^2$ so that $\prox_{\lambda r}(\x)=\x$ and
\begin{eqnarray*}
\nabla \phi_i(\w)=- 2[1-y_i\w^\top\x_i]_+ y_i\x_i+\lambda\w.
\end{eqnarray*}
Because $\|\nabla \phi_i(\w)\|_2\le 2(1+\|\x_i\|_2/\sqrt{\lambda})\|\x_i\|_2+\sqrt{\lambda}$, according the previous analysis, the optimal distribution for this case should be
\begin{eqnarray*}
p_i=\frac{2(1+\|\x_i\|_2/\sqrt{\lambda})\|\x_i\|_2+\sqrt{\lambda}}{\sum^n_{j=1}[2(1+\|\x_j\|_2/\sqrt{\lambda})\|\x_j\|_2+\sqrt{\lambda}]}.
\end{eqnarray*}

\subsubsection{Proximal SDCA with Importance Sampling}
If we set $r(\w)=\frac{1}{2}\|\w\|^2$, which is $1$-strongly convex with $\|\cdot\|_{P'}=\|\cdot\|_2$, we have $\phi_i(\w)=\left([1-y_i\w^\top\x_i]_+\right)^2$, which is $(2\|\x_i\|_2^2)$-smooth with respect to $\|\cdot\|_{P}=\|\cdot\|_2$. As a result, the optimal distribution for proximal SDCA with importance sampling should be
\begin{eqnarray*}
p_i=(1+\frac{2\|\x_i\|_2^2}{\lambda n})/(n+\sum^n_{j=1}\frac{2\|\x_j\|_2^2}{\lambda n}),
\end{eqnarray*}
where we used the fact $R=\sup_{\u\not=0}\|\u\|_{D'}/\|\u\|_D=1$.

It can be derived that the dual function of $\phi(\cdot)$ is
\begin{displaymath}
\phi^*_i(-\theta)= \left\{ \begin{array}{ll}
-\alpha + \alpha^2/4 & \textrm{$\theta=\alpha y_i \x_i$, $\alpha\ge 0$}\\
 \infty& \textrm{otherwise}
\end{array} \right.
\end{displaymath}

Plugging the above equation into the update, we can observe that option I in the algorithm has the following closed-form solution:
\begin{eqnarray*}
\Delta \theta_i =\max\left(\frac{1-y_i\w^\top\x_i-\alpha_i/2}{1/2+\|\x_i\|_2^2/(\lambda n)},\ -\alpha_i \right)y_i\x_i.
\end{eqnarray*}

\subsection{Squared Hinge Loss Based SVM with $\ell_1$ Regularization}
Suppose our interest is to solve the task of optimizing squared hinge loss based Support Vector Machine (SVM) with $\ell_1$ regularization:
\begin{eqnarray*}
\min_{\w} \frac{1}{n}\sum^n_{i=1}\left([1-y_i\w^\top\x_i]_+\right)^2 + \lambda \|\w\|_1.
\end{eqnarray*}
where $\ell_1$ regularization is introduced to make the optimal model sparse, which can alleviate the effect of the curse of dimensionality, and improve the model interpretability.

\subsubsection{Proximal SGD with Importance Sampling}
Using the inequality $P(\w^*)\le P(0)$, we obtain the fact that $\|\w^*\|_2\le \|\w^*\|_1\le 1/\lambda$. So we can project the iterative solutions into $\{\w\in\R^d|\|\w\|_2\le 1/\lambda\}$ using Euclidean distance, while the previous analysis is still valid.

If we set regularizer as $ r(\w)=\|\w\|_1$, then loss function is $\phi_i(\w)= \left([1-y_i\w^\top\x_i]_+\right)^2$, so that the proximal mapping is
\[
\prox_{\lambda r}(\x)=\sign(\x)\odot[|\x|-\lambda]_+,
\]
and
\[
\nabla \phi_i(\w)=- 2 \left([1-y_i\w^\top\x_i]_+\right)y_i\x_i.
\]
Because $\nabla\phi_i(\w)\le 2(1+\|\x_i\|_2/\lambda)\|\x_i\|_2$, according to the previous analysis, we should set
\[
p_i=\frac{(1+\|\x_i\|_2/\lambda)\|\x_i\|_2}{\sum^n_{j=1}(1+\|\x_j\|_2/\lambda)\|\x_j\|_2}.
\]

\subsubsection{Proximal SDCA with Importance Sampling}
Let $\w^*$ be the optimal solution, which satisfies $\|\w^*\|_2\le 1/\lambda$. Choosing $\delta=\lambda^2\epsilon$ and
\[
r(\w)=\frac{1}{2}\|\w\|_2^2+\frac{\lambda}{\delta}\|\w\|_1,
\]
which is $1$-strongly convex with respect to $\|\cdot\|_{P'}=\|\cdot\|_2$.

Consider the problem
\[
\min_\w \widehat{P}(\w) =\left[\frac{1}{n}\sum^n_{i=1}\phi_i(\w)+\delta r(\w)\right],
\]
then if $\w$ is an $\epsilon/2$ approximated solution of the above problem , it holds that
\[
\frac{1}{n}\sum^n_{i=1}\phi_i(\w)+\lambda\|\w\|_1 \le \widehat{P}(\w)\le \widehat{P}(\w^*)+\frac{\epsilon}{2}\le \frac{1}{n}\sum^n_{i=1}\phi_i(\w^*)+\lambda\|\w^*\|_1+\epsilon,
\]
which implies $\w$ is an $\epsilon$ approximated solution of the original optimization problem.

When we adopt Proximal SDCA with importance sampling to minimize $\hat{P}(\w)$, we have $\phi_i(\w)=\left([1-y_i\w^\top\x_i]_+\right)^2$,
which is $(2\|\x_i\|^2)$-smooth with respect to $\|\cdot\|_P=\|\cdot\|_2$. As a result, the optimal distribution for Proximal SDCA with importance sampling should be
\begin{eqnarray*}
p_i=(1+\frac{2\|\x_i\|_2^2}{\lambda n})/(n+\sum^n_{j=1}\frac{2\|\x_j\|_2^2}{\lambda n}).
\end{eqnarray*}

Plugging $\phi^*_i(-\theta)= -\alpha + \alpha^2/4$, $\theta=\alpha y_i \x_i$, $\alpha\ge 0$ into the algorithm, we can observe that the option I produces the following closed-form solution:
\begin{eqnarray*}
\Delta \theta_i =\max\left(\frac{1-y_i\w^\top\x_i-\alpha_i/2}{1/2+\|\x_i\|_2^2/(\lambda n)},\ -\alpha_i \right)y_i\x_i.
\end{eqnarray*}

Finally it is easy to verify that
\begin{eqnarray*}
\nabla r^*(\v)=\sign(\v)\odot[|\v|-\frac{\lambda}{\delta}]_+.
\end{eqnarray*}

\section{Experimental Results}
\label{sec:experiment}
In this section, we evaluate the empirical performance of the proposed algorithms.

\subsection{Experimental Testbed and Setup}
To compare our algorithms with their traditional versions without importance sampling, we focus on the task of optimizing squared hinge loss based SVM with $\ell_2$ regularization:
\begin{eqnarray*}
\min_{\w} \frac{1}{n}\sum^n_{i=1}\left( [1-y_i\w^\top\x_i]_+\right)^2 +\frac{\lambda}{2}\|\w\|_2^2.
\end{eqnarray*}

We compared our Iprox-SGD with traditional prox-SGD (actually Pegasos~\cite{DBLP:conf/icml/Shalev-ShwartzSS07}), and Iprox-SDCA with prox-SDCA (actually SDCA~\cite{DBLP:journals/jmlr/ShaiTong13}). For Iprox-SGD, we adopt the method in the subsection 5.2.1, while for Iprox-SDCA, we adopt the method in the subsection 5.2.2.

\begin{table}[h]
\begin{center}
\vspace{-0.2in}
\caption{Datasets used in the experiments.}\label{tab:datasets}
\begin{tabular}{|c|c|c|c|}        \hline
{ Dataset}  &{Dataset Size}   & {Features}
\\\hline\hline
ijcnn1           & 49990        & 22        \\
kdd2010(algebra) & 8407752      & 20216830  \\
w8a              & 49749        & 300
\\\hline
\end{tabular}
\vspace{-0.1in}
\end{center}
\end{table}
To  evaluate the performance of our algorithms, the experiments were performed on several real world datasets, which are chosen fairly randomly in order to cover various aspects of datasets. All the datasets can be downloaded from LIBSVM website\footnote{\url{http://www.csie.ntu.edu.tw/~cjlin/libsvmtools/}}. The details of the dataset characteristics are provided in the Table~\ref{tab:datasets}.

To make a fair comparison, all algorithms adopt the same setup in our experiments. In particular, the regularization parameter $\lambda$ of SVM is set as  $10^{-4}$, $10^{-6}$, $10^{-4}$ for  ijcnn1, kdd2010(algebra), and w8a, respectively. For prox-SGD and Iprox-SGD, the step size is set as $\eta_t = 1/(\lambda t)$ for all the datasets.

Given these parameters, we estimated the ratios between the constants in the convergence bounds for uniform sampling and the proposed importance sampling strategies, which is listed in the table~\ref{tab:ratio}.
\begin{table}[h]
\begin{center}
\vspace{-0.2in}
\caption{Theoretical Constant Ratios for The Experiment Datasets.}\label{tab:ratio}
\begin{tabular}{|c|c|c|c|c|}        \hline
Constant Ratio & ijcnn1   &  kdd2010 & w8a
\\\hline\hline
 $\frac{n\sum^n_{i=1}(G_i)^2}{(\sum^n_{i=1}G_i)^2}$ (for SGD) & 1.0643        &   1.4667      &  1.9236\\
$\frac{n\lambda \gamma_{min}+R^2}{n\lambda \gamma_{min}+ \frac{R^2}{n}\sum^n_{i=1}\frac{ \gamma_{min}}{ \gamma_i }}$ (for SDCA)  &  1.1262      &1.1404   & 1.3467  \\
\hline
\end{tabular}
\vspace{-0.1in}
\end{center}
\end{table}
These ratios imply that the importance sampling will be effective for SGD on kdd2010 and w8a, but not very effective for ijcnn1, which will be verified by later empirical results. In addition, these ratios imply the importance sampling will accelerate the minimization of the duality gap for all the datasets, which will also be demonstrated by later experiments.

All the experiments were conducted by fixing 5 different random seeds for each dataset. All the results were reported by averaging over these 5 runs. We evaluated the learning performance by measuring primal objective value ($P(\w^t)$) for SGD algorithms, and duality gap value ($P(\w^t)-D(\theta^t)$) for SDCA algorithms. In addition, to examine the generalization ability of the learning algorithm, we also evaluated the test error rate. Finally, we also reported the variances of the stochastic gradients of the two algorithms to check the effectiveness of importance sampling. Finally, for Iprox-SGD and Iprox-SDCA, the uniform sampling is adopted at the first epoch, so that the performance is the same with SGD and SDCA there, respectively.

\subsection{Evaluation on Iprox-SGD}
The figure~\ref{fig:ISGD} summarized  experimental results in terms of primal objective values, test error rates and variances of the stochastic gradients varying over the learning process  on all the datasets for SGD and Iprox-SGD.

\begin{figure}[htp]
\begin{center}
\includegraphics[width=2.1in]{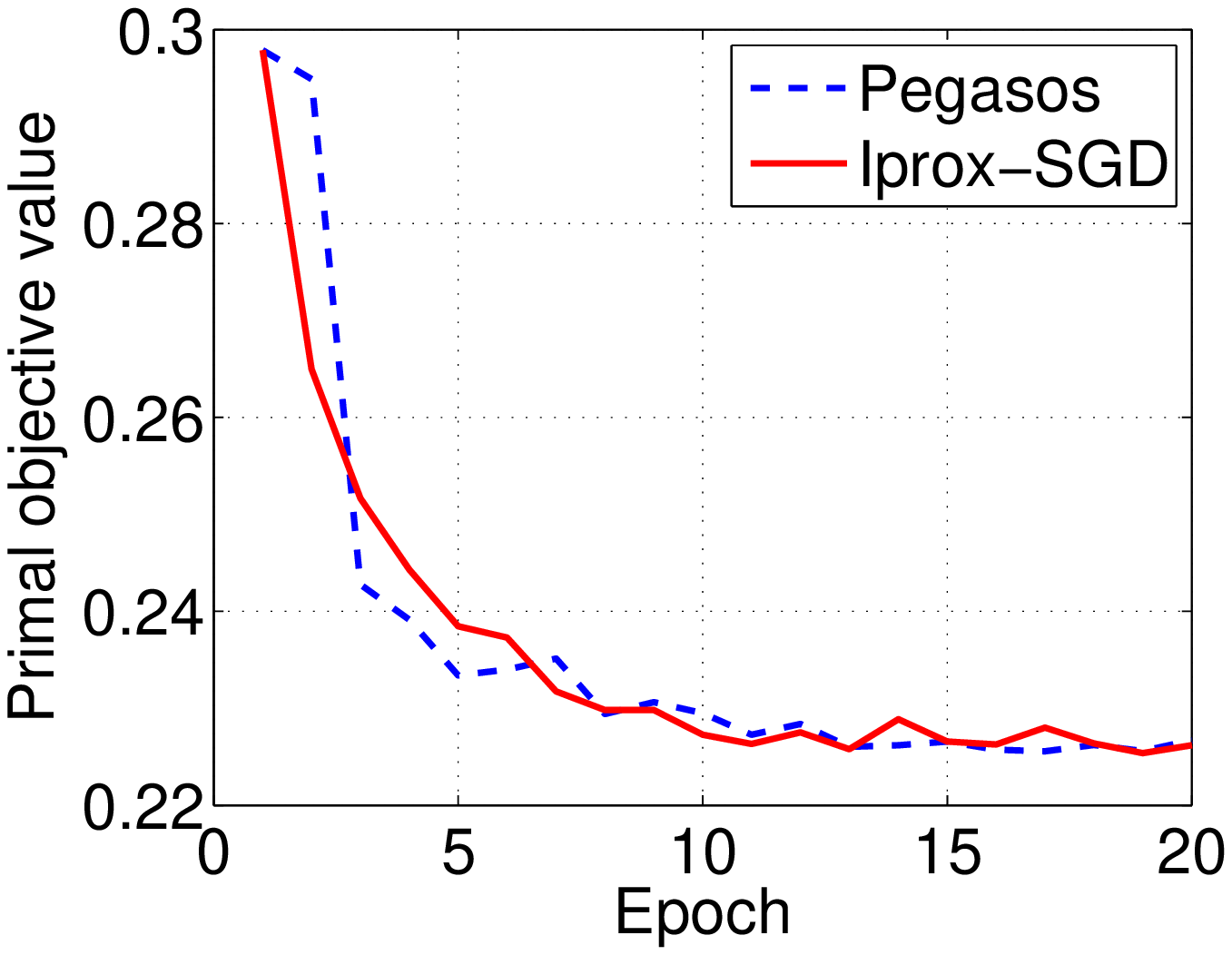}
\includegraphics[width=2.1in]{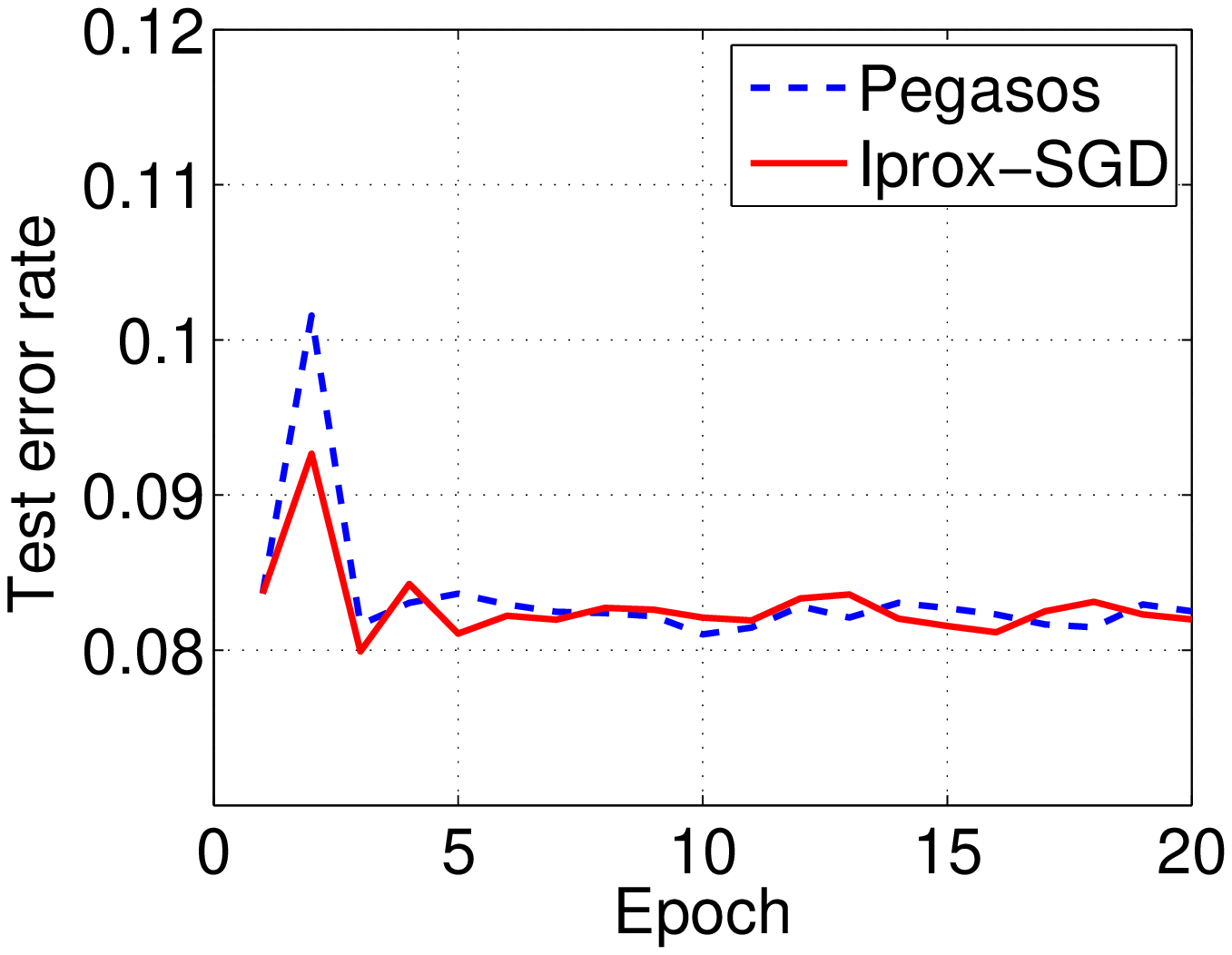}
\includegraphics[width=2.1in]{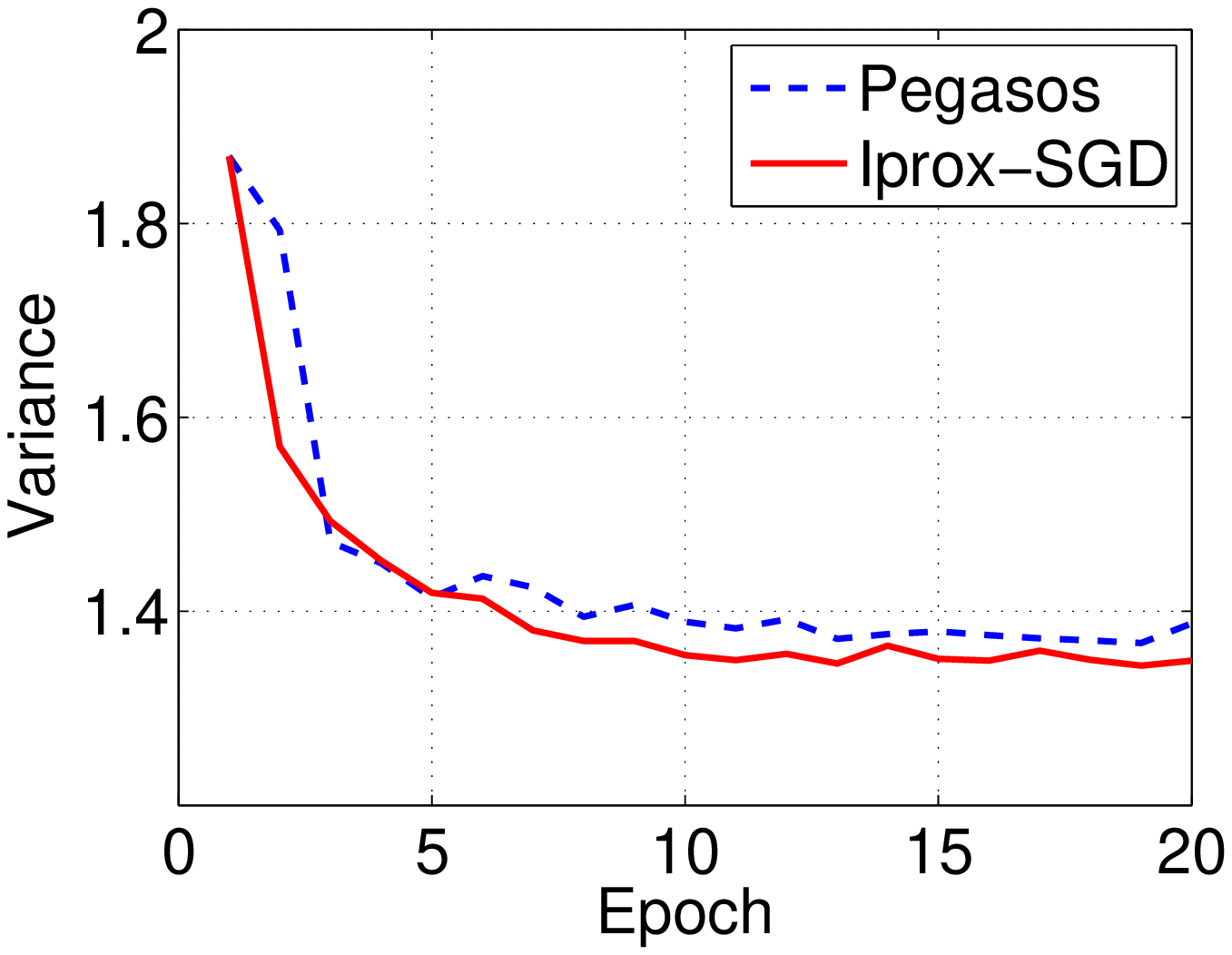}
{\scriptsize \makebox[2.1in]{(a)~primal objective value on {\bf ijcnn1}}~\makebox[2.1in]{(b)~test error rate on {\bf ijcnn1}}~\makebox[2.1in]{(c)~variance on {\bf ijcnn1}}}
\end{center}
\begin{center}
\includegraphics[width=2.1in]{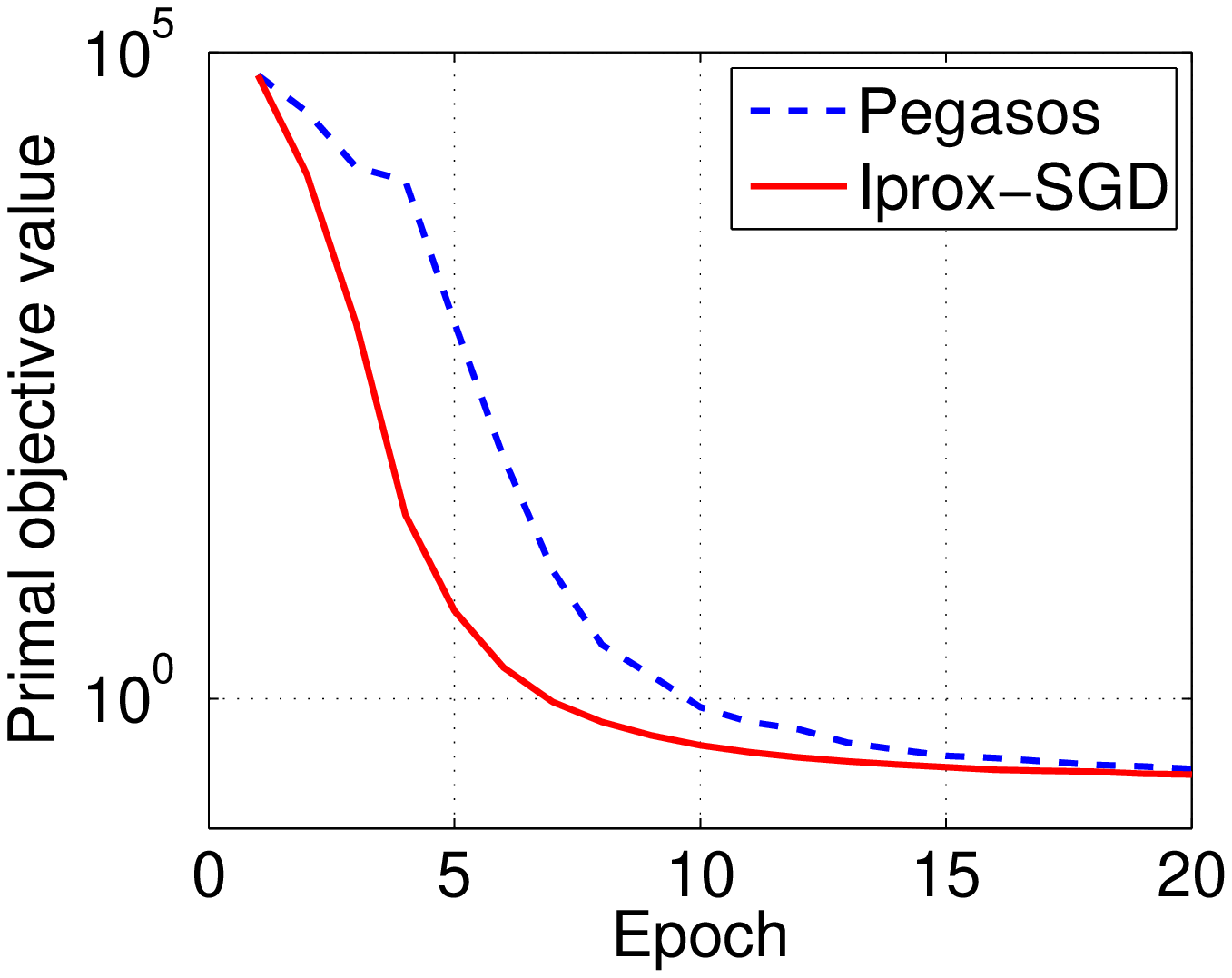}
\includegraphics[width=2.1in]{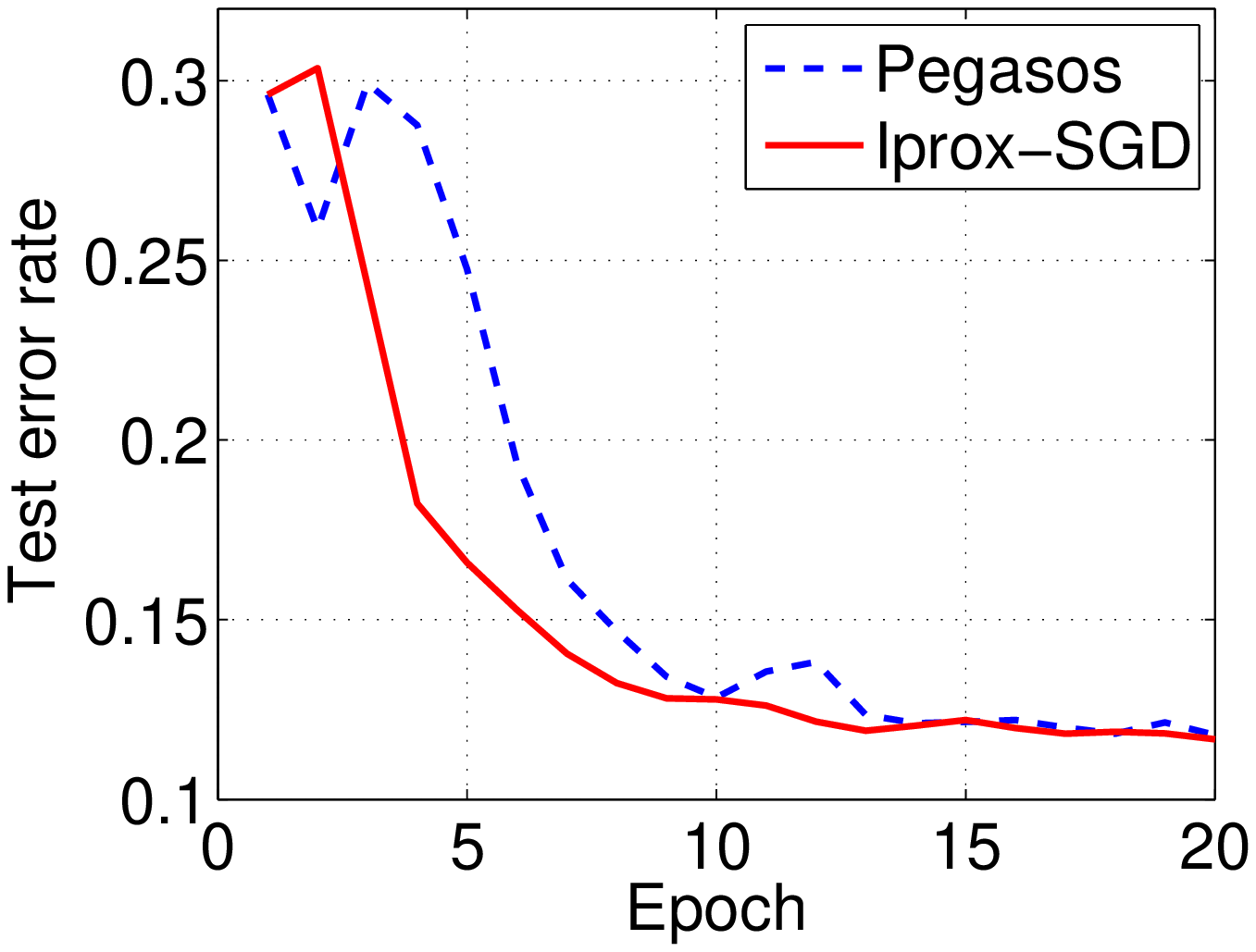}
\includegraphics[width=2.1in]{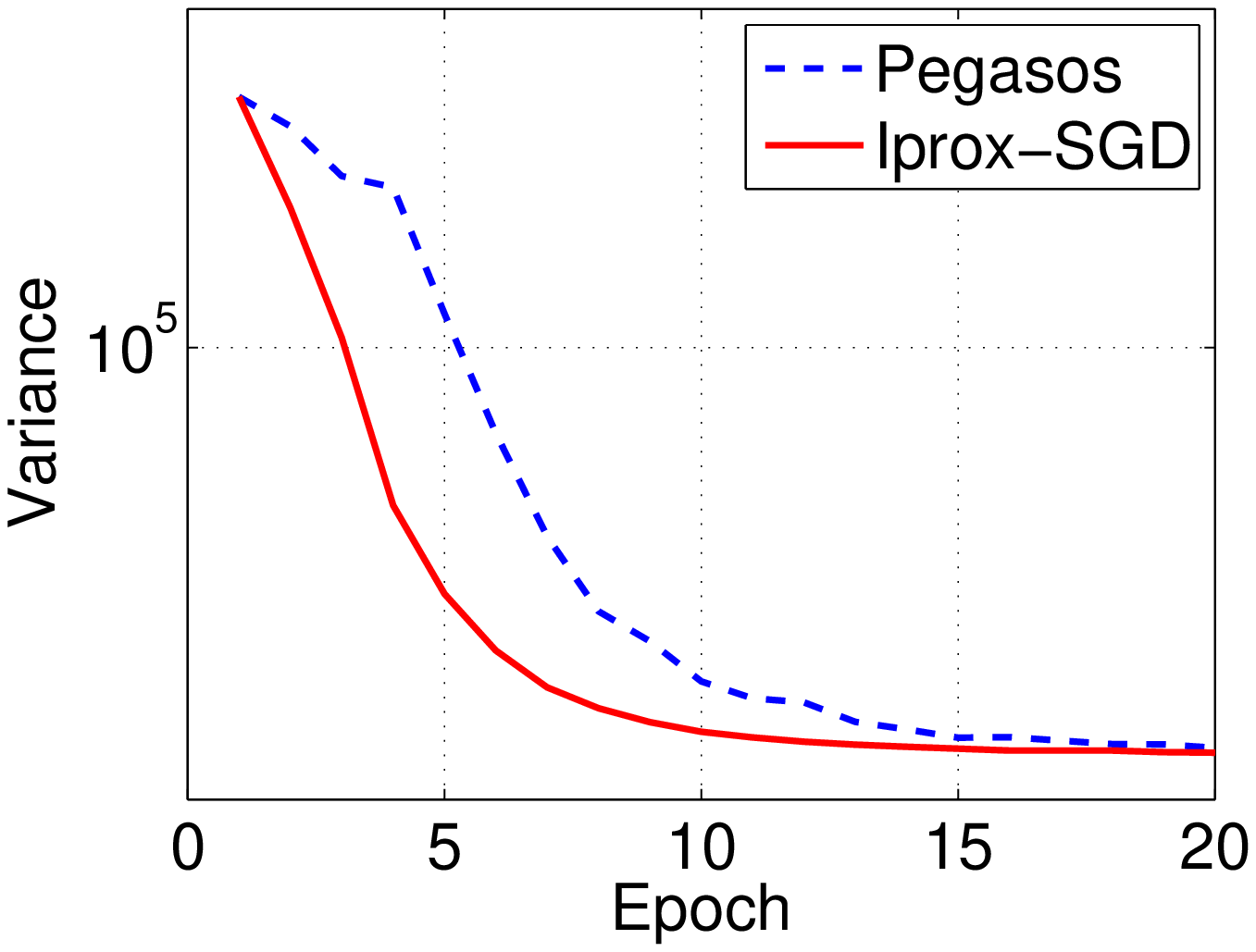}
{\scriptsize \makebox[2.1in]{(d)~primal objective value on {\bf kdd2010}}~\makebox[2.1in]{(e)~test error rate on {\bf kdd2010}}~\makebox[2.1in]{(f)~variance on {\bf kdd2010}}}
\end{center}
\begin{center}
\includegraphics[width=2.1in]{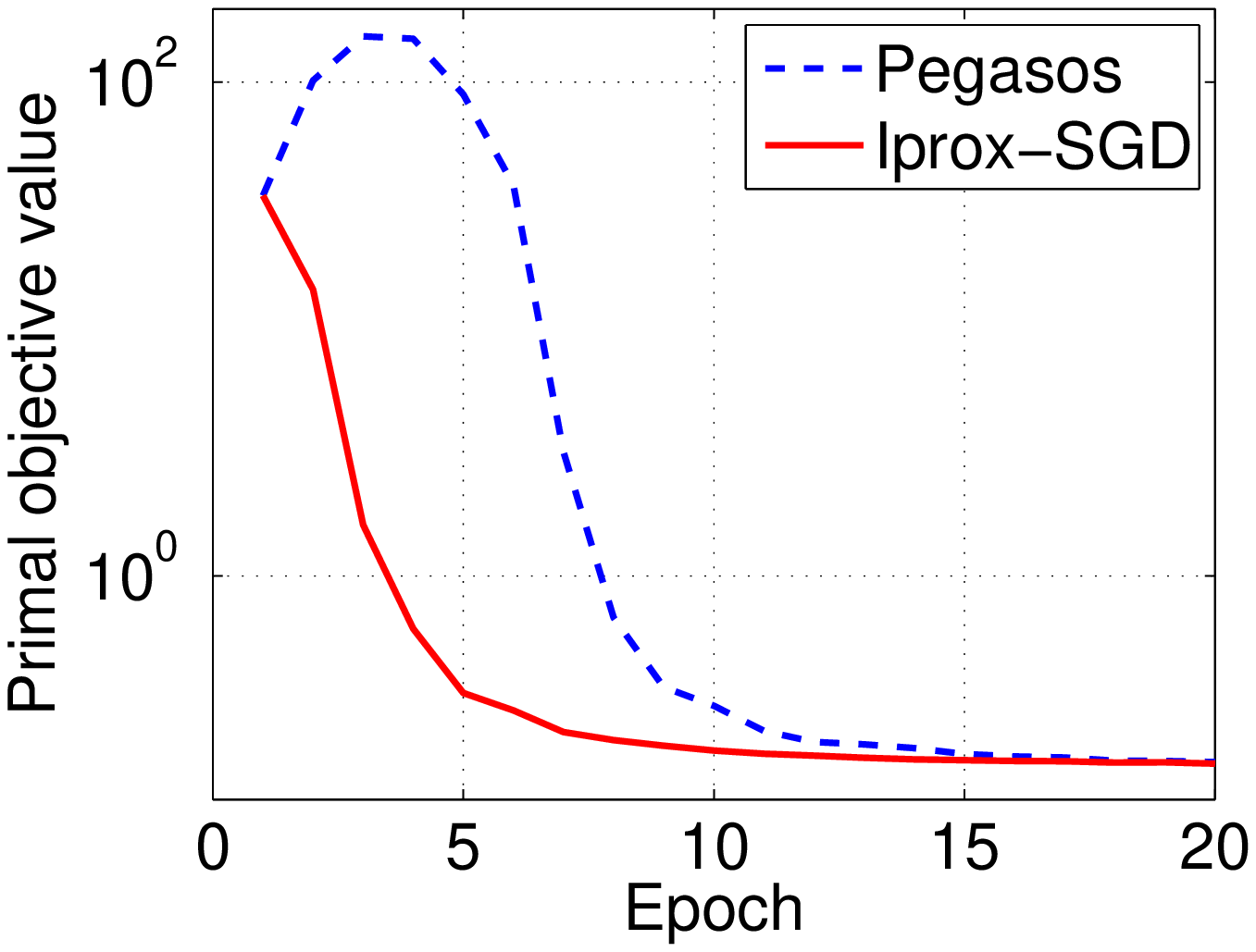}
\includegraphics[width=2.1in]{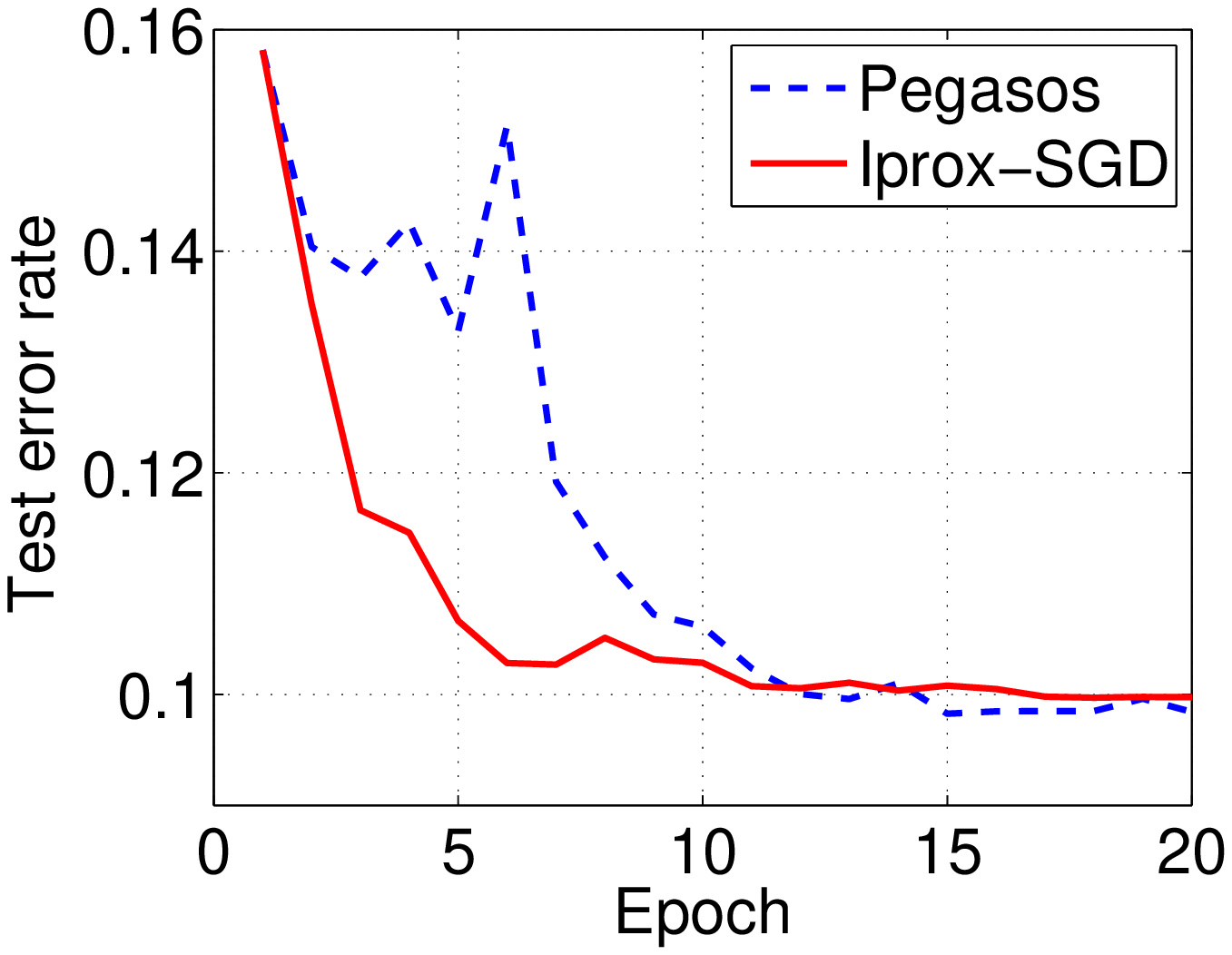}
\includegraphics[width=2.1in]{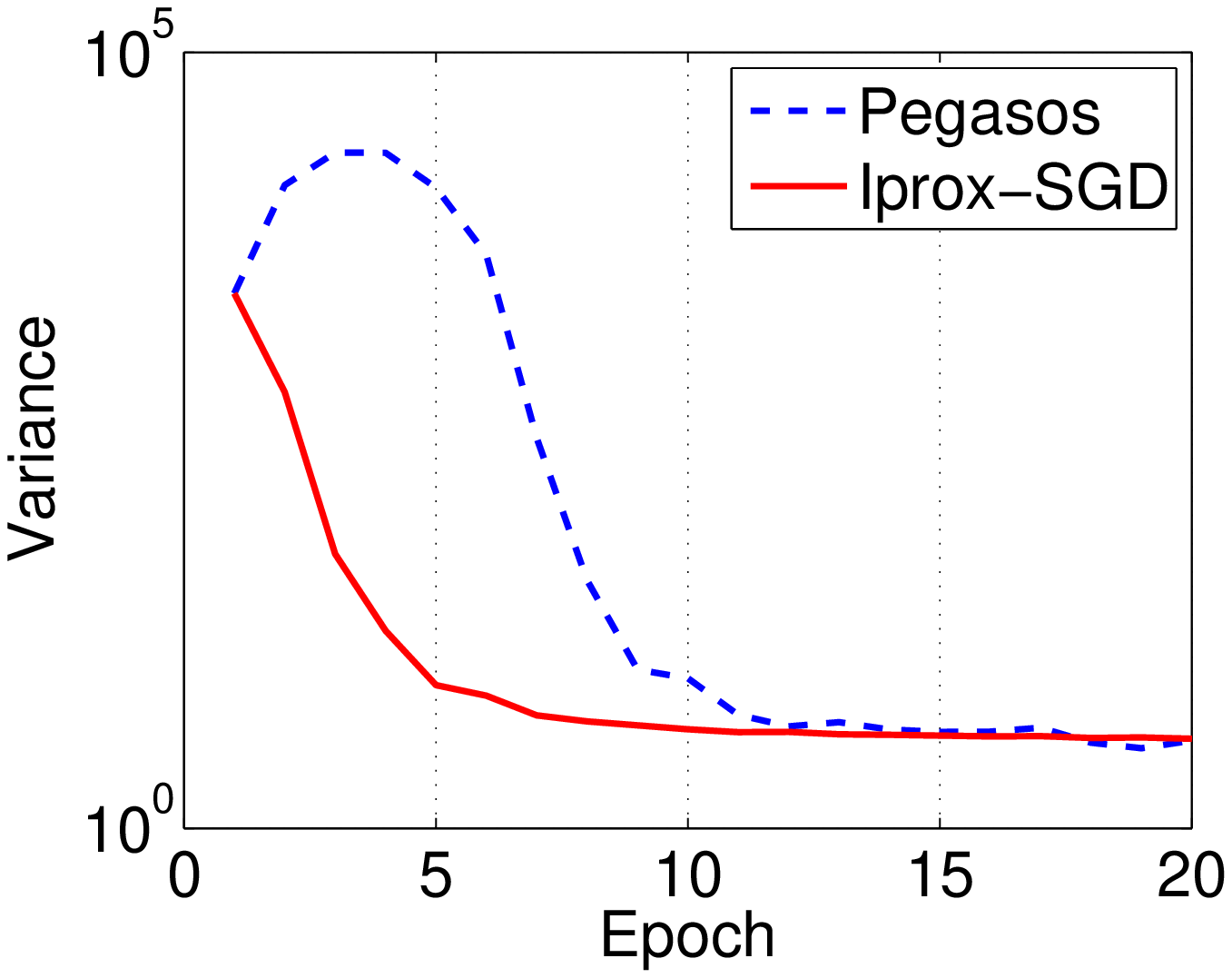}
{\scriptsize \makebox[2.1in]{(g)~primal objective value on {\bf w8a}}~\makebox[2.1in]{(h)~test error rate on {\bf w8a}}~\makebox[2.1in]{(i)~variance on {\bf w8a}}}
\end{center}
\caption{Comparison between Pegasos with Iprox-SGD on several datasets. Epoch for the horizontal axis is the number of iterations divided by dataset size.}
\label{fig:ISGD}
\end{figure}

First, the left column summarized the primal objective values of Iprox-SGD in comparison to SGD with uniform sampling on all the datasets. On the last two datasets,, the proposed Iprox-SGD algorithm achieved the fastest convergence rates. Because these two algorithms adopted the same learning rates, this observation implies that the proposed importance sampling does sampled more informative stochastic gradient during the learning process. Second, the central column summarized the test error rates of the two algorithms, where Iprox-SGD achieves significantly smaller test error rates than those of SGD on the last two dataset. This indicates that the proposed importance sampling approach is effective in improving generalization ability. In addition, the right column shows the variances of stochastic gradients for the Iprox-SGD and SGD algorithms, where we can observe Iprox-SGD enjoys much smaller variances than SGD on the last two dataset. This again demonstrates that the proposed importance sampling strategy is effective in reducing the variance of the stochastic gradients. Finally, on the first dataset, the proposed Iprox-SGD algorithm achieved comparable convergence rate compared with traditional prox-SGD, which indicates that Iprox-SGD may degenerate into the traditional prox-SGD when the variance of training dataset is significantly small.

\subsection{Evaluation on Iprox-SDCA}
The figure~\ref{fig:ISDCA} summarized  experimental results in terms of duality gap values, test error rates and variances of the stochastic gradients varying over the learning process  on all the datasets for SDCA and Iprox-SDCA.

\begin{figure}[htbp]
\begin{center}
\includegraphics[width=2.1in]{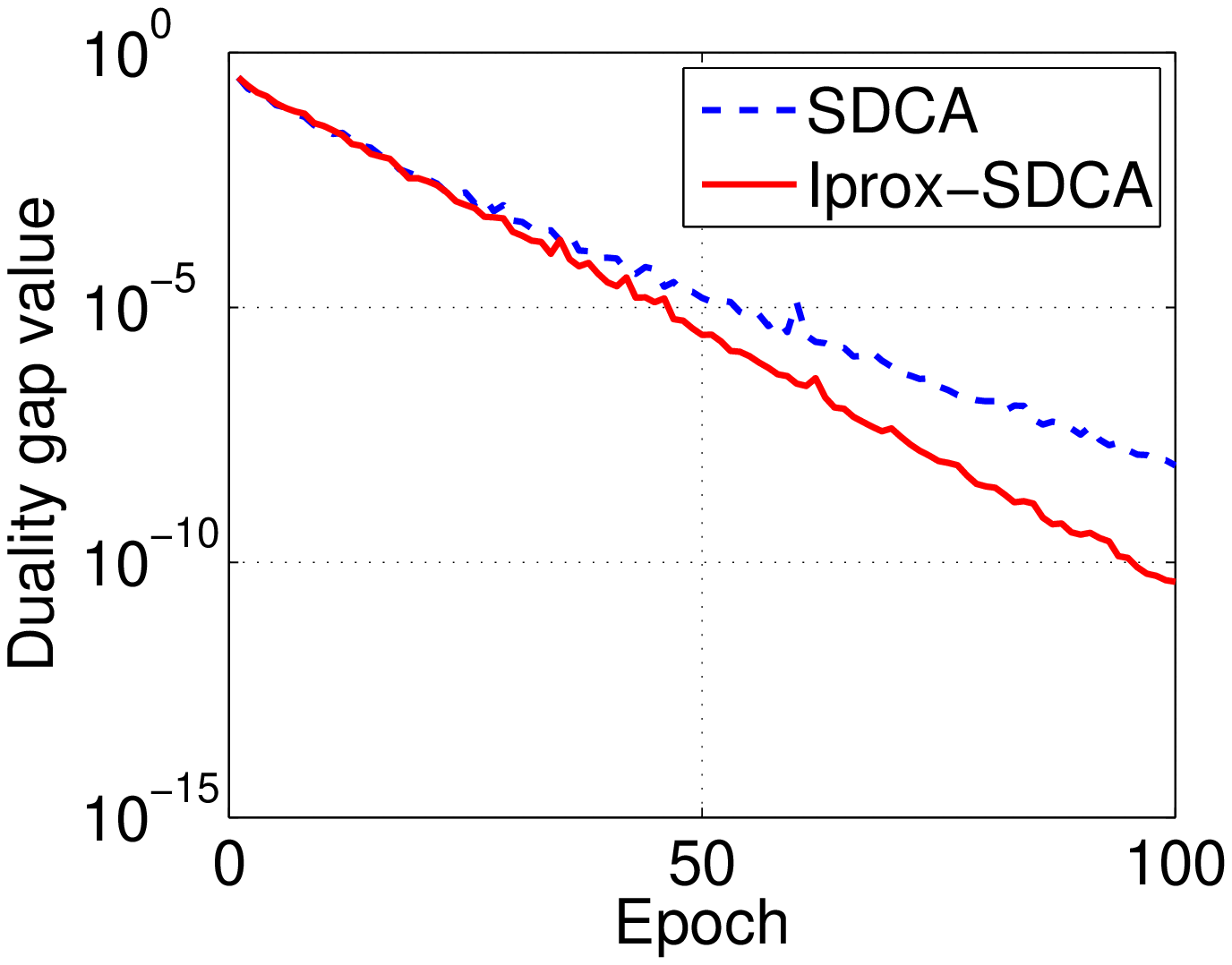}
\includegraphics[width=2.1in]{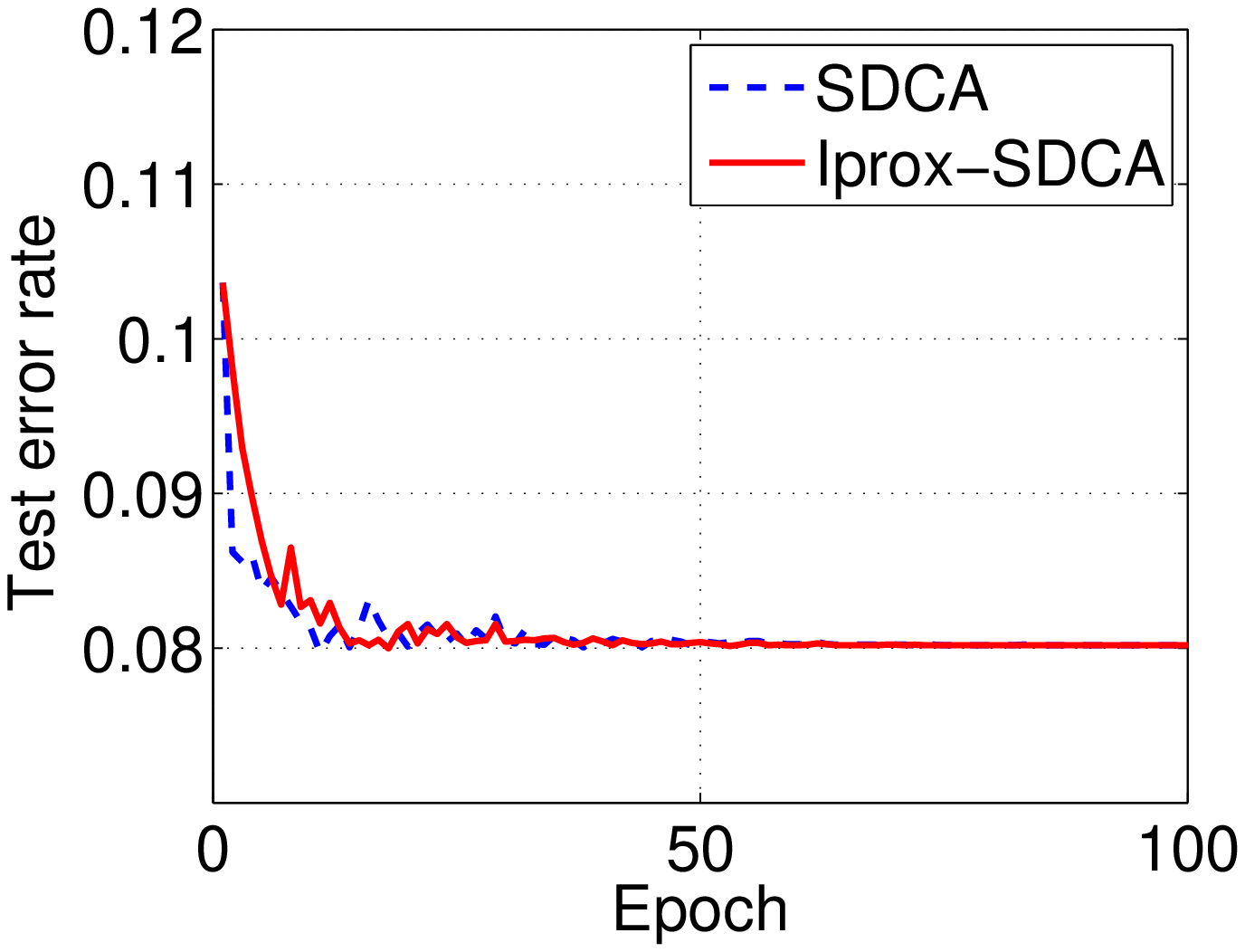}
\includegraphics[width=2.1in]{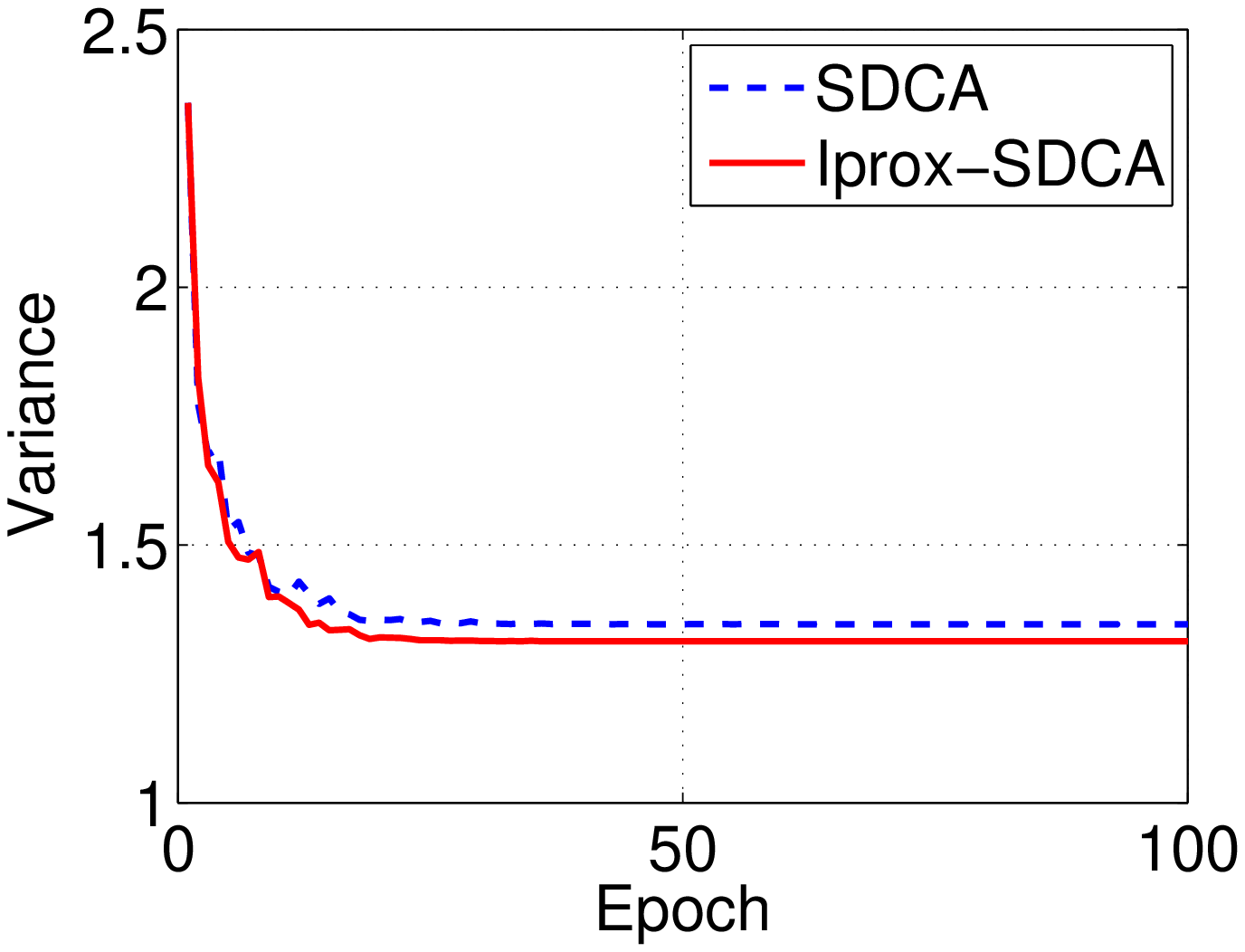}
{\scriptsize \makebox[2.1in]{(a)~duality gap value on {\bf ijcnn1}}~\makebox[2.1in]{(b)~test error rate on {\bf ijcnn1}}~\makebox[2.1in]{(c)~variance on {\bf ijcnn1}}}
\end{center}
\begin{center}
\includegraphics[width=2.1in]{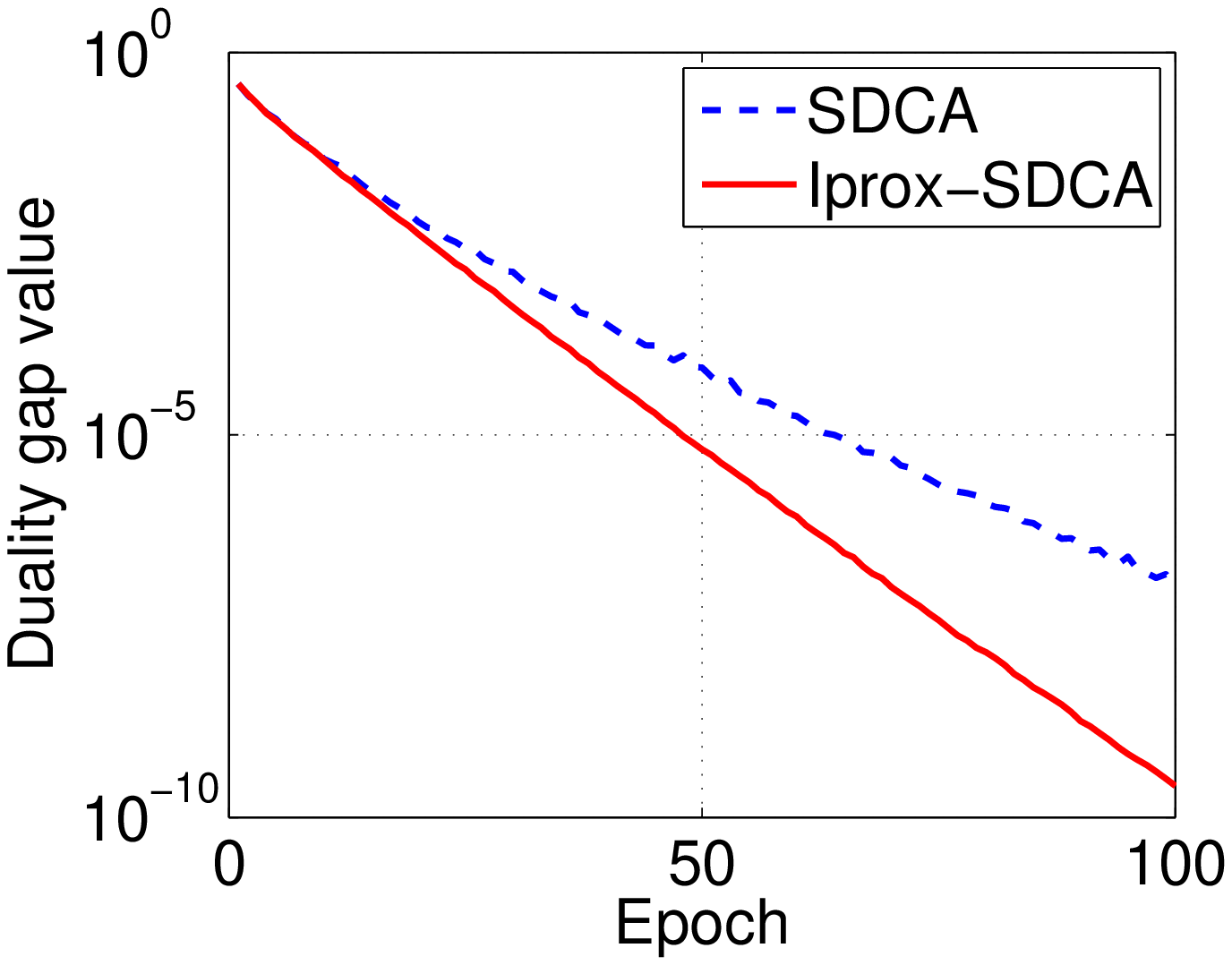}
\includegraphics[width=2.1in]{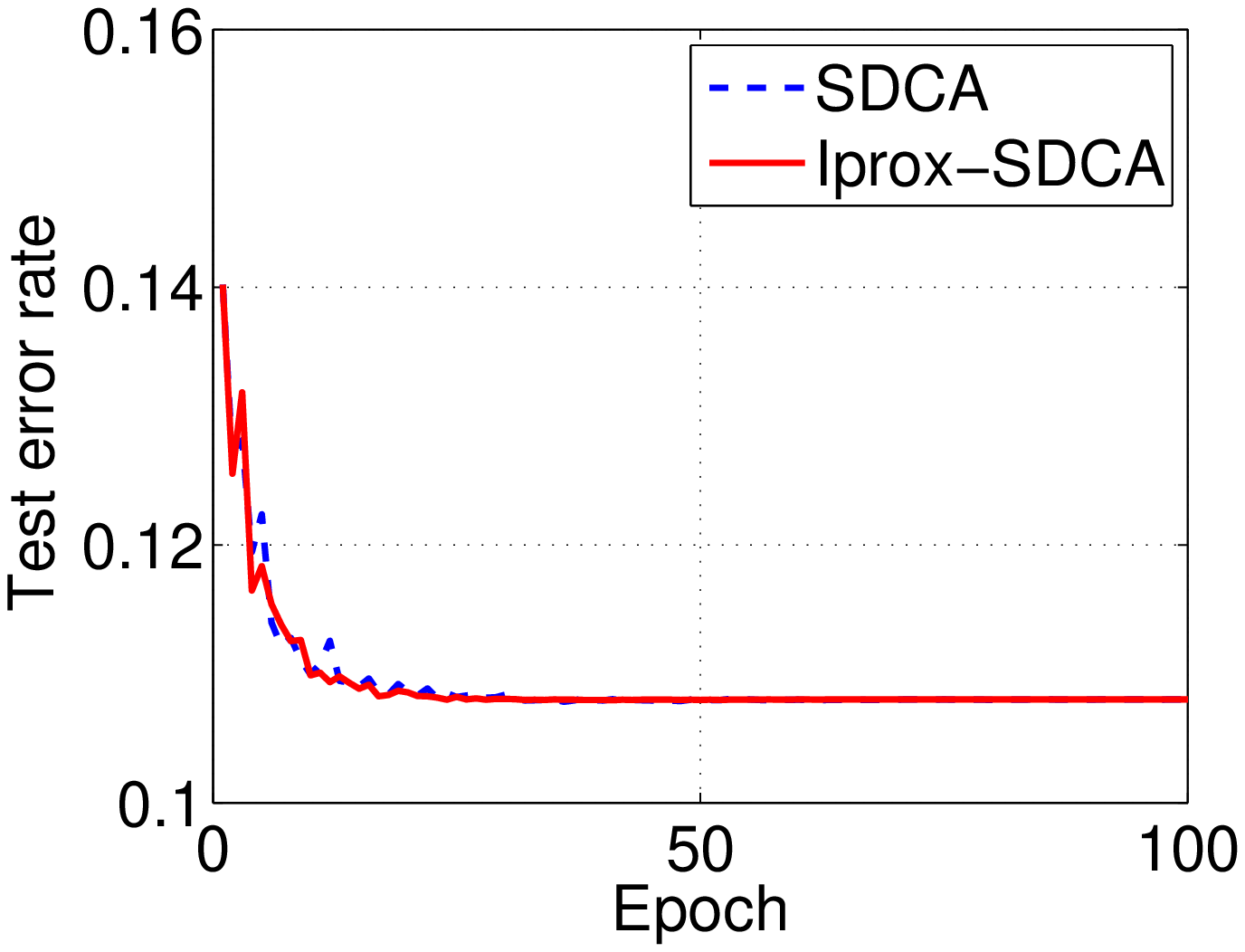}
\includegraphics[width=2.1in]{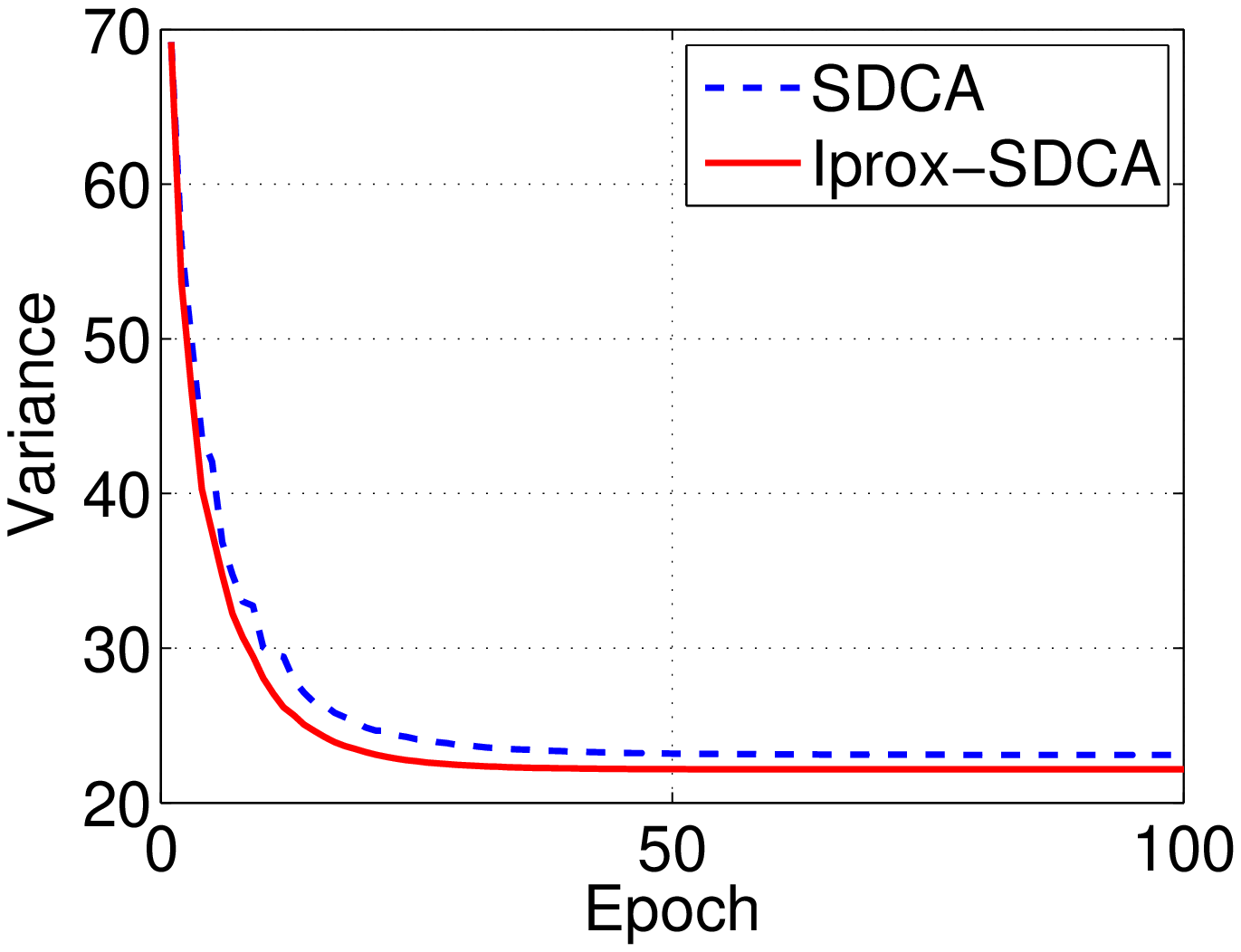}
{\scriptsize \makebox[2.1in]{(d)~duality gap value on {\bf kdd2010}}~\makebox[2.1in]{(e)~test error rate on {\bf kdd2010}}~\makebox[2.1in]{(f)~variance on {\bf kdd2010}}}
\end{center}
\begin{center}
\includegraphics[width=2.1in]{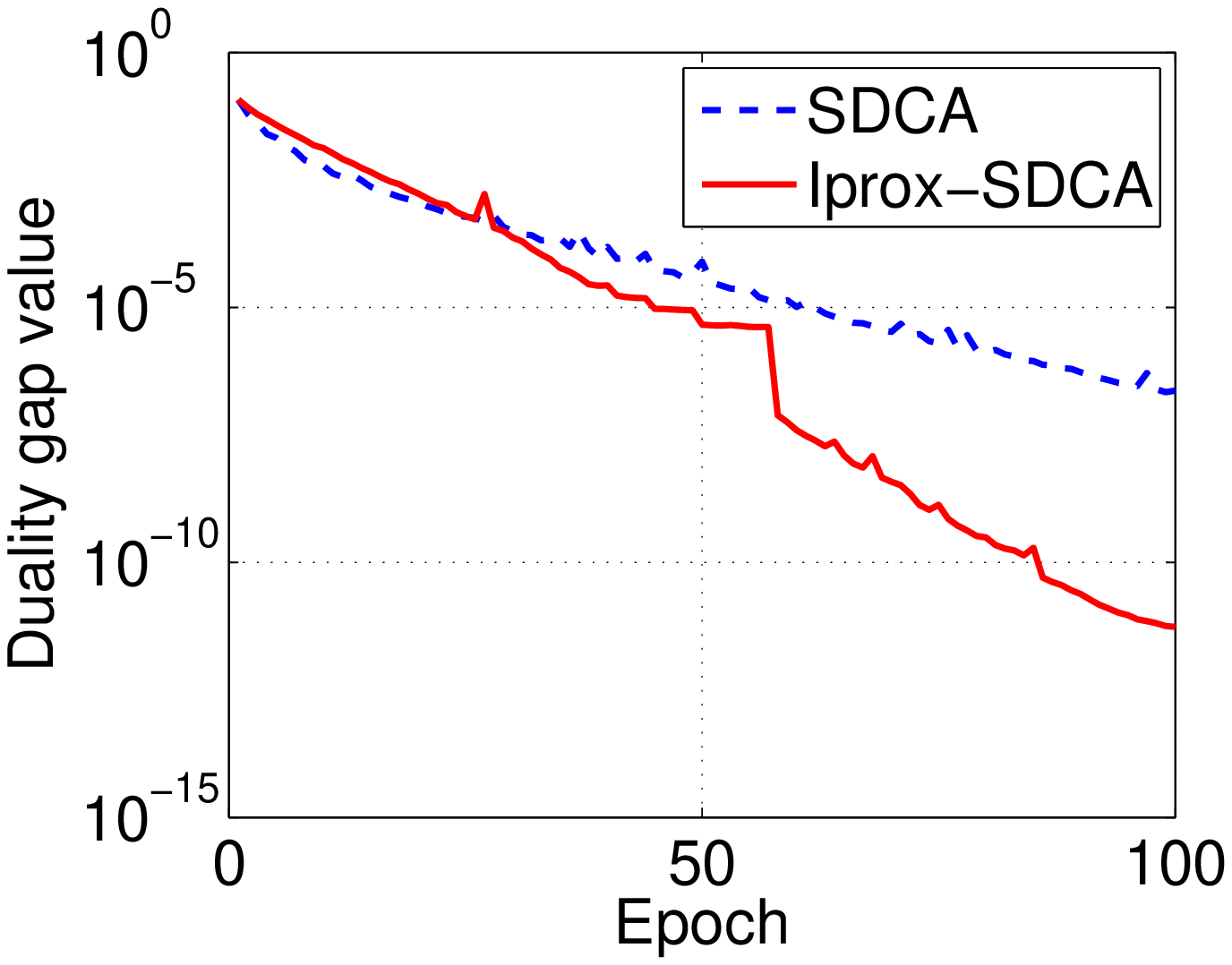}
\includegraphics[width=2.1in]{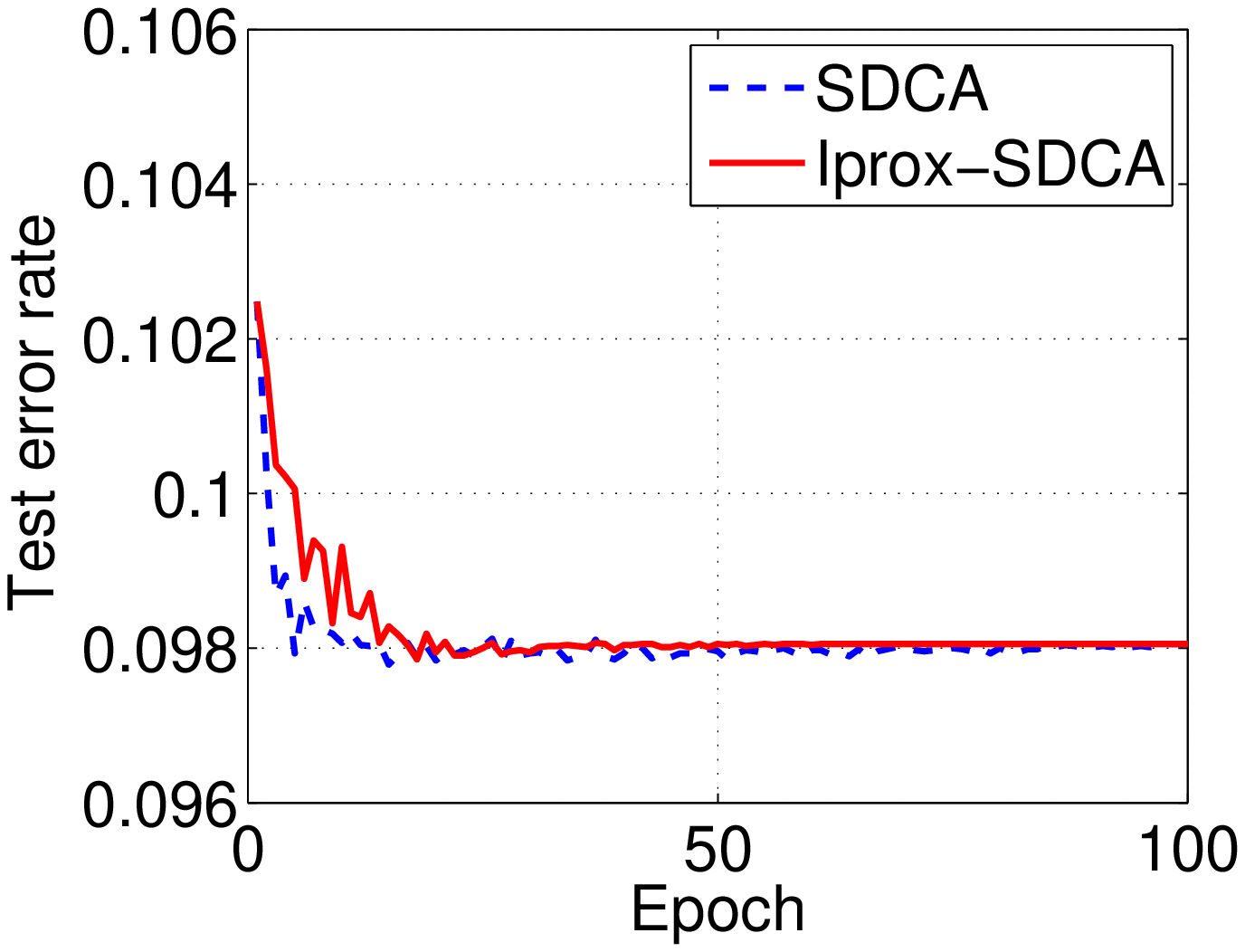}
\includegraphics[width=2.1in]{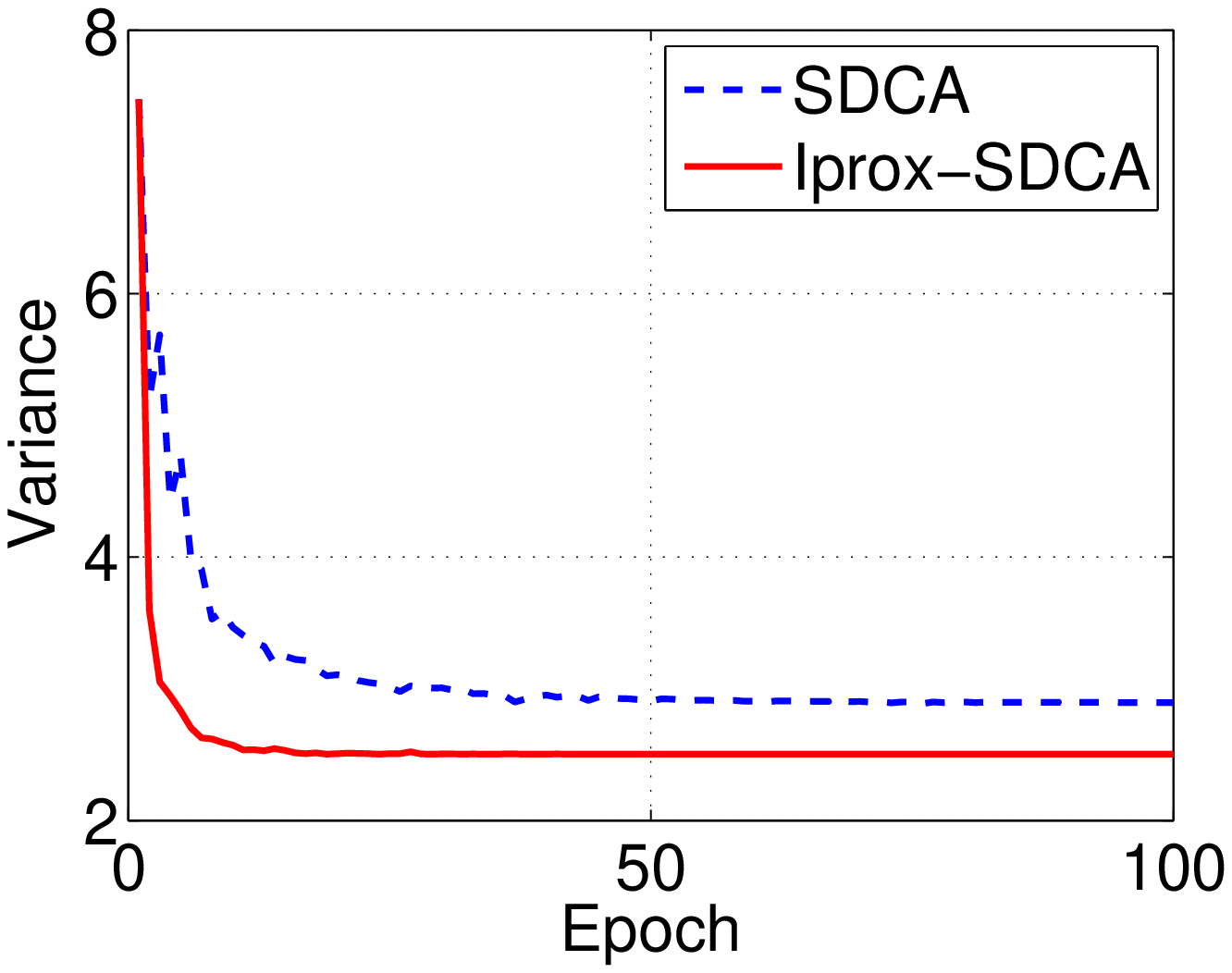}
{\scriptsize \makebox[2.1in]{(g)~duality gap value on {\bf w8a}}~\makebox[2.1in]{(h)~test error rate on {\bf w8a}}~\makebox[2.1in]{(i)~variance on {\bf w8a}}}
\end{center}
\caption{Comparison between SDCA with Iprox-SDCA on several datasets. Epoch for the horizontal axis is the number of iterations divided by dataset size.}
\label{fig:ISDCA}
\end{figure}

We have several observations from these empirical results. First, the left column summarized the dual gap values of Iprox-SDCA in comparison to SDCA with uniform sampling on all the datasets. According to the dual gap values on all the datasets, the proposed Iprox-SDCA algorithm converged much faster than the standard SDCA, which indicates that the proposed importance sampling strategy make the duality gap minimization more efficient during the learning process. Second, the central column summarized the test error rates of the two algorithms, where the test error rates of Iprox-SDCA is comparable with those of SGD on all the dataset. The results indicate that SDCA is quite fast at the first few epochs so that the importance sampling does not improve the test accuracy, although importance sampling can accelerate the minimization of duality gap. In addition, the right column shows the variances of stochastic gradients for the Iprox-SDCA and SDCA algorithms, where we can observe Iprox-SDCA enjoys a bit smaller variances than SDCA on all the dataset. However the improvement is too small so that the test error rate is not significantly reduced. This might due to that SDCA is a kind of stochastic variance reduction gradient method~\cite{DBLP:conf/nips/Johnson013}.

\section{Conclusion}
\label{sec:conclusion}
This paper studied stochastic optimization with importance sampling that reduces the variance. Specifically we considered importance sampling strategies for Proximal Stochastic Gradient Descent and for Proximal Stochastic Dual Coordinate Ascent. For prox-SGD with importance sampling, our analysis showed that in order to reduce variance, the sample distribution should depend on the norms of the gradients of the loss functions, which can be relaxed to the smooth constants or the Lipschitz constants of all the loss functions;  for prox-SDCA with importance sampling, our analysis showed that the sampling distribution should rely on the smooth constants or Lipschitz constants of all the loss functions. Compared to the traditional prox-SGD and prox-SDCA methods with uniform sampling, we showed that the proposed importance sampling methods can significantly improve the convergence rate of optimization under suitable situations. Finally, a set of experiments confirm our theoretical analysis.

\section*{Appendix}
\begin{lemma}\label{lem:nonexpansive}
Let $r$ be a convex function, and $\psi$ be a $\sigma$-strongly convex function w.r.t. $\|\cdot\|$. Assume
\begin{eqnarray*}
\uh = \arg\min_{\w}\left[\langle \g_u , \w\rangle + \lambda r(\w) + \frac{1}{\eta}\Bpsi(\w, \z) \right],\quad  \vh = \arg\min_{\w}\left[\langle \g_v, \w\rangle + \lambda r(\w) + \frac{1}{\eta}\Bpsi(\w, \z) \right],
\end{eqnarray*}
then, we have
\begin{eqnarray*}
\|\uh-\vh\|\le \frac{\eta}{\sigma}\|\g_u -\g_v\|_* .
\end{eqnarray*}
\end{lemma}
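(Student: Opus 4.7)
The plan is to exploit the first-order optimality conditions at $\uh$ and $\vh$, then combine strong convexity of $\psi$ with monotonicity of $\partial r$ to obtain a one-sided bound that can be turned into the claim via Cauchy--Schwarz.

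First I would write down the subgradient optimality conditions for both minimizers. For $\uh$ there exists $\a_u \in \partial r(\uh)$ such that $\g_u + \lambda \a_u + \tfrac{1}{\eta}(\nabla \psi(\uh) - \nabla \psi(\z)) = 0$, and analogously for $\vh$ with some $\a_v \in \partial r(\vh)$. Subtracting these two identities eliminates $\nabla \psi(\z)$ and gives
\begin{eqnarray*}
\g_u - \g_v + \lambda(\a_u - \a_v) + \tfrac{1}{\eta}(\nabla \psi(\uh) - \nabla \psi(\vh)) = 0.
\end{eqnarray*}

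Next I would take the inner product of this identity with $\uh - \vh$. Convexity of $r$ implies the monotonicity inequality $\langle \a_u - \a_v, \uh - \vh\rangle \ge 0$, and $\sigma$-strong convexity of $\psi$ w.r.t.\ $\|\cdot\|$ yields $\langle \nabla \psi(\uh) - \nabla \psi(\vh), \uh - \vh\rangle \ge \sigma\|\uh - \vh\|^2$. Dropping the nonnegative middle term and rearranging gives
\begin{eqnarray*}
\tfrac{\sigma}{\eta}\|\uh-\vh\|^2 \le \langle \g_v - \g_u, \uh - \vh\rangle \le \|\g_u - \g_v\|_*\,\|\uh-\vh\|,
\end{eqnarray*}
where the last step is Cauchy--Schwarz for the dual pairing. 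Dividing by $\|\uh-\vh\|$ (trivial if zero) finishes the argument.

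The only mildly delicate point is the monotonicity of the subgradient selection: strictly speaking I should pick \emph{some} $\a_u \in \partial r(\uh)$ and $\a_v \in \partial r(\vh)$ making optimality hold, and then invoke $\langle \a_u - \a_v, \uh-\vh\rangle \ge 0$, which is a standard property of the subdifferential of any convex function. Apart from this bookkeeping the proof is a direct computation and I do not anticipate any real obstacle.
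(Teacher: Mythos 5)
Your proposal is correct and follows essentially the same route as the paper's own proof: first-order optimality conditions at $\uh$ and $\vh$, monotonicity of $\partial r$, $\sigma$-strong convexity of $\psi$ applied to $\langle \nabla\psi(\uh)-\nabla\psi(\vh), \uh-\vh\rangle$, and Cauchy--Schwarz, followed by division by $\|\uh-\vh\|$.
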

\begin{proof}
Firstly, using the definition of Bregman divergence and the optimality of $\uh$ we have
\begin{eqnarray*}
\mathbf{a}=\nabla\psi(\z)-\nabla\psi(\uh)-\eta \g_u \in \eta \lambda\partial r(\uh).
\end{eqnarray*}
Similarly, we also have
\begin{eqnarray*}
\mathbf{b}=\nabla\psi(\z)-\nabla\psi(\vh)-\eta  \g_v \in \eta \lambda\partial r(\vh).
\end{eqnarray*}
Since $\eta \lambda r(\w)$ is convex, its subdifferential is a monotone operator, so we have
\begin{eqnarray*}
0\le \langle\mathbf{a}-\mathbf{b}, \uh-\vh\rangle =  \langle\nabla\psi(\vh)-\nabla\psi(\uh)+\eta \g_v-\eta \g_u, \uh-\vh\rangle.
\end{eqnarray*}
Re-arranging the above inequality and using the $\sigma$-strongly convexity of $\psi$, we can get
\begin{eqnarray*}
\langle\eta \g_v-\eta \g_u, \uh-\vh\rangle \ge \langle\nabla\psi(\uh) - \nabla\psi(\vh), \uh-\vh\rangle\ge \sigma \|\uh-\vh\|^2.
\end{eqnarray*}
Using Cauchy-Schwartz inequality for the left-hand side results in
\begin{eqnarray*}
\eta\|\g_v-\g_u\|_*\|\uh-\vh\| \ge \sigma \|\uh-\vh\|^2.
\end{eqnarray*}
Dividing both side by $\sigma\|\uh-\vh\|$ concludes the proof.
\end{proof}

{
\bibliography{reference}
\bibliographystyle{unsrt}
}

\end{document}